\def\eqref#1{equation~\ref{#1}}
\def\1{\bm{1}}
\def\rvf{{\mathbf{f}}}
\def\rvg{{\mathbf{g}}}
\def\rvh{{\mathbf{h}}}
\def\rvq{{\mathbf{q}}}
\def\rvw{{\mathbf{w}}}
\def\rvx{{\mathbf{x}}}
\def\rmX{{\mathbf{X}}}
\def\rmZ{{\mathbf{Z}}}
\def\mA{{\bm{A}}}
\def\mH{{\bm{H}}}
\def\mQ{{\bm{Q}}}
\DeclareMathAlphabet{\mathsfit}{\encodingdefault}{\sfdefault}{m}{sl}
\SetMathAlphabet{\mathsfit}{bold}{\encodingdefault}{\sfdefault}{bx}{n}
\def\gH{{\mathcal{H}}}
\def\gN{{\mathcal{N}}}
\def\gP{{\mathcal{P}}}
\def\gW{{\mathcal{W}}}
\def\gX{{\mathcal{X}}}
\def\gZ{{\mathcal{Z}}}
\newcommand{\E}{\mathbb{E}}
\newcommand{\R}{\mathbb{R}}
\DeclareMathOperator*{\argmin}{arg\,min}
\newtheorem{theorem}{Theorem}
\newtheorem{lemma}{Lemma}
\newtheorem{proposition}{Proposition}
\newtheorem{definition}{Definition}
\newtheorem{assumption}{Assumption}
\newtheorem{remark}{Remark}
\title{Stability and Sharper Risk Bounds with Convergence Rate $\tilde{O}(1/n^2)$}
\author{
Bowei Zhu$^{1,2,3}$,
Shaojie Li$^{1,2,3,*}$,
Mingyang Yi$^{1,*}$,
Yong Liu$^{1,2,3,}$\thanks{Corresponding author.}\\
$^1$Gaoling School of Artificial Intelligence, Renmin University of China, Beijing, China \\
$^2$Beijing Key Laboratory of Research on Large Models and Intelligent Governance \\
$^3$Engineering Research Center of Next-Generation Intelligent Search and Recommendation, MOE \\
\{bowei.zhu, lishaojie95, yimingyang, liuyonggsai\}@ruc.edu.cn
}
\begin{document}

\maketitle

\begin{abstract}
  % The sharpest known high probability excess risk bounds are up to $\tilde{O}\left( 1/n \right)$ for empirical risk minimization and projected gradient descent via algorithmic stability (Klochkov \& Zhivotovskiy, 2021). In this paper, we show that high probability excess risk bounds of order up to $\tilde{O}\left( 1/n^2 \right)$ are possible. We discuss how high probability excess risk bounds reach $\tilde{O}\left( 1/n^2 \right)$ under strongly convexity, smoothness and Lipschitz continuity assumptions for empirical risk minimization, projected gradient descent and stochastic gradient descent. Besides, to the best of our knowledge, our high probability results on the generalization gap measured by gradients for nonconvex problems are also the sharpest.
  Prior work (Klochkov \& Zhivotovskiy, 2021) establishes at most $O\left(\log (n)/n\right)$ excess risk bounds via algorithmic stability for strongly-convex learners with high probability. We show that under the similar common assumptions — Polyak-Lojasiewicz condition, smoothness, and Lipschitz continous for losses — rates of $O\left(\log^2(n)/n^2\right)$ are at most achievable. To our knowledge, our analysis also provides the tightest high-probability bounds for gradient-based generalization gaps in nonconvex settings.
\end{abstract}

\section{Introduction}

	Algorithmic stability is a fundamental concept in learning theory~\citep{bousquet2002stability}, which can be traced back to the foundational works of \citet{vapnik1974theory} and has a deep connection with learnability \citep{rakhlin2005stability,shalev2010learnability,shalev2014understanding}. Roughly speaking, an algorithm is stable if a substitution of single example in the training dataset leads to a minor change of the output model. With such algorithmic stability, the generalization ability of model is guaranteed. Owing to the relationship, the algorithmic stability is recognized as a powerful tool to explore the generalization. 
	%the study of the relationship between stability and generalization enables us to theoretically understand and better control the behavior of the algorithm.
	\par
	In practice, researchers built the generalization bounds expressed in-expectation or in-high probability. 
	While providing in-expectation bounds is relatively straightforward, their high probability counterparts are more crucial for practical optimization algorithms~\citep{feldman2019high,bousquet2020sharper,klochkov2021stability}. Because we often train models single time in practice, and high probability bounds offer more informative insights. Therefore, we focus on improving high probability risk bounds through an exploration of algorithmic stability.
	
	Exploring the high-probability generalization gap bounded via algorithmic stability was firstly chased to \citet{bousquet2002stability}, and  was further developed by \citet{feldman2018generalization,feldman2019high}. Within these literature, the optimal high-probability bound of generalization error consists of the ``sampling error'' $O\left( \log(n)/\sqrt{n}\right)$ plus the algorithmic stability parameter for bounded loss function. Owing to the existence of sampling error, deriving the bound within the framework in \citep{bousquet2020sharper} can not be improved, since the lower bound of sampling error is proven to be $O(1/\sqrt{n})$. To resolve this, \citet{klochkov2021stability} propose to alternatively explore the \textbf{excess risk}, which directly reveals the performance of learned model, and can be decomposed into optimization and generalization errors. Without considering the sampling error, \citet{klochkov2021stability} improve the high-probability bound to excess risk of $O(\log{(n)}/n)$ plus algorithmic stability parameter, under a Bernstein type condition \citep{vershynin2018high}. Moreover, under strongly convexity condition, the algorithmic stability parameter of projected gradient descent (PGD) is $O\left(\log(n)/n\right)$ \citep{klochkov2021stability}. Following this, the algorithmic stability-based excess risk bounds were further explored by  \citet{yuan2023exponential,yuan2024l_2,yi2022characterization}, while their results are all weaker than $O(\log{(n)}/n)$. Thus, a natural question is:
	
	\emph{Can algorithmic stability-based technique provide high probability excess risk bounds better than $O(1/n)$?}   
	
	% No matter how stable the algorithm is, the high probability generalization bound will not be smaller than $O\left( L/\sqrt{n} \right)$. This is sampling error term scaling as $O\left(1/\sqrt{n}\right)$ that controls the generalization error \citep{klochkov2021stability}. 
	
	% A frequently used alternative to generalization bounds, that can avoid the sampling error, are the excess risk bounds. The excess risk of algorithm  $A$ with respect to the dataset $S$ is defined as $F(A(S)) - F(\rvw^*)$. This is crucial, as it can be decomposed into an optimization term and a generalization term~\citep{klochkov2021stability}. To establish a bound on the excess risk, it is necessary to take into account both the generalization error and the optimization error. Recently, \cite{klochkov2021stability} developed the results in \cite{bousquet2020sharper} and provided the best high probability excess risk bounds of order up to $O\left( \log n / n \right)$ for empirical risk minimization (ERM) and projected gradient descent (PGD) \text{algorithms} via algorithmic stability. Since then, numerous of papers, for example~\citep{yuan2023exponential,yuan2024l_2}, extended their results to various settings within their method. However, all the excess risk upper bounds derived from their method can not be smaller than $O\left( \log n / n \right)$.
	
	The main results of this paper answer this question positively. Roughly speaking, under proper regularity conditions, \emph{we establish the first high probability excess risk bounds that are dimension-free with the rate $O\left( \log^2(n) /n^2\right)$} for models obtained by empirical risk minimization (ERM), PGD and stochastic gradient descent (SGD). To do so, our core technique is generalizing the standard concept of algorithmic stability of loss function to its gradient, i.e., the ``uniform stability in gradients'' in Definition \ref{def:unif-stab}~\citep{lei2023stability}. With this, we can connect it with a sharper generalization bound, compared with the classical ones \citep{bousquet2020sharper}.

	Next, we outline our idea to the sharper bound to excess risk.
	%Our contributions can be summarized as follows:
	Notably, under some regularity conditions e.g. strongly convexity or Polyak-Lojasiewicz (PL) condition \cite{klochkov2021stability}, the loss function is upper bounded by the square of its gradient norm. Then, by invoking the algorithmic stability of gradient, we prove a \emph{generalization bound w.r.t. gradient of loss function}. With this, an upper bound to the gradient of excess risk is obtained, which naturally leads to the bound of excess risk, under strongly convexity of PL conditions. During the proof, our generalization bound w.r.t. gradient consists of  algorithmic stability coefficient, gradient of loss function on trained parameters, and terms in $O(1/n)$. Intuitively, the gradient related term on trained model tends to close zero. Thus, combining the derived generalization bound on gradient and analysis on optimization error induces our sharper bound  $O\left(\log^2(n)/n^{2}\right)$ on excess risk.     
	
	Finally, we summarize our contributions as follows, and compare our results with the previous ones in Table \ref{table:summary}.
	
	% which  Under this framework, we prove a high probability generalization bound of order $\|\nabla F_{S}(A(S))\|^{2} + O(1/n^{2}) + \beta^{2}$, where $F_{S}(A(S))$ is the empirical risk of model trained on training set $S$, and $\beta$ is the stability coefficient.    
	
	\begin{itemize}
		%    \item  We provide sharper high probability upper bounds for the generalization gap between the population risk and the empirical risk of gradients in general nonconvex settings. Currently, there is only one work~\citep{fan2024high} that investigates the high-probability generalization error bounds of gradients in nonconvex problems using algorithmic stability, which establishes an upper bound of $O\left( M/\sqrt{n} \right) $, where $M$ denotes the maximum gradient of loss functions value. In contrast, this paper provides tighter results. Our coefficient before $ 1/\sqrt{n} $ depends on the variance of the gradients of the loss functions under the model optimized by algorithm $A$. As is well known, optimization algorithms typically yield parameters that approach the optimal solution, which significantly reduces the term compared to the maximum gradient. Our results can also provide a tool for a more detailed exploration of the generalization error of specific algorithms in nonconvex problems.
		\item Under proper regularity conditions, we sharpen the algorithmic stability-based generalization bound i.e., from $\tilde{O}\left(1/ n\right)$ \citep{klochkov2021stability} to $\tilde{O}\left(1/n^{2}\right)$ via the stability of gradients. To the best of our knowledge, this is optimal dimension-free results based on algorithmic stability up till~now.
		\item With our framework, we derive the first dimension-free high probability excess risk bounds of $O\left( \log^2(n) /n^2 \right) $ for ERM, PGD, and SGD, addressing an open problem posed in~\citet{xu2024towards}. For SGD, we have also shown that under the same assumptions, an equally tight bound can be achieved with fewer iterations.
		%    While they achieved bounds of  $O\left(1/n^2\right)$ on ERM and GD algorithms through uniform convergence, their approach requires that the sample size satisfies $ n = \Omega(d) $. In contrast, we successfully obtain $O\left(1/n^2\right)$ bounds that do not depend on the dimensionality using algorithmic stability.
	\end{itemize}
	\begin{table*}[ht]
		\caption{Summary of high probability excess risk bounds. All conclusions herein assume Lipschitz continuity, and all SGD algorithms presuppose bounded variance of the gradients; therefore, these two assumptions are omitted in the table. Abbreviations: uniform convergence $\rightarrow$ UC, algorithmic stability $\rightarrow$ AS, strongly convex $\rightarrow$ SC, low noise~\cite{zhang2017empirical} $\rightarrow$ LN, Polyak-Lojasiewicz condition $\rightarrow$ PL.}
		\centering
		\renewcommand\arraystretch{1.4}
		{\begin{tabular}{|c|c|c|c|c|c|c|}
			
			\hline
			Reference &  \text{Algorithm} & Method & Assumptions & Iterations  & Sample Size & Bounds  \\ \hline
			\cite{zhang2017empirical} & ERM & UC &  Smooth, SC, LN & - & $ \Omega \left( \frac{\gamma^2 d}{\mu^2} \right) $ & $ \tilde{O}\left(\frac{1}{n^2}\right) $  \\ \hline
			\multirow{2}{*}{ \cite{xu2024towards} }
			& ERM & UC  & Smooth, PL, LN & - &  $ \Omega \left( \frac{\gamma^2 d}{\mu^2} \right) $ &  $ \tilde{O} \left( \frac{1}{n^2} \right) $  \\ \cline{2-7}
			& PGD & UC  & Smooth, PL, LN & $T \asymp \log(n)$  &  $ \Omega \left( \frac{\gamma^2 d}{\mu^2} \right) $  &  $\tilde{O} \left( \frac{1}{n^2} \right)$   \\ \hline
			\multirow{2}{*}{ \cite{li2021improved} }  & \multirow{2}{*}{ SGD }  & UC  & Smooth, PL, LN & $T \asymp n^2$  &  $ \Omega \left( \frac{\gamma^2 d}{\mu^2} \right) $  &  $\tilde{O} \left( \frac{1}{n^2} \right)$  \\ \cline{3-7}
			&  & AS  & Smooth, SC & $T \asymp n^2$  & -  &  $\tilde{O} \left( \frac{1}{n} \right)$  \\ \cline{2-5} \hline
			\multirow{2}{*}{\cite{klochkov2021stability} } 
			& ERM & AS  & SC & -  &  -  &  $ \tilde{O} \left( \frac{1}{n} \right) $  \\ \cline{2-7}
			& PGD & AS  & Smooth,  SC & $T \asymp \log(n)$  &  -  &  $\tilde{O} \left( \frac{1}{n} \right)$   \\ \hline
			\multirow{5}{*}{This work}
			& ERM & AS  & Smooth, PL, LN & -  & $\Omega \left( \frac{\gamma^2}{\mu^2} \right) $ &  $\tilde{O} \left( \frac{1}{n^2} \right)$   \\ \cline{2-7}
			& PGD & AS  & Smooth, PL, LN & $T \asymp \log(n)$  &  $\Omega \left( \frac{\gamma^2}{\mu^2} \right) $  &  $\tilde{O} \left( \frac{1}{n^2} \right)$   \\ \cline{2-7}
            & PGD & AS  & Smooth, PL, LN & $T \asymp \log(n)$  &  $\Omega \left( \frac{\gamma^2}{\mu^2} \right) $  &  $\tilde{O} \left( \frac{1}{n^2} \right)$   \\ \cline{2-7}
			& SGD & AS  & Smooth, PL, LN & $T \asymp n^2$ &  $\Omega \left( \frac{\gamma^2}{\mu^2} \right) $  &  $\tilde{O} \left( \frac{1}{n^2} \right)$  \\ \cline{2-7}
            & SGD & AS  & Smooth, PL & $T \asymp n$ &  $\Omega \left( \frac{\gamma^2}{\mu^2} \right) $  &  $\tilde{O} \left( \frac{1}{n} \right)$  \\ \cline{2-7}
			\hline
		\end{tabular}}
		\label{table:summary}
	\end{table*} 
%	Let $\rvw^* \in \arg\min_{\rvw\in\gW}F(\rvw)$ be the model with the minimal population risk in $\gW$ and $\rvw^*(S) \in \arg\min_{\rvw \in \gW}F_S(\rvw)$ be the model with the minimal empirical risk w.r.t. dataset $S$.  Let $\| \cdot \|_2$ denote the Euclidean norm and $\nabla g(\rvw)$ denote a subgradient of $g$ at $\rvw$. We denote $S = \{ z_1, \ldots, z_n \}$ to be a set of independent random variables each taking values in $\mathcal{Z}$ and 
	\paragraph{Notations.} In this paper, we consider a set of training data independent and identically distributed (i.i.d.) observations $S=\{z_1,\ldots,z_n\}$ sampled from  a probability distribution $\rho$ defined on $\gZ$, $S' = \{ z_1', \ldots, z_n' \}$ be its independent copy. For any $i \in [n]$, define $S^{(i)} = \{ z_i, \ldots, z_{i-1}, z_i', z_{i+1}, \ldots, z_n \}$ by replacing the $i$-th sample in $S$ with another i.i.d. sample $z_i'$. Based on the training set $S$, our goal is to build a model with parameter $\rvw\in \gW \subset \R^d$. We denote the performance of model with parameters $\rvw$ evaluated on $z$ by loss function $f(\rvw;z)$, where $f:\gW\times\gZ\mapsto\R_+$. Then the population risk and the empirical risk of $\rvw \in \gW$, are respectively denoted as
	\begin{small}
    \begin{align*}
		F(\rvw):=\E_{z\sim\rho} \left[f(\rvw;z)\right], 
		\quad 
		F_S(\rvw):=\frac{1}{n}\sum_{i=1}^{n}f(\rvw;z_i).
	\end{align*}	    
	\end{small}
	We respectively denote $\rvw^* \in \arg\min_{\rvw\in\gW}F(\rvw)$, and $\rvw_{S}^{*}\in\arg\min_{\rvw\in\gW}F_{S}(\rvw)$, and learned model parameters $A(S)\in\gW$ be the output of a (possibly randomized) algorithm $A$ on the training set $S$. In general, we will care the generalization error of model $A(S)$ defined as the gap between population risk and empirical risk i.e., $F(A(S)) - F_S(A(S))$. Besides, the excess risk $F(A(S)) - F(\rvw^{*})$ decides the performance of model is more important in practice. Finally, the notation $\tilde{O}(\cdot)$ hides logarithmic factors and we denote  $L_p$-norm of real value $Y$ as  $\|Y\|_p := (\E[|Y|^p])^{\frac{1}{p}}$. Similarly, let $\|\cdot\|$ denote the norm in a Hilbert space $\gH$. For a random variable $X$ taking values in a Hilbert space, its $L_p$-norm is defined by $\|\| \rmX \| \|_p :=\left(\E\left[\| \rmX \|^p\right]\right)^{\frac{1}{p}}$.

	% This paper is organized as follows. The related work is reviewed in Section \ref{sec:related-work}. In Section \ref{sec:stability-and-generalization}, we present our main results for stability and generalization, including sharper generalization bounds in gradients and sharper excess risk bounds using algorithmic stability. We give applications to ERM, PGD and SGD and gain the first dimension-free $O(1/n^2)$ excess bounds in Section \ref{sec:application}. The conclusion is given in Section \ref{sec:conclusion}. All the proofs and additional lemmata are deferred to the Appendix.

	\section{Related Work}
	\label{sec:related-work}
	
	% Generalization analysis primarily comprises two approaches: algorithmic stability and uniform convergence. This paper primarily employs the algorithmic stability approach. Since the uniform convergence method has already established an excess risk upper bound of $ O\left( 1/n^2 \right) $, although it requires the sample size to satisfy $ n = \Omega(d) $. We briefly overview relevant works from both approaches here.

	Algorithmic stability is a classical approach in generalization analysis, which can be traced back to the foundational works of~\citet{vapnik1974theory}. It gave the generalization bound by analyzing the sensitivity of a particular learning \text{algorithm} when changing one data point in the dataset. Modern method of stability analysis was established by~\citet{bousquet2002stability}, where they presented the concept of uniform stability. 
	
	Since then, a lot of works based on uniform stability have emerged. Some of the existing results built their bound in expectation \citep{hardt2016train,yi2022characterization,charles2018stability,deng2021toward}, while it is weaker than the high probability bound as we discussed. To resolve this,  \citet{bousquet2002stability,elisseeff2005stability,feldman2018generalization,feldman2019high,bousquet2020sharper,klochkov2021stability,yuan2023exponential,yuan2024l_2,fan2024high} considered high probability bounds. Besides that, some other generalized algorithmic stability measures are further developed to study the generalization. For instances,  
	uniform argument stability~\citep{liu2017algorithmic,bassily2020stability}, 
	uniform stability in gradients~\citep{lei2023stability,fan2024high},
	on average stability~\citep{shalev2010learnability,kuzborskij2018data},
	hypothesis stability~\citep{bousquet2002stability,charles2018stability},
	hypothesis set stability~\citep{foster2019hypothesis},
	pointwise uniform stability~\citep{fan2024high},
	PAC-Bayesian stability~\citep{li2020generalization},
	locally elastic stability~\citep{deng2021toward},
	and
	collective stability~\citep{london2016stability}. However, the optimal generalization bound among these results is $O(\log(n)/n)$, which is weaker than our $O\left(\log^2(n)/n^{2}\right)$ in this paper.  
	\par
	Notably, the $O\left(\log^2(n)/n^{2}\right)$ generalization bound under PL condition or strongly convexity has also been developed by uniform convergence technique as in \citet{zhang2017empirical,xu2024towards}. However, in contrast to ours, their results are restricted to the number of samples $n = \Omega(d)$.

	\section{Stability and Generalization}
	\label{sec:stability-and-generalization}
	In this section, we first introduce the stability of gradient, and connect it to the generalization bound w.r.t. gradient. Following this, under proper regularity condition, i.e., PL condition, we can extrapolate the generalization of gradient to the loss function. With the desired generalization bound, by combining it with the optimization error, we can further bound the excess risk. 
	% \par
	%Concretely, we first prove a generalization bound  improvement of existing result in~\cite{fan2024high}. However, this is still not enough. To achieve the upper bound for the excess risk of $ O(1/n^2) $, we further present a tighter generalization bound w.r.t. gradient (Theorem \ref{theorem:generalization-via-stability-in-gradients-sharper}). Finally, we establish a tighter relationship between algorithmic stability w.r.t. gradient and the excess risk bound.

	% In this section, we develop a novel concentration inequality for sums of vector-valued functions. With this, we further prove the algorithmic stability w.r.t. gradient, owing to the connection between it and generalization.  

	Firstly, let us introduce some basic definitions here. Before presenting our main results (Theorem  \ref{theorem:generalization-via-stability} and Theorem \ref{theorem:generalization-via-stability-in-gradients-sharper}) i.e., the generalization bound w.r.t. gradient, we emphasis that they do not require the smoothness assumption and PL condition. This indicates their potential applications within the nonconvex problems as well.
	\begin{definition}\label{def:ass}
		Let $f: \gW \mapsto \R$. Let $M, \gamma, \mu > 0$.
		\begin{itemize}
			\item We say $f$ is $M$-Lipschitz if
            \begin{small}
			\begin{align*}
				|f(\rvw) - f(\rvw')| \leq M \| \rvw -\rvw' \|_2, \quad \forall \rvw, \rvw' \in \gW.            
			\end{align*}                
            \end{small}
			\item We say $f$ is $\gamma$-smooth if 
            \begin{small}
			\begin{align*}
				\| \nabla f(\rvw) - \nabla f(\rvw') \|_2 \leq \gamma \| \rvw - \rvw' \|_2,  \quad \forall \rvw, \rvw' \in \gW.
			\end{align*}                
            \end{small}
			\item Let $f^* = \min_{\rvw \in \gW} f(\rvw)$. We say $f$ satisfies the Polyak-Lojasiewicz (PL) condition with parameter $\mu > 0$ on $\gW$ if
			\begin{small}
            \begin{align*}
				f(\rvw) - f^* \leq \frac{1}{2\mu} \| \nabla f(\rvw) \|_2^2, \quad \forall \rvw \in \gW.            
			\end{align*}			    
			\end{small}
		\end{itemize}
	\end{definition}

	\subsection{Sharper Generalization Bounds in Gradients}
	\label{subsec:sharper-generalization-bounds-in-gradients}

	Let us start with the notation of algorithmic stability w.r.t. gradient. 
	
	\begin{definition}[Uniform Stability in Gradients\label{def:unif-stab}]
		Let $A$ be a randomized algorithm. We say $A$ is $\beta$-uniformly-stable \emph{in gradients} if for all neighboring datasets $S,S^{(i)}$, we have
        \begin{small}
		\begin{equation}\label{grad-stab}
			\sup_z \E_A \left[ \left\|\nabla f(A(S);z)-\nabla f(A(S^{(i)});z) \right\|_2^2 \right] \leq \beta^2.
		\end{equation}            
        \end{small}
	\end{definition}

	\begin{remark} \rm{}
		The notation of gradient-based stability was firstly introduced by~\citet{lei2023stability,fan2024high}~to describe the generalization performance for nonconvex problems. Because in this regime, the exploration usually focuses on the first-order critical condition i.e., $\|\nabla F(A(S))\|_{2}$. Then, with the stability on gradient, the ``generalization of gradient'' can be expressed $\| \nabla F(A(S)) - \nabla F_S(A(S)) \|_2$. Thus, combining it with a triangle inequality and standard results on gradient of empirical risk $\|\nabla F_{S}(A(S))\|_{2}$ characterizes the desired $\|\nabla F(A(S))\|_{2}$. Further more, under a small $\|\nabla F(A(S))\|_{2}$, the PL condition (see Definition~\ref{def:ass}) leads to the desired bound on excess risk. This explains why the gradient-based stability is the core idea of this paper.      
		
		% 	In nonconvex problems, we can only find a local minimizer by optimization algorithms which may be far away from the global minimizer. Instead, the convergence of $\| \nabla F_S(A(S)) \|_2$ was often studied in the optimization community \citep{ghadimi2013stochastic,foster2018uniform}. Since the population risk of gradients $\| \nabla F(A(S)) \|_2$ can be decomposed as the convergence of $\| \nabla F_S(A(S)) \|_2$ and the generalization gap $\| \nabla F(A(S)) - \nabla F_S(A(S)) \|_2$, the generalization analysis of $\| \nabla F(A(S)) - \nabla F_S(A(S)) \|_2$ is not only useful in the derivations presented in this paper, but it is also crucial in nonconvex problems. This generalization gap can be achieved by uniform stability in gradients.
	\end{remark}

	With the uniform stability in gradients, we prove that it implies the generalization of gradient in the following theorem.

	\begin{theorem}[Generalization via Stability in Gradients~\cite{fan2024high}]
		\label{theorem:generalization-via-stability}
		Assume for any $z$, $f(\cdot, z)$ is $M$-Lipschitz. If $A$ is $\beta$-uniformly-stable in gradients, then for any $\delta \in (0,1)$, the following inequality holds with probability at least $1-\delta$
		\begin{small}
        \begin{align*}
		 	\E_A \left[ \| \nabla F(A(S)) - \nabla F_S(A(S)) \|_2 \right] 
			\leq & 2  \beta +  \frac{4M \left(1+e  \sqrt{2\log{(e/\delta)}} \right) }{\sqrt{n}} \\
			& +  8 \times 2^{\frac{1}{4}} (\sqrt{2} + 1) \sqrt{e} \beta \left\lceil \log_2 (n) \right\rceil\log{(e/\delta)}.
		\end{align*}		    
		\end{small}
	\end{theorem}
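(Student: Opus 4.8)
The plan is to instantiate the moment bound of Theorem~\ref{theorem:moment-bound-for-sums} with a carefully chosen family of vector-valued functions, obtain an $L_p$ bound on $\|\nabla F(A(S)) - \nabla F_S(A(S))\|_2$, and then convert this moment bound into a high-probability statement via a Markov/Chernoff argument optimized over $p$. The natural choice is $\rvg_i(S) := \E_{S'}\big[ \nabla f(A(S^{(i)}); z_i') - \nabla f(A(S); z_i') \big] + \big(\nabla f(A(S);z_i) - \E_{z}[\nabla f(A(S);z)]\big)$, or more precisely the standard symmetrization-style construction used in \cite{klochkov2021stability,fan2024high}: one decomposes $\nabla F(A(S)) - \nabla F_S(A(S))$ as an average $\frac1n\sum_i \rvg_i(S)$ plus a remainder controlled directly by $\beta$. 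The key design goal, as flagged in Remark~\ref{remark:1}, is to arrange the centering so that $\E[\rvg_i(\rmZ)\mid Z_i] = 0$, i.e. $G = 0$, which kills the first ($\sqrt{n}G$) term of Theorem~\ref{theorem:moment-bound-for-sums}; the bounded-difference constant of each $\rvg_i$ is then $O(\beta)$ by $\beta$-uniform-stability-in-gradients (Definition~\ref{def:unif-stab}), with the replacement of a coordinate $z_j$, $j\ne i$, changing $A(S)$ and hence each gradient by at most $\beta$.

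Concretely, I would (i) write $\nabla F(A(S)) - \nabla F_S(A(S)) = \frac1n\sum_{i=1}^n \big(\E_z[\nabla f(A(S);z)] - \nabla f(A(S);z_i)\big)$ and split each summand into a part that is a martingale-difference-type function satisfying the three hypotheses of Theorem~\ref{theorem:moment-bound-for-sums} (after conditioning/centering), plus an error term bounded by the stability parameter; (ii) for the error term, use that replacing $z_i$ by $z_i'$ moves $A(S)$ to $A(S^{(i)})$ and therefore, via the Lipschitz and stability-in-gradients assumptions, contributes at most a small multiple of $\beta$ — this is the source of the leading $2\beta$ term; (iii) apply Theorem~\ref{theorem:moment-bound-for-sums} with $G=0$ to get $\big\|\,\|\sum_i \rvg_i\|\,\big\|_p \le C\, n\beta \lceil\log_2 n\rceil \sqrt{p}$, hence $\big\|\,\|\nabla F(A(S)) - \nabla F_S(A(S))\|\,\big\|_p \le 2\beta + \tfrac{c M}{\sqrt n}\sqrt{p} + C'\beta\lceil\log_2 n\rceil\sqrt{p}$, where the $M/\sqrt n$ term comes from bounding the variance-type quantity $\|\,\|\rvg_i\|\,\|_2$ by the $M$-Lipschitz constant; (iv) invoke the standard lemma that an $L_p$ bound of the form $\|X\|_p \le a + b\sqrt{p}$ for all $p\ge 2$ implies $X \le a + b\sqrt{2\log(e/\delta)}\cdot e$ (or a similar explicit constant) with probability $\ge 1-\delta$, choosing $p \asymp \log(1/\delta)$. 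Tracking constants carefully through (iii)–(iv), with the $e$ and $2^{1/4}(\sqrt2+1)\sqrt e$ factors coming directly from the constants in Theorem~\ref{theorem:moment-bound-for-sums}, yields exactly the stated bound.

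The main obstacle I anticipate is step (i): designing the $\rvg_i$ so that \emph{simultaneously} $\E[\rvg_i \mid Z_{[n]\setminus\{i\}}] = 0$, $\E[\rvg_i\mid Z_i]=0$ (the $G=0$ condition), the bounded-difference constant is genuinely $O(\beta)$ rather than $O(M)$, and the leftover error between $\frac1n\sum_i\rvg_i$ and the true generalization gap in gradients is only $O(\beta)$. This requires the same telescoping/coupling trick as in \cite{klochkov2021stability} but carried out for Hilbert-space-valued gradients and with the centering chosen to exploit that $\E_{z_i}[\nabla f(A(S^{(i)});z_i')]$ does not depend on $z_i$. Once that construction is in place, everything else is bookkeeping: the concentration is handed to Theorem~\ref{theorem:moment-bound-for-sums}, and the $p\mapsto\delta$ conversion is routine.
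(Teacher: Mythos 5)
Your high-level architecture is right — decompose the gradient gap, apply Theorem~\ref{theorem:moment-bound-for-sums}, convert moments to tails — and that is indeed what the paper does. But the details of your plan are off in ways that would prevent you from reproducing the stated bound.

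\textbf{The $G=0$ claim is misplaced.} You say the key is to arrange $\E[\rvg_i \mid Z_i] = 0$ so that $G=0$ and the $\sqrt{n}G$ term disappears, citing Remark~\ref{remark:1}. That construction is what the paper does for Theorem~\ref{theorem:generalization-via-stability-in-gradients-sharper}, not for this theorem. Here the paper simply takes $\rvg_i(S) = \E_{z_i'}[\E_Z[\nabla f(A(S^{(i)}),Z)] - \nabla f(A(S^{(i)}),z_i)]$, verifies $\E_{z_i}[\rvg_i] = 0$ and $\|\rvg_i\|_2 \leq 2M$, and applies Theorem~\ref{theorem:moment-bound-for-sums} with $G = 2M$ (not $0$) and bounded-difference constant $2\beta$. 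The $4M(1+e\sqrt{2\log(e/\delta)})/\sqrt{n}$ term in the conclusion is precisely the $G$-term — it is not a "variance-type quantity" recovered by a side argument. Your step (iii) is therefore internally inconsistent: you cannot simultaneously set $G=0$ and get an $M/\sqrt{n}$ term out of Theorem~\ref{theorem:moment-bound-for-sums}; the $M$ has to come in as $G$.

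\textbf{The moment bound has the wrong shape in your step (iv).} You state that after applying Theorem~\ref{theorem:moment-bound-for-sums} the $L_p$ norm scales as $a + b\sqrt{p}$ and plan to convert this to a tail with a single $\sqrt{\log(e/\delta)}$ factor. But Theorem~\ref{theorem:moment-bound-for-sums}'s second term is $\asymp \sqrt{p/e}\,(\sqrt{2p}+1)\,n\beta\lceil\log_2 n\rceil$, which grows like $p$, not $\sqrt p$. The appropriate conversion is Lemma~\ref{lemma:equivalence-of-tail-and-moment}, which handles $\|X\|_p \leq \sqrt{p}\,a + p\,b$ and produces \emph{both} a $\sqrt{\log(e/\delta)}$ and a $\log(e/\delta)$ contribution. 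That $\log(e/\delta)$ (not $\sqrt{\log(e/\delta)}$) is exactly what multiplies the $\beta\lceil\log_2 n\rceil$ term in the statement. Under your proposed $a+b\sqrt p$ shape you would only get $\sqrt{\log(e/\delta)}$ there, which does not match the theorem.

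\textbf{Minor:} The $2\beta$ leading term comes from bounding the two non-centered pieces of the three-way decomposition $n(\nabla F(A(S)) - \nabla F_S(A(S))) = \sum_i(\cdots) + \sum_i \rvg_i(S) + \sum_i(\cdots)$ directly by the stability parameter, each contributing $n\beta$; your intuition about this was essentially right, but the specific $\rvg_i$ you first wrote down is not the one used and does not obviously satisfy $\E[\rvg_i \mid Z_{[n]\setminus\{i\}}] = 0$.
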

	
	\begin{remark} \rm{}
		% Theorem \ref{theorem:generalization-via-stability} is a direct application via Lemma \ref{theorem:moment-bound-for-sums} where we denote the vector functions $\rvg_i(S) = \E_{z_i'} \left[ \E_Z \left[ \nabla f(A(S^{(i)}),Z)\right] - \nabla f(A(S^{(i)}),z_i)\right]$ and find that $\rvg_i(S)$ satisfies all the assumptions in Lemma \ref{theorem:moment-bound-for-sums}. As a comparison, Theorem 3 in
		Notably, our Theorem \ref{theorem:generalization-via-stability} is similar to Theorem 3 in \cite{fan2024high}, while ours has a constant-level improvement owing to a sharper concentration inequality i.e., Lemma \ref{theorem:moment-bound-for-sums} in Appendix. However, the generalization bounds in both of the aforementioned theorems have dependence  $O\left(M/\sqrt{n}\right)$, which leads to a $O(M^{2}/n)$ generalization bound of loss function, and it is weaker than our desired result. Thus, we derive the following sharper generalization bound of gradients under same assumptions.
	\end{remark}
	
	% lei
	% \begin{theorem}[Generalization via Stability in Gradients]
		% Assume for any $S$ and any $z$, $\|\nabla f(A(S);z) \leq M \|$. If $A$ is $\beta$-uniformly-stable in gradients, then the following inequality holds with probability at least $1-\delta$
		%     \begin{align*}
			%         \left\| \nabla F(A(S)) - \nabla F_S(A(S)) \right\| 
			%         \leq 
			%         \frac{4e(\sqrt{2}+1)M \log^{\frac{1}{2}}(1/\delta)}{\sqrt{n}} + 4e(\sqrt{2}+1) \left\lceil \log_2 n \right\rceil \log(1/\delta)\beta + 2\beta.
			%     \end{align*}
		% \end{theorem}

	\begin{theorem}[Sharper Generalization via Stability in Gradients]
		\label{theorem:generalization-via-stability-in-gradients-sharper}
		Assume for any $z$, $f(\cdot, z)$ is $M$-Lipschitz. If $A$ is $\beta$-uniformly-stable in gradients, then for any $\delta \in (0,1)$, the following inequality holds with probability at least $1-\delta$
		\begin{small}
        \begin{align*}
			& \E_A \left[ \| \nabla F(A(S)) - \nabla F_S(A(S)) \|_2 \right]  \\
			\leq & \sqrt{\frac{4 
					\E_{Z,A} \left[ \| \nabla f(A(S);Z) \|_2^2 \right] 
					\log{(6/\delta)} }{n}} 
			 + \sqrt{\frac{ 
					\left( \frac{1}{2} \beta^2 + 32 n\beta^2 \log(3/\delta)\right)
					\log{(6/\delta)} }{n}} \\
			& \quad + \frac{ M \log(6/\delta)}{n}  
			+ 16 \times 2^{\frac{3}{4}} \sqrt{e}  \beta \left\lceil \log_2 (n) \right\rceil\log{(3e/\delta)} 
			  + 32 \sqrt{e}  \beta \left\lceil \log_2 (n) \right\rceil\sqrt{\log{(3e/\delta)}}.
		\end{align*}		    
		\end{small}
	\end{theorem}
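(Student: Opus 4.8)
The plan is to write the gradient generalization gap as an average of per-sample terms, use stability to \emph{decouple} the learned model from the sample point at which the gradient is evaluated, and then control the resulting decoupled sum by a Bernstein-type concentration whose variance proxy is the \emph{population second moment of the gradients at the learned model} (rather than the worst-case $M^2$, which is what produces the coarser bound of Theorem~\ref{theorem:generalization-via-stability}); the leftover model-transfer errors are then bounded either crudely by $\beta$ or, for the parts that carry a sum structure, by Theorem~\ref{theorem:moment-bound-for-sums}.

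\textbf{Step 1 (decoupling).} Starting from
\[
\nabla F(A(S)) - \nabla F_S(A(S)) = \frac1n\sum_{i=1}^n\big(\nabla F(A(S)) - \nabla f(A(S);z_i)\big),
\]
I would replace, inside the $i$-th summand, the model $A(S)$ by $A(S^{(i)})$, which is independent of $z_i$. Using $\beta$-uniform stability in gradients (Definition~\ref{def:unif-stab}) together with $\nabla F(\rvw)=\E_Z[\nabla f(\rvw;Z)]$, this substitution costs at most $2\beta$ per summand, so
\[
\nabla F(A(S)) - \nabla F_S(A(S)) = \frac1n\sum_{i=1}^n\xi_i + R,\qquad \xi_i:=\E_Z[\nabla f(A(S^{(i)});Z)] - \nabla f(A(S^{(i)});z_i),
\]
with $\|R\|_2\le 2\beta$; this crude bound is in fact dominated by the $\sqrt{32n\beta^2\log(3/\delta)\log(6/\delta)/n}$ contribution of the claim, so $R$ can simply be absorbed.

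\textbf{Step 2 (the decoupled sum).} The point of writing $\xi_i$ this way is that, conditionally on $S^{(i)}$, the variable $\xi_i$ is centred (since $z_i$ is an independent $\rho$-sample and $\E_{z_i}[\nabla f(A(S^{(i)});z_i)\mid S^{(i)}]=\E_Z[\nabla f(A(S^{(i)});Z)]$), has norm governed by $\|\nabla f(A(S^{(i)});z_i)\|_2\le M$, and has conditional second moment
\[
\E\big[\|\xi_i\|_2^2\ \big|\ S^{(i)}\big]\ \le\ \E_Z\big[\|\nabla f(A(S^{(i)});Z)\|_2^2\big]\ \le\ 2\,\E_Z\big[\|\nabla f(A(S);Z)\|_2^2\big]+2\beta^2,
\]
where the last step uses $\|\nabla f(A(S^{(i)});z)\|_2\le\|\nabla f(A(S);z)\|_2+\beta$ and $(a+b)^2\le 2a^2+2b^2$. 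Feeding these three facts into a Bernstein-type inequality for sums of Hilbert-space--valued variables — which I would obtain from the same $p$-moment / Marcinkiewicz--Zygmund machinery that underlies Theorem~\ref{theorem:moment-bound-for-sums} — yields the leading term $\sqrt{4\E_Z[\|\nabla f(A(S);Z)\|_2^2]\log(6/\delta)/n}$, the Bernstein ``$M$'' term $M\log(6/\delta)/n$, and a $\beta$-dependent correction of the size in the second summand of the claim; the residual cross-dependence among the $\xi_i$ is where Theorem~\ref{theorem:moment-bound-for-sums} is invoked, after centring the corresponding vector functions by their conditional expectation given $z_i$ so that the hypothesis $\|\E[\rvg_i\mid Z_i]\|\le G$ holds with $G=0$ (resp. $G\le\beta$ for the piece that cannot be fully centred), which produces the $\beta\lceil\log_2 n\rceil\log(3e/\delta)$ and $\beta\lceil\log_2 n\rceil\sqrt{\log(3e/\delta)}$ terms.

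\textbf{Step 3 (assembling).} Each $p$-moment bound above is turned into a tail bound by Markov's inequality with $p\asymp\log(1/\delta)$, and the (at most three) resulting high-probability events are combined by a union bound, which is the origin of the split confidences $\log(6/\delta)$, $\log(3/\delta)$, $\log(3e/\delta)$. The main obstacle I anticipate is Step~2: the variance proxy $\E_Z[\|\nabla f(A(S);Z)\|_2^2]$ is itself a random function of $S$, so a textbook Bernstein inequality — which needs a deterministic variance bound — does not apply directly, and replacing it by $M^2$ would discard exactly the sharpening we want. Handling this cleanly requires conditioning on $S^{(i)}$ and tracking the per-coordinate contributions (or an empirical/self-bounding Bernstein argument) while ensuring no $M/\sqrt n$ term re-enters. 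The remaining work — the triangle-inequality bookkeeping for the model transfers, verifying the hypotheses of Theorem~\ref{theorem:moment-bound-for-sums} for the constructed $\rvg_i$, and the constant-chasing in the moment-to-tail conversion — is routine.
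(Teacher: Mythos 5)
Your decomposition tracks the paper's proof closely: decouple via $S^{(i)}$ (the paper's $\rvh_i(S)=\E_{z_i'}\bigl[\E_Z[\nabla f(A(S^{(i)});Z)]-\nabla f(A(S^{(i)});z_i)\bigr]$, which is your $\xi_i$ with the ghost variable $z_i'$ integrated out so $\rvh_i$ depends on $S$ alone), centre by $\rvq_i=\rvh_i-\E[\rvh_i\mid z_i]$ so that Theorem~\ref{theorem:moment-bound-for-sums} applies with $G=0$ and the $O(M\sqrt{n})$ term disappears, and hit the leftover i.i.d.\ sum $\sum_i\E[\rvh_i\mid z_i]$ (a sum of functions of $z_i$ only) with vector Bernstein. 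The minor inaccuracy is your parenthetical ``$G\le\beta$ for the piece that cannot be fully centred'': in the paper there is no such piece — Theorem~\ref{theorem:moment-bound-for-sums} is applied only to $\sum_i\rvq_i$ with $G=0$, and the i.i.d.\ residual is handled entirely by Bernstein.

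The one genuine gap is exactly the one you flag: Bernstein gives a bound in terms of the \emph{deterministic} variance $\E_{z_i}\bigl\|\E_{S'}\nabla f(A(S');z_i)\bigr\|_2^2\le\E_S\E_Z\|\nabla f(A(S);Z)\|_2^2$ (the last step by Jensen), not the empirical quantity $\E_Z\|\nabla f(A(S);Z)\|_2^2$ appearing in the statement, and one cannot simply ``condition on $S^{(i)}$'' because the $S^{(i)}$ are different for different $i$. The paper fills this using Lemma~\ref{lemma:self-bounded} (the Klochkov--Zhivotovskiy concentration for weakly self-bounded functions): define $p(S):=\E_Z[\|\nabla f(A(S);Z)\|_2^2]$ and $p_i:=\sup_{z_i}p$, and show $\sum_i(p_i-p)^2\le 8n\beta^2 p+2n\beta^4$, using the factorization $\|a\|^2-\|b\|^2=(\|a\|-\|b\|)(\|a\|+\|b\|)$ and $\bigl|\,\|\nabla f(A(S^{(i)});z)\|-\|\nabla f(A(S);z)\|\,\bigr|\le\beta$. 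Then $p$ is $(8n\beta^2,2n\beta^4)$-weakly self-bounded, and the lemma yields $\E_S[p]\le 2p+\tfrac14\beta^2+16n\beta^2\log(3/\delta)$ with probability $\ge 1-\delta/3$; substituting this into the Bernstein term and splitting the square root gives the first two summands and the $M\log(6/\delta)/n$ term of the claimed bound. This is the ``empirical/self-bounding Bernstein argument'' you anticipated, made concrete — once you add it, your sketch matches the paper's proof.
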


	\begin{remark} \rm{}
		
		We begin by comparing the generalization bound of gradient in the proposed Theorem \ref{theorem:generalization-via-stability-in-gradients-sharper} with Theorem \ref{theorem:generalization-via-stability}. The critical difference is that constants in $1/\sqrt{n}$ is improved from $O\left(M \sqrt{\log{(e/\delta)}} \right)$ to $ O \left( \sqrt{\E_{Z} \left[ \| \nabla f(A(S);Z) \|_2^2 \right] \log{(1/\delta)}} + \beta \log(1/\delta) \right)$, since $M$ is the maximal gradient norm. Notably, the proved dependence $\E_{Z} \left[\|\nabla f(A(S); Z) \|_2^{2} \right]$ is interpreted as the squared gradient norm of the loss functions on test data, under the optimized model parameters $A(S)$. Thus, it is supposed to be small, compared with constant $M$, since the  optimization algorithms often provide parameters approaching the optimal solution. 
		
		Moreover, our bound is not restricted to any specific algorithm and without dependence on the number of optimization iterations. Clearly, this is an improvement to the results constructed under a specific algorithm, e.g., the $O(T/n)$ generalization bound under SGD in \citet{bassily2020stability,lei2023stability}, which is only applied to SGD, and becomes vacuous when iteration steps $T$ exceeds $n$. 
		% We want to emphasize that Theorem \ref{theorem:generalization-via-stability-in-gradients-sharper} addresses the relationship between algorithmic stability and generalization bounds in nonconvex settings, providing a bound that depends on the variance of the population risk of gradients rather than the maximum gradient $M$. Although our primary focus is not on the latter aspect of nonconvex settings. In fact, exploring high-probability stability for specific algorithms in nonconvex setting is indeed challenging. Current methods, such as those based on~\cite{bassily2020stability,lei2023stability}, show that random algorithms under nonconvex smooth assumptions are $O(T/n)$-stable in expectation. However, this bound becomes less meaningful when the number of iterations exceeds $n$. We believe that further investigation into algorithm stability in nonconvex scenarios is valuable and worthy of exploration.

		Next, we give the proof sketch to our Theorem \ref{theorem:generalization-via-stability-in-gradients-sharper}, which is motivated by the analysis in \citet{klochkov2021stability}. Similar to the standard result in \citet{bousquet2002stability}, which says the generalization of loss function is implied by its stability and the variance of stability. Similar result is generalized to the gradient. During our proof, such algorithmic stability of gradient is characterized by the gap $\rvq_i(S) = \rvh_i(S) - \E_{S\backslash \{ z_i \},A} [\rvh_i(S)]$ with $\rvh_i(S) = \E_{z_i'} \left[ \E_Z \left[ \nabla f(A(S^{(i)}),Z)\right] - \nabla f(A(S^{(i)}),z_i)\right]$ and its variance. The stability is $\beta$ appears in Theorem \ref{theorem:generalization-via-stability-in-gradients-sharper}.  While instead of controlling its variance with maximal gradient norm, we can link it to the coefficient $\E_{Z} \left[\|\nabla f(A(S); Z) \|_2^{2} \right]$, which implies our result. We refer readers for more details to the proof in Appendix. 
		
		% These functions satisfy all the assumptions in Lemma \ref{theorem:moment-bound-for-sums} and ensure the factor $G$ in Lemma \ref{theorem:moment-bound-for-sums} to $0$. Then the term $O(1/\sqrt{n})$ can be eliminated. Note that Eulidean norm of a vector being equal to $0$ does not necessarily imply that the vector itself is $0$. The distinction between them added complexity to the proof. Our proof required constructing vector functions to satisfy all assumptions in Lemma \ref{theorem:moment-bound-for-sums}. Moreover, ensuring the factor $G$ in Lemma \ref{theorem:moment-bound-for-sums} approaches $0$ added a unique challenge. Utilizing the self-bounded property for vector functions also needs to consider the difference between vector’s Eulidean norm being $0$ and the vector itself being $0$.
	\end{remark}

	To further characterize the $\E_{Z} \left[\|\nabla f(A(S); Z) \|_2^{2} \right]$, we need the strong growth condition (SGC) imposed in \citet{solodov1998incremental,vaswani2019fast,lei2023stability}, which is satisfied many important loss function, e.g., squared-hinge loss with finite support \citep{vaswani2019fast}.  
	% The population risk of gradients $\| \nabla F(A(S)) \|_2$ can be gracefully bounded by the empirical risk of gradients $\| \nabla F_S(A(S)) \|_2$ under SGC. This condition connects the rates at which the stochastic gradients shrink relative to the full gradient \cite{vaswani2019fast} and 
%	We only suppose this condition holds in Proposition~\ref{proposition:main-SGC}.
	\begin{definition}[Strong Growth Condition]
		\label{definition:SGC}
		The SGC holds, if  
        \begin{small}
		\begin{align*}
			\E_Z \left[\| \nabla f(\rvw; Z) \|_2^2 \right] \leq \lambda \| \nabla F(\rvw) \|_2^2.
		\end{align*}            
        \end{small}
	\end{definition}
%	\begin{remark} \rm{}
%		There has been some related work that takes SGC into assumption \cite{solodov1998incremental,vaswani2019fast,lei2023stability}. \cite{vaswani2019fast} has proved that the squared-hinge loss with linearly separable data and finite support features satisfies the SGC.
%	\end{remark}
	\begin{proposition}[SGC case]
		\label{proposition:main-SGC}
		Let assumptions in Theorem \ref{theorem:generalization-via-stability-in-gradients-sharper} hold and suppose SGC holds. Then for any $\delta > 0$, $\eta > 0$, with probability at least $1-\delta$, we have
        \begin{small}
		\begin{align*}
		      \E_A \left[ \| \nabla F(A(S)) \|_2 \right]  \lesssim (1+\eta) \E_A \left[ \| \nabla F_S(A(S)) \|_2 \right] + \frac{1+\eta}{\eta}
			\left(
			\frac{\lambda M \log(1/\delta)}{n}  + \beta \log (n) \log{(1/\delta)} 
			\right).
		\end{align*}            
        \end{small}
	\end{proposition}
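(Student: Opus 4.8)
The plan is to derive Proposition \ref{proposition:main-SGC} as a direct consequence of Theorem \ref{theorem:generalization-via-stability-in-gradients-sharper} combined with the decomposition $\nabla F(A(S)) = \nabla F_S(A(S)) + \bigl(\nabla F(A(S)) - \nabla F_S(A(S))\bigr)$ and the triangle inequality in the Euclidean norm. First I would invoke Theorem \ref{theorem:generalization-via-stability-in-gradients-sharper} to bound $\|\nabla F(A(S)) - \nabla F_S(A(S))\|_2$ with probability at least $1-\delta$ by the stated five-term expression. The only term that is not already of the desired order $\frac{M}{n}\log\frac{1}{\delta} + \beta\log n\log\frac{1}{\delta}$ (up to absolute constants and lower-order $\sqrt{\log(1/\delta)}$ factors absorbed by $\lesssim$) is the leading term $\sqrt{4\,\E_Z[\|\nabla f(A(S);Z)\|_2^2]\log(6/\delta)/n}$, which depends on the second moment of the gradients rather than on $M/n$ or $\beta$. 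This is precisely where SGC enters.

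Next I would apply the Strong Growth Condition (Definition \ref{definition:SGC}) with $\rvw = A(S)$ to replace $\E_Z[\|\nabla f(A(S);Z)\|_2^2]$ by $\rho\,\|\nabla F(A(S))\|_2^2$, so that the leading term becomes $\sqrt{4\rho\log(6/\delta)/n}\;\|\nabla F(A(S))\|_2$. Writing $c_n := \sqrt{4\rho\log(6/\delta)/n}$, the triangle inequality then gives
\begin{align*}
  \|\nabla F(A(S))\|_2 \leq \|\nabla F_S(A(S))\|_2 + c_n\,\|\nabla F(A(S))\|_2 + R,
\end{align*}
where $R$ collects the remaining four terms of Theorem \ref{theorem:generalization-via-stability-in-gradients-sharper}, all of which are $\lesssim \frac{M}{n}\log\frac{1}{\delta} + \beta\log n\log\frac{1}{\delta}$ (the $\sqrt{n}\beta^2\log(3/\delta)$ inside the second square root yields a $\beta\sqrt{\log(1/\delta)}$ contribution, dominated by $\beta\log n\log(1/\delta)$; the $\frac12\beta^2$ piece yields $\beta/\sqrt n$, also dominated). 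Since $n$ is large enough that $c_n < 1$ (e.g. $n > 16\rho\log(6/\delta)$, which the $\lesssim$ notation permits us to assume), we can rearrange to obtain $\|\nabla F(A(S))\|_2 \leq \frac{1}{1-c_n}\bigl(\|\nabla F_S(A(S))\|_2 + R\bigr)$.

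Finally I would massage $\frac{1}{1-c_n}$ into the form $1+\eta$ by setting $\eta = \frac{c_n}{1-c_n}$, equivalently $c_n = \frac{\eta}{1+\eta}$, so that $\frac{1}{1-c_n} = 1+\eta$ and $\frac{c_n}{1-c_n} = \eta$; a short manipulation shows $\frac{1}{1-c_n} = 1+\eta$ and the coefficient multiplying $R$ can be bounded by $1+\eta \le \frac{1+\eta}{\eta}$ when $\eta \le 1$, or one directly tracks that the overhead factor is of order $\frac{1+\eta}{\eta}$ once the $\eta$-dependence from $c_n \asymp 1/\sqrt n$ is exposed. Collecting terms yields exactly
\begin{align*}
  \|\nabla F(A(S))\| \lesssim (1+\eta)\|\nabla F_S(A(S))\| + \frac{1+\eta}{\eta}\left(\frac{M}{n}\log\frac{1}{\delta} + \beta\log n\log\frac{1}{\delta}\right).
\end{align*}
The main obstacle is bookkeeping rather than conceptual: one must carefully verify that every lower-order term in Theorem \ref{theorem:generalization-via-stability-in-gradients-sharper} — in particular the term $\sqrt{(\frac12\beta^2 + 32n\beta^2\log(3/\delta))\log(6/\delta)/n}$, whose numerator grows like $n\beta^2$ — really does collapse to the claimed $\beta\log n\log(1/\delta)$ order after the $\sqrt{\cdot/n}$ is applied, and that the rearrangement absorbing $c_n\|\nabla F(A(S))\|_2$ does not secretly require knowledge of $\|\nabla F(A(S))\|_2$; the latter is fine because $c_n<1$ makes the inequality self-contained. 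I expect no genuine difficulty beyond tracking constants and the interplay between $\eta$ and the sample-size-dependent factor $c_n$.
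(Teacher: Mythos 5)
Your route is genuinely different from the paper's and it does reach a bound of the stated shape, but it does not give the proposition in its intended strength, because your $\eta$ is not free. The paper starts from the intermediate inequality \eqref{eq:theorem-main-1-7}, substitutes SGC, and then applies the AM--GM inequality $\sqrt{ab}\le\eta' a+\tfrac1{\eta'}b$ to the single factor $\sqrt{\tfrac{4\rho\log(6/\delta)}{n}}\,\|\nabla F(A(S))\|$ after linearizing $\|\nabla F(A(S))\|^2\le M\|\nabla F(A(S))\|$ via the Lipschitz bound; choosing $\eta'=\tfrac{\eta}{(1+\eta)M}$ produces the term $\tfrac{\eta}{1+\eta}\|\nabla F(A(S))\|$ with $\eta$ a \emph{free} parameter, after which the triangle inequality and a rearrangement give the result with no sample-size constraint. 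You instead bound the same factor by $c_n\|\nabla F(A(S))\|$ with $c_n=\sqrt{4\rho\log(6/\delta)/n}$, rearrange directly (requiring $c_n<1$), and then \emph{define} $\eta:=c_n/(1-c_n)$ so that $\tfrac1{1-c_n}=1+\eta$. This forces $\eta\asymp 1/\sqrt n$, so $\tfrac{1+\eta}{\eta}\asymp\sqrt{n}$ and the residual $\tfrac{1+\eta}{\eta}\bigl(\tfrac Mn\log\tfrac1\delta+\beta\log n\log\tfrac1\delta\bigr)$ becomes $O(1/\sqrt n)$ even when $\beta=O(1/n)$. That defeats the point of the result: the follow-up remark in the paper uses the freedom to take $\eta$ constant precisely to conclude the population gradient can be $O(1/n)$, which your fixed-$\eta$ version cannot deliver.

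The gap is, however, fixable with a small change of framing: your intermediate bound $\|\nabla F(A(S))\|\le\tfrac1{1-c_n}\bigl(\|\nabla F_S(A(S))\|+R\bigr)$ is in fact at least as sharp as the paper's display, so rather than defining $\eta$ from $c_n$ you should leave $\eta$ free and observe that the claimed inequality follows whenever $c_n\le\tfrac{\eta}{1+\eta}$, i.e.\ $n\gtrsim\rho\log(1/\delta)(1+\eta)^2/\eta^2$; the $\lesssim$ then absorbs the resulting constants. What the paper's AM--GM route buys is exactly the absence of this $\eta$-dependent sample-size condition, at the cost of invoking the crude bound $\|\nabla F\|\le M$ to linearize. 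What your direct-rearrangement route buys is a slightly sharper constant (no reliance on $\|\nabla F\|\le M$), at the cost of a mild but $\eta$-dependent restriction on $n$.
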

	Proposition \ref{proposition:main-SGC} build a connection between the population gradient error and the empirical ones under Lipschitz, nonconvex, nonsmooth and SGC conditions. The conclusion is directly obtained from the triangle inequality, Young's inequality, and SGC. Since the model parameters $A(S)$ is optimized, the $\|F_{S}(A(S))\|$ is supposed to be small. Thus, the Proposition \ref{proposition:main-SGC} is a valuable upper bound on that of population error, for algorithm with stability w.r.t. gradient (small~$\beta$).  
		% When the algorithm is stable enough which means that $\beta = O(1/n)$, and performs well, the empirical risk of the gradient $||\nabla F_S(A(S))||_2$ can be zero. This implies that the population risk of gradients $||\nabla F_S(A(S))||_2$ can achieve $O(1/n)$. This result from Proposition \ref{proposition:main-SGC} also helps to understand why Theorem \ref{theorem:generalization-via-stability-in-gradients-sharper} provides a better bound compared to Theorem \ref{theorem:generalization-via-stability}. Specifically, using the inequality $\sqrt{ab} \leq \eta a + \frac{1}{\eta}b$ in the context of Theorem \ref{theorem:generalization-via-stability-in-gradients-sharper} allows us to derive Proposition \ref{proposition:main-SGC}. On the other hand, Theorem \ref{theorem:generalization-via-stability}, combined with SGC and the assumption $||\nabla F_S(A(S))||_2 = 0, \beta = O(1/n)$, can only achieve a bound of $O(1/\sqrt{n})$ at best.
%	\end{remark}

	% and elucidate that the population gradient error can be bounded of up to $O(1/n)$ under nonconvex problems.
	
	%   Lemma \ref{lemma:main-SGC} only assumes bounded variance and SGC.

	\begin{remark} \rm{}
		\label{remark:3}
		% We will give further reasonable results under more assumptions such as smoothness in Lemma \ref{lemma:main-pl} and Lemma \ref{lemma:main-SGC}.
		Finally, we make a discussion to the priority of our Theorem \ref{theorem:generalization-via-stability-in-gradients-sharper}, solely in that of gradient generalization. In a word, we address an open problem posed by~\cite{xu2024towards}, namely achieving a bound to generalization error of gradient  independent of the dimension $d$. Concretely, \citet{xu2024towards} prove a generalization bound via uniform convergence technique \citep{vapnik1999overview} is 
        \begin{small}
		\begin{equation}
			\begin{aligned}
				\label{eq:xu}
				& \| \nabla F(A(S)) - \nabla F_S(A(S)) \|_2  \\
				\lesssim &  \sqrt{\frac{\E_Z\left[ \| \nabla f(\rvw^*; Z) \|_2^2 \right] \log(1/\delta) }{n}} + \frac{\log(1/\delta)}{n} 
				 + \max \left\{\| A(S) - \rvw^* \|_2, \frac{1}{n} \right\} \left( \sqrt{\frac{d}{n}}  + \frac{d}{n} \right),
			\end{aligned}        
		\end{equation}                
        \end{small}
		which is the optimal result when we only consider the order of $n$. The uniform convergence results are related to the dimension $d$, which are unacceptable in high-dimensional learning problems. Note that (\ref{eq:xu}) requires an additional smoothness-type assumption. As a comparison, when $f$ is $\gamma$-smooth and $A$ is a deterministic algorithm, our result in Theorem \ref{theorem:generalization-via-stability-in-gradients-sharper} becomes
        \begin{small}
		\begin{align*}
			& \| \nabla F(A(S)) - \nabla F_S(A(S)) \|_2 \\
			\lesssim & 
			\beta \log n \log (1/\delta) + \frac{\log (1/\delta)}{n} 
			 + \sqrt{\frac{\E_Z\left[\| \nabla f(\rvw^*; Z) \|_2^2 \right] \log(1/\delta) }{n}} 
			 + \| A(S) - \rvw^* \| \sqrt{\frac{\log (1/\delta)}{n}}.
		\end{align*}            
        \end{small}
		%If the stability parameter $\beta$ in our algorithm is very large, our results may indeed be less favorable. However, 
		Above inequality also holds in nonconvex problems and implies that when the uniformly stable in gradients parameter $\beta$ is smaller than $1/\sqrt{n}$, our bound is tighter than (\ref{eq:xu}) and is dimension independent. 
		%Note that  also holds in nonconvex problems.
		
		%, to the best of our knowledge, this is the sharpest upper bound in both uniform convergence and algorithmic stability when analyzing algorithms  ,
		
	\end{remark}

	% \begin{lemma}
		% \label{lemma:main-pl}
		%     Let assumptions in Theorem \ref{theorem:generalization-via-stability-in-gradients-sharper} hold. Suppose the function $f$ is $\gamma$-smooth and the population risk $F$ satisfies the PL condition with parameter $\mu$. $ \rvw^* $ is the projection of $A(S)$ onto the solution set $\argmin_{\rvw \in \gW} F(\rvw)$. Then for any $\delta \in (0,1)$, when $n \geq \frac{16\gamma^2 \log{(6/\delta)}}{\mu^2}$, with probability at least $1-\delta$, we have
		%     \begin{align*}
			%      & \| \nabla F(A(S)) - \nabla F_S(A(S)) \|_2 \\
			%      \leq & \| \nabla F_S(A(S)) \|_2
			%         + 4 \sqrt{\frac{ 
					%         \E_{Z} \left[\| \nabla f(\rvw^*;Z) \|_2^2\right]
					%         \log(6/\delta) }{n}}
			%         +  2 \sqrt{\frac{ 
					%         \left( \frac{1}{2} \beta^2 + 32 n\beta^2 \log(3/\delta)\right)
					%         \log(6/\delta) }{n}} \\
			%         & \quad  +  \frac{ 2 M \log(6/\delta)}{n} + 32 \times 2^{\frac{3}{4}} \sqrt{e}  \beta \left\lceil \log_2 n \right\rceil\log{(3e/\delta)} + 64 \sqrt{e}  \beta \left\lceil \log_2 n \right\rceil\sqrt{\log{3e/\delta}}.
			% \end{align*}
		% \end{lemma}
	
	% \begin{remark} \rm{}
		% \label{remark:2}
		%     a
		% \end{remark}

	\subsection{Sharper Excess Risk Bounds}
	\label{subsec:sharper-excess-risk-bounds}
	In this subsection, we proceed to the desired upper bound of excess risk. The result is obtained by applying PL condition to Theorem \ref{theorem:generalization-via-stability-in-gradients-sharper} as in the following Theorem.  
	
	% we proceed to introduce the PL condition, deriving the sharper excess risk bounds.
	
	\begin{theorem}
		\label{theorem:excess-risk-bounds}
		Let assumptions in Theorem \ref{theorem:generalization-via-stability-in-gradients-sharper} hold. Suppose the function $f$ is $\gamma$-smooth and the population risk $F$ satisfies the PL condition with parameter $\mu$. $ \rvw^* $ is the projection of $A(S)$ onto the solution set $\argmin_{\rvw \in \gW} F(\rvw)$. Then for any $\delta \in (0,1)$, when $n \geq \frac{32\gamma^2 \log{(6/\delta)}}{\mu^2}$, with probability at least $1-\delta$, we have
		\begin{align*}
			& \E_A [F(A(S))] - F(\rvw^{\ast})  \\
            \lesssim & \frac{ \E_A \left[||\nabla F_S(A(S))||_2^2 \right] }{\mu} + \frac{\gamma F(\rvw^{\ast}) \log(1/\delta)}{\mu n} + \frac{ M^2 \log^2(1/\delta)}{\mu n^2} + \frac{\beta^2 \log^2 n \log^2(1/\delta)}{\mu}. 
		\end{align*}
	\end{theorem}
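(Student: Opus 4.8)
The plan is to combine the PL condition, the smoothness of $f$, and the sharper gradient generalization bound from Theorem~\ref{theorem:generalization-via-stability-in-gradients-sharper}. First I would use the PL condition on the population risk: since $\rvw^*$ is the projection of $A(S)$ onto the solution set, the PL condition gives $F(A(S)) - F(\rvw^*) \leq \frac{1}{2\mu}\|\nabla F(A(S))\|_2^2$. So the whole problem reduces to bounding $\|\nabla F(A(S))\|_2^2$, and by the elementary inequality $\|a+b\|^2 \leq 2\|a\|^2 + 2\|b\|^2$ applied to the decomposition $\nabla F(A(S)) = \nabla F_S(A(S)) + \big(\nabla F(A(S)) - \nabla F_S(A(S))\big)$, it suffices to control the generalization gap $\|\nabla F(A(S)) - \nabla F_S(A(S))\|_2^2$.

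Next I would invoke Theorem~\ref{theorem:generalization-via-stability-in-gradients-sharper} to bound this gap. Squaring its right-hand side and using $(\sum_i a_i)^2 \lesssim \sum_i a_i^2$ for a fixed number of terms, the dominant contributions are $\frac{\E_Z[\|\nabla f(A(S);Z)\|_2^2]\log(1/\delta)}{n}$, the $\beta$-dependent terms (which collect into $\beta^2\log^2 n\log^2(1/\delta)$ plus lower-order pieces absorbed by it since $n\beta^2\log(1/\delta)/n = \beta^2\log(1/\delta)$ is dominated), and $\frac{M^2\log^2(1/\delta)}{n^2}$. The key remaining obstacle is handling the term $\E_Z[\|\nabla f(A(S);Z)\|_2^2]$: this must be converted into something of order $\|\nabla F_S(A(S))\|_2^2 + F(\rvw^*)/n + \dots$. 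Here I would use $\gamma$-smoothness, which (via the standard self-bounding inequality $\|\nabla f(\rvw;z)\|_2^2 \leq 2\gamma\big(f(\rvw;z) - \inf f(\cdot;z)\big) \leq 2\gamma f(\rvw;z)$ for nonnegative losses) gives $\E_Z[\|\nabla f(A(S);Z)\|_2^2] \leq 2\gamma F(A(S))$. Thus $\|\nabla F(A(S))\|_2^2$ reappears on the right-hand side scaled by $O(\gamma/n \cdot \log(1/\delta))$, and the condition $n \geq 16\gamma^2\log(6/\delta)/\mu^2$ is exactly what makes this absorbable: after the PL step the coefficient becomes $O(\gamma^2\log(1/\delta)/(\mu n)) \leq \mu/16 \cdot (1/\gamma)\cdot\dots$, small enough to move the $F(A(S)) - F(\rvw^*)$ term to the left side (rearranging and writing $F(A(S)) = (F(A(S))-F(\rvw^*)) + F(\rvw^*)$, so the leftover $F(\rvw^*)$ becomes the $F(\rvw^*)\log(1/\delta)/n$ term).

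Then I would also need to control $\|\nabla F_S(A(S))\|_2$ appearing inside the generalization bound's first term via smoothness: more carefully, $\E_Z[\|\nabla f(A(S);Z)\|_2^2]$ relates to both $\|\nabla F(A(S))\|_2^2$ and the variance, and one route is to write $\E_Z[\|\nabla f(A(S);Z)\|_2^2] \lesssim \|\nabla F_S(A(S))\|_2^2 + \|\nabla F(A(S)) - \nabla F_S(A(S))\|_2^2 + \text{Var}$, which risks circularity; the cleaner route is the self-bounding one above plus PL. I would carry out the bookkeeping so that all $\beta$-terms collapse into $\beta^2\log^2 n\log^2(1/\delta)$ (using that $\beta \leq M$ and $n\beta^2 \leq \dots$ where needed, and that $\sqrt{n\beta^2\log(1/\delta)\log(1/\delta)/n} = \beta\log(1/\delta)$), the $M^2\log^2(1/\delta)/n^2$ term stays, and the $F(\rvw^*)$-term emerges from the smoothness-PL absorption as claimed.

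The main obstacle I anticipate is the circular-looking self-reference: $\|\nabla F(A(S))\|_2^2$ controls the excess risk via PL, but the generalization bound for $\|\nabla F(A(S)) - \nabla F_S(A(S))\|_2^2$ contains $\E_Z[\|\nabla f(A(S);Z)\|_2^2]$, which via smoothness is bounded by $2\gamma F(A(S)) = 2\gamma(F(A(S)) - F(\rvw^*)) + 2\gamma F(\rvw^*)$ — reintroducing the excess risk on the right. Closing this loop cleanly is exactly where the quantitative hypothesis $n \geq 16\gamma^2\log(6/\delta)/\mu^2$ does its work: it guarantees the self-referential coefficient is at most $1/2$ (after the $1/(2\mu)$ from PL and the constants from squaring), so a standard "subtract from both sides" argument closes the bound. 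I would make sure the constants line up so that this absorption is valid under precisely the stated sample-size condition.
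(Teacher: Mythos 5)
Your proposal captures the right skeleton (PL, decomposition, Theorem~\ref{theorem:generalization-via-stability-in-gradients-sharper}, circular absorption under the sample-size condition) but at the key step you take a genuinely different route from the paper. You bound $\E_Z[\|\nabla f(A(S);Z)\|_2^2]$ by the self-bounding inequality evaluated at $A(S)$, i.e.\ $\E_Z[\|\nabla f(A(S);Z)\|_2^2]\le 2\gamma F(A(S))$, which reintroduces the excess risk on the right on the function-value scale. The paper instead uses $\gamma$-smoothness to center the expectation at $\rvw^*$,
$\E_Z[\|\nabla f(A(S);Z)\|_2^2]\lesssim \gamma^2\|A(S)-\rvw^*\|_2^2+\E_Z[\|\nabla f(\rvw^*;Z)\|_2^2]$,
then invokes the PL error-bound property $\|\nabla F(A(S))\|_2\ge\mu\|A(S)-\rvw^*\|_2$ (Karimi et al.\ 2016; this is the step that actually uses the hypothesis that $\rvw^*$ is the \emph{projection} of $A(S)$ onto the solution set, which your argument never touches), and applies the self-bounding inequality at the minimizer $\rvw^*$ to produce the $F(\rvw^*)/n$ term. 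The resulting self-reference is on the \emph{linear} scale $\|A(S)-\rvw^*\|_2$, and the stated condition $n\ge 16\gamma^2\log(6/\delta)/\mu^2$ is exactly $2\gamma\sqrt{\log(6/\delta)/n}\le\mu/2$, so the absorption closes with no slack to manage. Your version has its self-reference on the squared scale and requires the coefficient $\asymp\gamma\log(1/\delta)/(\mu n)$ multiplying $F(A(S))-F(\rvw^*)$ to be strictly below $1$. Under the paper's condition this is only bounded by $\mu/(16\gamma)$, i.e.\ up to $1/16$ when $\mu\approx\gamma$; after the factor $1/(2\mu)$ from PL, the factor from splitting $\|\nabla F\|^2\le(1+\epsilon)\|\nabla F_S\|^2+(1+1/\epsilon)\|\nabla F-\nabla F_S\|^2$, the constant $4$ in Theorem~\ref{theorem:generalization-via-stability-in-gradients-sharper}'s leading term, and $2\gamma$ from self-bounding, the accumulated constant is borderline, so you must choose the split parameters carefully (weight the $\|\nabla F_S\|^2$ side more) to keep the coefficient strictly less than one before dividing through. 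This is doable, so the approach is valid, but it is real bookkeeping that the paper's linear-scale absorption sidesteps. In exchange, your route uses PL in only one form, is slightly more elementary, and does not need the projection interpretation of $\rvw^*$.
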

	
	\begin{remark} \rm{}
		Notably, in Theorem \ref{theorem:excess-risk-bounds}, the PL condition is imposed to the population risk.
        %, which is weaker than the existing results \citep{bassily2020stability}.
        This can be hold for many cases within classical learning theory. For instance, the classical linear regression setup: \( f(w; (x,y)) = || y - \langle w, x \rangle ||^2 \), where \( y = \langle w^*, x \rangle + v \), with \( v \sim N(0, 1) \) and \( x \sim N(0, \sigma^{2} I_{d \times d}) \). In this case, we have \( \mu = 2 \).
		
		% Before explaining Theorem \ref{theorem:excess-risk-bounds}, we firstly introduce that the analysis of stability generalization consists of two parts: (a) the relationship between algorithmic stability and risk bounds, and (b) the stability of a specific algorithm. When analyzing the stability of particular algorithms, we often involve optimization analysis, especially when considering excess risk bounds. Theorem \ref{theorem:excess-risk-bounds} focuses on the first part.
		
		Theorem \ref{theorem:excess-risk-bounds} implies that excess risk can be bound by the optimization gradient error $\| \nabla F_S(A(S)) \|_2$ and uniform stability in gradients $\beta$. Notably, the theorem is applied to any algorithm with uniform stability in gradient. Latter, we will analyze the stability for specific algorithm in the next Section. Note that the assumption $F(\rvw^*) < O(1/n)$ is common and can be found in \citet{srebro2010optimistic,lei2020fine,liu2018fast,zhang2017empirical,zhang2019stochastic} when analyzes sharper bounds. In our bound, when $F(\rvw^{*}) < O(1/n)$, and the stability $\beta = O(1/n)$, we can obtain the desired $O(1/n^{2})$ 
        %(this holds for ERM and PGD) 
        upper bound for excess risk when the empirical risk is sufficiently minimized by algorithm~$A$.

        % This is natural since $F(\rvw^*)$ is the minimal population risk. On the other hand, we can derive that under $\mu$-strongly convex and $\gamma$-smooth assumptions for the objective function $f$, uniform stability in gradients can be bounded of order $O(1/n)$ for ERM and PGD. Thus high probability excess risk can be bounded of order up to $O\left(1/n^2\right)$ under these common assumptions via algorithmic stability.

		To further compare our result with the optimal results based on algorithmic stability, i.e., \citep{klochkov2021stability}, they only impose the assumption of bounded loss function, which is relative weaker than ours (smoothness and PL condition). However, our result is supposed to be sharper than their $\tilde{O}(\beta + 1/n)$, since ours has the potential to be $\tilde{O}(1/n^{2})$ but their results are at most $\tilde{O}(1/n)$ even if the algorithm is stable enough. More than that, the imposed smoothness and PL conditions in our theorem are also used in \citet{klochkov2021stability} to further analysis the algorithmic stability for PGD, whereas their excess risk bound becomes $\tilde{O}(1/n)$. However, our sharper bound indicates that they do not fully leverage these assumptions. The improvement is mainly from the novel stability in gradient during our analysis. This is also why our work can fully utilize these assumptions.
        
        In addition, our results provide a more granular analysis dependent on optimal parameters. On one hand, when the algorithm's stability $\beta = O(1/\sqrt{n})$ the upper bound, according to~\cite{klochkov2021stability}, can at most reach the order of $ \tilde{O}(1/\sqrt{n})$ due to the algorithm's stability constraints. In contrast, our result shows that even under the assumption that $(F(\rvw^*) = O(1)$, treating $F(\rvw^*)$ as a constant, we can achieve an order of $\tilde{O}(1/n)$ under the same algorithmic stability. On the other hand, their result is insensitive to the stability parameter being smaller than $O(1/n)$ and their best rates can only up to $\tilde{O}(1/n)$. Our results can be up to $\tilde{O}(1/n^2)$ under some specific assumptions. We will discuss it in Section~\ref{sec:application}.
% \end{remark}

% \begin{remark} \rm{}
        %%%%%%%%%%%%%%%%%%%%%
		% Although we involve extra smoothness and PL condition assumptions, these assumptions are also common in optimization community and analyzing the stability of algorithms. For example, \cite{klochkov2021stability} introduced assumptions of strong convexity and smoothness in their study of the stability and optimization results of the PGD algorithm. However, their method did not fully leverage these assumptions when establishing the relationship between algorithmic stability and risk bounds. This is the fundamental reason why our results outperform theirs without introducing additional assumptions. Our work can fully utilize these assumptions.
		
		While Theorem~\ref{theorem:excess-risk-bounds} considers the algorithm's uniform stability in gradients rather than the classic uniform stability, calculating the uniform stability in gradients is not more challenging than the classic uniform stability. We discuss uniform stability in gradients for common algorithms such as ERM, PGD, and SGD in Section \ref{sec:application}. Our results can be easily extended to other stable algorithms. Due to the smoothness property linking uniform stability in gradients with uniform argument stability, many works \citep{bassily2020stability, feldman2019high, hardt2016train} that explore uniform argument stability can also utilize our method.
		
       Finally, we notice that there are currently several studies based on \citet{klochkov2021stability} addressing different settings, such as differentially private models~\cite{kang2022sharper} and pairwise learning \cite{lei2021generalization}. These studies also utilize the smoothness assumption in their optimization analysis. However, since they rely on the method established in \citet{klochkov2021stability} for the generalization aspect of their research, they can only achieve an $\tilde{O}(1/n)$ bound at most. In contrast, they can easily achieve better results without making additional assumptions using our method.
		%%%%%%%%%%%%%%%%%%%%%%%%%%%
	\end{remark}

	\section{Application}
	\label{sec:application}

    In this section, we take ERM, PGD, and SGD as examples to provide a detailed discussion of how our algorithm can be applied to common methods, resulting in tighter upper bounds.
	
	% The most common setting is where at each round, the learner gets information on $f$ through a stochastic gradient oracle \citep{rakhlin2012making}.
	
	% In this section, we analyze stochastic convex optimization with strongly convex losses as examples. To derive uniform stability in gradients for algorithms, we firstly introduce the strongly convex assumption.
	
	% \begin{definition}
	% 	We say $f$ is $\mu$-strongly convex if for all $\rvw, \rvw' \in \gW$ we have the following inequality,
	% 	\begin{align*}
	% 		f(\rvw) \geq f(\rvw') + \langle \rvw - \rvw', \nabla f(\rvw') \rangle + \frac{\mu}{2} \| \rvw - \rvw' \|_2^2.
	% 	\end{align*}
	% \end{definition}
	
	% \begin{remark} \rm{}
	% 	We introduce this strongly convexity assumption for the empirical risk $F_S$ in this Section because achieving excess risk bounds of $ O(1/n^2) $ necessitates this condition for common algorithms such as ERM, PGD, SGD, to ensure high-probability algorithmic stability.
 %        % In practice, this strong convexity assumption can be satisfied by incorporating an $L_2$ regularization term into the loss function. We use PGD algorithm as an example to discuss the comparison between our results and existing conclusions when introducing an $L_2$ regularization term to ensure this strongly convexity assumption in practice in Remark \ref{remark:regularization}.
	% \end{remark}

	\subsection{Empirical Risk Minimizer}
	
	Empirical risk minimizer is one of the classical approaches for solving stochastic optimization (also referred to as sample average approximation (SAA)) in machine learning community. The following lemma shows the uniform stability in gradient for ERM under the PL condition and $\gamma$-smoothness assumptions.

	\begin{lemma}[Stability of ERM]
		\label{lemma:stability-of-erm}
		Suppose the objective function $f$ is $M$-Lipschitz and $\gamma$-smooth, the empirical risk $F_S$ and $F_{S^{(i)}}$ satisfy the PL condition with parameter $\mu$. Let $\hat{\rvw}^{\ast}(S^{(i)})$ be the ERM of $F_{S^{(i)}} (\rvw)$ that denotes the empirical risk on the samples $S^{(i)} = \{z_1,...,z'_i,...,z_n \}$ and $\hat{\rvw}^{\ast}(S)$ be the ERM of $F_{S} (\rvw)$ on the samples $S = \{z_1,...,z_i,...,z_n \}$. For any $S^{(i)}$ and $S$, there holds the following uniform stability bound of ERM:
		\begin{small}
			\begin{align*}
				\forall z \in \mathcal{Z}, \quad   \left\| \nabla f(\hat{\rvw}^{\ast}(S^{(i)});z) - 
				\nabla f(\hat{\rvw}^{\ast}(S);z) \right\|_2^2  \leq \frac{16 M^2 \gamma^2}{n^2 \mu^2}.
			\end{align*}        
		\end{small}
	\end{lemma}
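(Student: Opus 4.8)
The plan is to exploit the standard stability argument for strongly convex ERM, but carried out at the level of gradients rather than function values. Let me denote $\hat{\rvw} := \hat{\rvw}^{\ast}(S)$ and $\hat{\rvw}^{(i)} := \hat{\rvw}^{\ast}(S^{(i)})$. The first step is to control the \emph{argument} distance $\|\hat{\rvw} - \hat{\rvw}^{(i)}\|_2$. Since $F_S$ is $\mu$-strongly convex (being an average of $\mu$-strongly convex functions) and $\hat{\rvw}$ minimizes it over the convex set $\gW$, the first-order optimality condition plus strong convexity gives $F_S(\hat{\rvw}^{(i)}) - F_S(\hat{\rvw}) \geq \tfrac{\mu}{2}\|\hat{\rvw}^{(i)} - \hat{\rvw}\|_2^2$, and symmetrically $F_{S^{(i)}}(\hat{\rvw}) - F_{S^{(i)}}(\hat{\rvw}^{(i)}) \geq \tfrac{\mu}{2}\|\hat{\rvw} - \hat{\rvw}^{(i)}\|_2^2$. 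Adding these two inequalities, the two empirical risks differ only in the $i$-th term, so the left-hand side telescopes to $\tfrac{1}{n}\bigl[(f(\hat{\rvw}^{(i)};z_i) - f(\hat{\rvw};z_i)) + (f(\hat{\rvw};z_i') - f(\hat{\rvw}^{(i)};z_i'))\bigr]$, which by $M$-Lipschitzness of $f$ is at most $\tfrac{2M}{n}\|\hat{\rvw} - \hat{\rvw}^{(i)}\|_2$. Combining, $\mu\|\hat{\rvw} - \hat{\rvw}^{(i)}\|_2^2 \leq \tfrac{2M}{n}\|\hat{\rvw} - \hat{\rvw}^{(i)}\|_2$, hence $\|\hat{\rvw} - \hat{\rvw}^{(i)}\|_2 \leq \tfrac{2M}{n\mu}$.

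The second step lifts this argument-stability bound to a gradient-stability bound using $\gamma$-smoothness. For any fixed $z \in \gZ$, $\gamma$-smoothness of $f(\cdot;z)$ gives immediately
\begin{align*}
\left\| \nabla f(\hat{\rvw}^{(i)};z) - \nabla f(\hat{\rvw};z) \right\|_2 \leq \gamma \left\| \hat{\rvw}^{(i)} - \hat{\rvw} \right\|_2 \leq \gamma \cdot \frac{2M}{n\mu} = \frac{2M\gamma}{n\mu}.
\end{align*}
This already yields the claimed form of the bound up to a constant factor; I expect the factor of $4$ in the statement (rather than $2$) comes either from a slightly more conservative handling of the two directions in the symmetrization step, or from absorbing the possibility that $z_i, z_i'$ range freely so that one keeps both Lipschitz contributions without pairing them optimally. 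Either way the structure $O(M\gamma/(n\mu))$ is forced by this two-step scheme.

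I do not anticipate a genuine obstacle here — this is a textbook-style stability computation and the only care needed is in the bookkeeping of the strong-convexity/optimality inequalities (making sure the minimizing property is applied over the correct constraint set $\gW$, and that the decomposition of $F_S - F_{S^{(i)}}$ into the single differing sample is handled cleanly). The mild subtlety worth flagging is that one must invoke strong convexity of the \emph{empirical} objective, which follows from strong convexity of each $f(\cdot;z)$ and is preserved under averaging; and that the bound is uniform in $z$ precisely because $\gamma$-smoothness is assumed pointwise in $z$, so the final supremum over $z \in \gZ$ poses no difficulty. The plan is therefore: (i) derive argument stability $\|\hat{\rvw}^{\ast}(S) - \hat{\rvw}^{\ast}(S^{(i)})\|_2 \leq 2M/(n\mu)$ via the paired strong-convexity inequalities and Lipschitzness, then (ii) apply $\gamma$-smoothness pointwise to convert this into the gradient bound $4M\gamma/(n\mu)$, tracking constants carefully to match the stated constant.
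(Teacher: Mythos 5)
Your proof is correct and follows the same two-step scheme as the paper's (bound the argument distance $\|\hat{\rvw}^{\ast}(S)-\hat{\rvw}^{\ast}(S^{(i)})\|_2$ via strong convexity plus Lipschitzness, then convert it to a gradient bound via $\gamma$-smoothness), but the first step is carried out more tightly. You invoke the strong-convexity lower bound at \emph{both} constrained minimizers and add the two inequalities, putting $\mu\|\hat{\rvw}^{\ast}(S^{(i)})-\hat{\rvw}^{\ast}(S)\|_2^2$ on the left-hand side and giving argument stability $2M/(n\mu)$, hence $2M\gamma/(n\mu)$ after smoothness. The paper's argument is one-sided: it lower-bounds $F_S(\hat{\rvw}^{\ast}(S^{(i)}))-F_S(\hat{\rvw}^{\ast}(S))$ by $\tfrac{\mu}{2}\|\hat{\rvw}^{\ast}(S^{(i)})-\hat{\rvw}^{\ast}(S)\|_2^2$ using only the optimality of $\hat{\rvw}^{\ast}(S)$, and upper-bounds the same quantity by $\tfrac{2M}{n}\|\hat{\rvw}^{\ast}(S^{(i)})-\hat{\rvw}^{\ast}(S)\|_2$ after rewriting $F_S-F_{S^{(i)}}$ via the two differing summands and discarding the nonpositive term $F_{S^{(i)}}(\hat{\rvw}^{\ast}(S^{(i)}))-F_{S^{(i)}}(\hat{\rvw}^{\ast}(S))\leq 0$. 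This one-sided route yields $4M/(n\mu)$ and then $4M\gamma/(n\mu)$. So your guess about the source of the constant was right: the paper pays a factor of $2$ for using only one of the two strong-convexity inequalities, and your symmetrized version is strictly tighter, which of course still establishes the stated $\leq 4M\gamma/(n\mu)$.
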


	Then, we present the application of our main sharper Theorem \ref{theorem:generalization-via-stability-in-gradients-sharper}. In the PL condition and smooth case, we provide a up to $\tilde{O}\left(1/n^2\right)$ high probability excess risk guarantee valid for any algorithms depending on the optimal population error $F(\rvw^*)$.
	
	\begin{theorem}
		\label{theorem:erm}
		Let assumptions in Theorem \ref{theorem:excess-risk-bounds} and Lemma \ref{lemma:stability-of-erm} hold. Suppose the function $f$ is nonnegative. Then for any $\delta \in (0,1)$, when $n \geq \frac{32\gamma^2 \log{(6/\delta)}}{\mu^2}$, with probability at least $1-\delta$, we have
		\begin{small}
			\begin{align*}
				F(\hat{\rvw}^*(S)) -F(\rvw^{\ast}) \lesssim
			\frac{\gamma F(\rvw^*) \log{(1/\delta)}}{\mu n} + \frac{ M^2 \gamma^2 \log^2 n \log^2(1/\delta)}{ \mu^3 n^2}.
			\end{align*}        
		\end{small}
		% Furthermore, assume $F(\rvw^*)=O(\frac{1}{n})$, we have
		% \begin{align*}
			%     F(\hat{\rvw}^*(S)) -F(\rvw^{\ast}) \lesssim \frac{ \log^2 n \log^2(1/\delta)}{n^2}.
			% \end{align*}
	\end{theorem}

	\begin{remark} \rm{}
		Theorem \ref{theorem:erm} shows that when the objective function $f$ is $M$-Lipschitz, $\gamma$-smooth and nonnegative, at the same time, both the empirical risk $F_S$ and the population risk $F$ satisfies PL condition with parameter $\mu$, then the high probability risk bounds can be up to $\tilde{O}\left(1/n^2\right)$ for ERM.
		The most related work to ours is \citet{zhang2017empirical}. They also obtain the $\tilde{O}\left(1/n^2\right)$-type bounds for ERM by uniform convergence of gradients approach under the same assumptions. However, they need the sample number $n = \Omega(\gamma^2 d/\mu^2)$, which is related to the dimension $d$. Our risk bounds are dimension independent and only require the sample number $n = \Omega(\gamma^2/\mu^2)$. Comparing with \citet{klochkov2021stability}, we add two assumptions, smoothness and $F(\rvw^*) < O(1/n)$, the later of which is a common assumption towards sharper risk bounds \citep{srebro2010optimistic,lei2020fine,liu2018fast,zhang2017empirical,zhang2019stochastic} called low-noise, but our bounds are also tighter, from $\tilde{O}(1/n)$ to $\tilde{O}\left(1/n^2\right)$.
	\end{remark}

	Our results are asymptotically optimal, which aligns with existing theories. According to the classical asymptotic theory, under some local regularity assumptions, when \( n \rightarrow \infty \), it is shown in the asymptotic statistics monographs~\cite{van2000asymptotic} that 
    \begin{small}
	\begin{align}
		\label{eq:1}
		\sqrt{n} (\hat{\rvw}^*(S) - \rvw^*) \stackrel{\rho}\longrightarrow \gN(0, \mH^{-1}\mQ \mH^{-1}),
	\end{align}        
    \end{small}
	where \( \hat{\rvw}^*(S) \) denotes the ERM algorithm, \( \mH = \nabla^2 F(\rvw^*) \), \(\mQ\) is the covariance matrix of the loss gradient at \(\rvw^*\) (also called Fisher's information matrix): \( \mQ = \mathbb{E} [\nabla f(\rvw^*; z) \nabla f(\rvw^*; z)^T ] \)
	(\(\mA^T\) denotes the transpose of a matrix \(\mA\)), and \(\stackrel{\rho}\longrightarrow\) means convergence in distribution. The second-order Taylor expansion of the population risk around \(
	\rvw^*\) then allows to derive the same asymptotic law for the scaled excess risk \( 2n (F(\hat{\rvw}^*(S)) - F(\rvw^*)) \). Under suitable conditions, this asymptotic rate is usually theoretically optimal~\cite{van1989asymptotic}. For example, when \(f(\rvw;z)\) is a negative log-likelihood, this asymptotic rate matches the Hajek-LeCam asymptotic minimax lower bound~\cite{hajek1972local,le1972limits}.
	We then analyze the result in Theorem~\ref{theorem:erm}. In the proof of Theorem \ref{theorem:excess-risk-bounds}, before we use the self-bounded smoothness property \( \| \nabla f(\rvw^*;z) \|^2 \leq 4 \gamma f(\rvw^*;z) \), we get the following result for Theorem \ref{theorem:erm}
	\begin{small}
    \begin{align*}
		 F(\hat{\rvw}^*(S)) - F(\rvw^*) 
		\lesssim \frac{ \mathbb{E}[ \|\nabla f(\rvw^*;z)\|^2 ] \log(1/\delta)}{\mu n} + \frac{ M^2 \gamma^2 \log^2(n) \log^2 (1/\delta)}{ \mu^3 n^2}.    
	\end{align*}	    
	\end{small}
	Our result is the finite sample version of the asymptotic rate (\ref{eq:1}), which characterizes the critical sample size sufficient to enter this ``asymptotic regime''. This is because the excess risk error \(F(\hat{\rvw}^*(S)) - F(\rvw^*)\) can be approximated by the quadratic form \( (\hat{\rvw}^*(S) - \rvw^*)^T \mH (\hat{\rvw}^*(S) - \rvw^*) \). \( 1/\mu \) is a natural proxy for the inverse Hessian \(\mH^{-1}\), and \(\mathbb{E}[ \|\nabla f(\rvw^*;z)\|^2 ]\) is a natural proxy for Fisher's information matrix \(\mQ\).
	Furthermore, when discussing sample complexity,~\citet{xu2024towards} constructed a simple linear model to demonstrate the constant-level optimality of the sample complexity lower bound \(\Omega(d \beta^2/\mu^2)\) under such conditions. Our theorem further reveals, through the use of stability methods, that this complexity lower bound can be independent of the dimensionality~\(d\).

	\subsection{Projected Gradient Descent}

	Note that when the objective function $f$ is smooth and the empirical risk $F_S$ satisfies the PL condition, the optimization error can be ignored. However, \citet{klochkov2021stability} does not use smoothness assumption for their generalization bounds, which only derive high probability excess risk bound of order $\tilde{O}(1/n)$ after $T = O( \log (n) )$ steps. In this subsection, we provide sharper risk bound under the same iteration steps, which is because our generalization analysis also fully utilized the smooth assumptions. Here we introduce the procedure of the PGD algorithm.

	% \begin{figure}
		
		% \begin{minipage}{0.42\textwidth}
			% \begin{algorithm}[H]
				% \begin{algorithmic}[1]
					%   \STATE {\bfseries Input:} {initial point $\rvw_1 = 0$, step sizes $ \{\eta_t\}_t$ and dataset $S = \{ z_1, \dots ,z_n\}$}
					
					%   \FOR{$t=1,\dots,T$}
					%   \STATE  update $\rvw_{t+1} = \rvw_{t} - \eta_t \nabla_{\rvw} F_S(\rvw_t)$ 
					%   \ENDFOR
					%   \caption{Gradient descent}
					%   \label{algo:gd}
					% \end{algorithmic}
				% \end{algorithm}
			% \end{minipage}
		% \hspace{.1cm}
		% \begin{minipage}{0.56\textwidth}
			% \begin{algorithm}[H]
				%   \begin{algorithmic}[1]
					%   \STATE {\bfseries Input:} {initial point $\rvw_1 = 0$, step sizes $ \{\eta_t\}_t$ and dataset $S = \{ z_1, \dots ,z_n\}$}
					
					%   \FOR{$t=1,\dots,T$}
					%     \STATE draw $i_t$ from the uniform distribution over the set $[n]$
					%     \STATE update $\rvw_{t+1} = \rvw_{t} - \eta_t \nabla_{\rvw} f(\rvw_t, ; z_{i_t}) $ 
					%     \ENDFOR
					%   \caption{Stochastic gradient descent}
					%   \label{algo:sgd}
					% \end{algorithmic}
				% \end{algorithm}
			% \end{minipage}
		% \end{figure}

	Let $\rvw_1 \in \mathbb{R}^d$ be an initial point and $\{\eta_t\}_t$ be a sequence of positive step sizes. PGD updates parameters by
    \begin{small}
	\begin{align*}
		\rvw_{t+1} = \Pi_\gW \left(\rvw_t - \eta_t \nabla F_S\left(\rvw_t\right)\right),
	\end{align*}        
    \end{small}
	where $\nabla F_S(\rvw_t)$ denotes a subgradient of $F_S$ w.r.t. $ \rvw_t $ and $\Pi_\gW$ is the projection operator onto $\gW$.

	\begin{lemma}[Stability of Projected Gradient Descent~\cite{feldman2019high,hardt2016train}]
		\label{lemma:stability-of-gd}
		Suppose the objective function $f$ is $M$-Lipschitz and $\gamma$-smooth, the empirical risk $F_S$ and $F_{S^{(i)}}$ satisfy the PL condition with parameter $\mu$. Let $\rvw_t'$ be the output of $F_{S^{(i)}}(\rvw)$ on $t$-th iteration on the samples $S^{(i)} = \{z_1,...,z'_i,...,z_n \}$ in running PGD, and $\rvw_t$ be the output of $F_S(\rvw)$ on $t$-th iteration on the samples $S = \{z_1,...,z_i,...,z_n \}$ in running PGD. Let the constant step size $\eta_t = \frac{1}{\gamma}$. For any $S^{(i)}$ and $S$, there holds the following uniform stability bound of PGD:
        \begin{small}
		\begin{align*}
			\forall z \in \mathcal{Z}, \quad \left\| \nabla f(\rvw_t^i;z) - 
			\nabla f(\rvw_t;z) \right\|_2^2 \leq \frac{4 M^2 \gamma^2}{n^2 \mu^2}.
		\end{align*}            
        \end{small}
	\end{lemma}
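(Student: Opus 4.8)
The plan is to track how the distance between the two parameter trajectories $\rvw_t$ and $\rvw_t^i$ (run on $S$ and $S^{(i)}$ respectively) evolves across iterations, then convert a bound on $\|\rvw_t - \rvw_t^i\|_2$ into a bound on the gradient difference via $\gamma$-smoothness. First I would recall the standard contraction property of the gradient step: since $f$ is $\mu$-strongly convex and $\gamma$-smooth, the map $\rvw \mapsto \rvw - \eta \nabla F_S(\rvw)$ is, with the choice $\eta_t = 1/\gamma$, a contraction with factor $(1 - \mu/\gamma)$ (or $1 - \eta\mu$ in general), and the projection $\Pi_\gW$ is nonexpansive, so it does not hurt the argument. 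The datasets $S$ and $S^{(i)}$ differ in exactly one sample, so $\nabla F_S$ and $\nabla F_{S^{(i)}}$ differ only through one summand out of $n$; using $M$-Lipschitzness of $f$ (hence $\|\nabla f(\cdot;z)\|_2 \le M$), the per-step perturbation introduced by this single differing sample is at most $2\eta M / n = 2M/(\gamma n)$.

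The key recursion I would set up is
\begin{align*}
\|\rvw_{t+1} - \rvw_{t+1}^i\|_2 \le \left(1 - \frac{\mu}{\gamma}\right) \|\rvw_t - \rvw_t^i\|_2 + \frac{2M}{\gamma n},
\end{align*}
obtained by inserting and subtracting the step of $\rvw_t$ under the $S^{(i)}$ objective, bounding one term by the contraction factor and the other by the single-coordinate gradient discrepancy. Assuming a common initialization $\rvw_1 = \rvw_1^i$, unrolling this geometric recursion and summing the series $\sum_{k\ge 0}(1-\mu/\gamma)^k = \gamma/\mu$ yields
\begin{align*}
\sup_t \|\rvw_t - \rvw_t^i\|_2 \le \frac{2M}{\gamma n} \cdot \frac{\gamma}{\mu} = \frac{2M}{n\mu}.
\end{align*}
Finally, applying $\gamma$-smoothness of $f(\cdot;z)$ gives $\|\nabla f(\rvw_t^i;z) - \nabla f(\rvw_t;z)\|_2 \le \gamma \|\rvw_t - \rvw_t^i\|_2 \le \frac{2M\gamma}{n\mu}$ uniformly in $z$, which is the claimed bound.

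I expect the main obstacle to be handling the subgradient/projection subtleties cleanly: one must verify that the contraction estimate for $\rvw \mapsto \Pi_\gW(\rvw - \eta \nabla F_S(\rvw))$ holds uniformly along the actual iterates (not merely at fixed points), and that replacing $F_S$ by $F_{S^{(i)}}$ inside the same iteration is correctly accounted for in the cross term so that only one of the two terms picks up the $2M/(\gamma n)$ factor while the other gets the full contraction. If the objective is merely subdifferentiable rather than differentiable, I would instead phrase strong convexity plus smoothness as co-coercivity of the (sub)gradient operator and argue the nonexpansiveness directly; this is routine but requires care to state precisely. Everything else — unrolling the recursion, summing the geometric series, and the final smoothness step — is elementary.
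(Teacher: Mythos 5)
Your proof is correct and is essentially the argument the paper delegates to its references. The paper gives no explicit proof of this lemma; the remark following it simply cites \cite{feldman2019high} (Section 4.1.2) and \cite{hardt2016train} (Section 3.4) for the uniform argument stability bound $\|\rvw_t - \rvw_t^i\|_2 \le \frac{2M}{n\mu}$ and then applies $\gamma$-smoothness of $f(\cdot;z)$ to pass to gradient stability. Your three ingredients — the one-step contraction with factor $1 - \mu/\gamma$ for $\eta = 1/\gamma$, the per-step perturbation $2\eta M/n$ from the single differing sample, and the geometric-series unrolling — are exactly what those references do, and the final smoothness step matches the paper's remark.

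One small point of precision worth noting: the phrase ``contraction with factor $1-\eta\mu$ in general'' should carry the qualifier $\eta \le 2/(\gamma+\mu)$. The cleanest way to see the factor $1-\mu/\gamma$ for $\eta = 1/\gamma$ is to write $h(\rvw) = F_S(\rvw) - \frac{\mu}{2}\|\rvw\|_2^2$, which is convex and $(\gamma-\mu)$-smooth; then $\rvw - \eta\nabla F_S(\rvw) = (1-\eta\mu)\bigl(\rvw - \frac{\eta}{1-\eta\mu}\nabla h(\rvw)\bigr)$, and the inner map is nonexpansive whenever $\frac{\eta}{1-\eta\mu} \le \frac{2}{\gamma-\mu}$, which holds at $\eta = 1/\gamma$. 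This avoids ambiguity about which co-coercivity inequality is being invoked and makes the ``correct cross-term bookkeeping'' you flag explicit: you subtract and add $\rvw_t^i - \eta\nabla F_S(\rvw_t^i)$, apply the contraction to the first piece, and bound the second piece by $\eta\|\nabla F_S(\rvw_t^i) - \nabla F_{S^{(i)}}(\rvw_t^i)\|_2 = \frac{\eta}{n}\|\nabla f(\rvw_t^i;z_i) - \nabla f(\rvw_t^i;z_i')\|_2 \le \frac{2M}{\gamma n}$.
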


		The derivations of \citet{feldman2019high} in Section 4.1.2 (See also \cite{hardt2016train} in Section 3.4) imply that if the objective function $f$ is $\gamma$-smooth and $M$-Lipschitz, and the empirical risk $F_S$ and $F_{S^{(i)}}$ satisfy the PL condition with parameter $\mu$, then PGD with the constant step size $\eta = \frac{1}{\gamma}$ is $\left(\frac{2M}{n \mu}\right)$-uniformly argument stable for any number of steps, which means that PGD is $\left(\frac{2M\gamma}{n \mu} \right)$-uniformly-stable in gradients regardless of iteration steps.

	\begin{theorem}
		\label{theorem:gd}
		Let assumptions in Theorem \ref{theorem:excess-risk-bounds} and Lemma \ref{lemma:stability-of-erm} hold. Suppose the function $f$ is nonnegative and $\rvw^*$ belongs to the projected set $\mathcal{W}$. Let $\{\rvw_t\}_t$ be the sequence produced by PGD with $\eta_t = 1/\gamma$. Then for any $\delta \in (0,1)$, when $n \geq \frac{32\gamma^2 \log{(6/\delta)}}{\mu^2}$, with probability at least $1-\delta$, we have
        \begin{small}
		    \begin{align*}
			F(\rvw_{T+1}) -F(\rvw^{\ast})  \lesssim \left(1-\frac\mu\gamma \right)^{2T} + 
			\frac{\gamma F(\rvw^*) \log{(1/\delta)}}{ \mu n} + \frac{ M^2 \gamma^2 \log^2 n \log^2(1/\delta)}{ \mu^3 n^2}.
		\end{align*}           
        \end{small}
		Furthermore, assume $F(\rvw^*) < O(\frac{1}{n})$ and let $T \asymp \log (n)$, we have
        \begin{small}
 		\begin{align*}
			F(\rvw_{T+1}) -F(\rvw^{\ast}) \lesssim 
			 \frac{ M^2 \gamma^2 \log^2 n \log^2(1/\delta)}{ \mu^3 n^2}.
		\end{align*}        
        \end{small}
	\end{theorem}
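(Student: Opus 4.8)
The plan is to read off Theorem~\ref{theorem:gd} from the generic excess-risk estimate of Theorem~\ref{theorem:excess-risk-bounds}, feeding in the two PGD-specific facts: the gradient-stability bound of Lemma~\ref{lemma:stability-of-gd} and the linear convergence of projected gradient descent on strongly convex smooth objectives. First I would check that the hypotheses of Theorem~\ref{theorem:excess-risk-bounds} are met: since each $f(\cdot;z)$ is $\mu$-strongly convex, $\gamma$-smooth and $M$-Lipschitz, the population risk $F$ inherits $\gamma$-smoothness and $\mu$-strong convexity, hence satisfies the PL condition with parameter $\mu$, its minimizer set over $\gW$ is the singleton $\{\rvw^*\}$, and $f\ge 0$ by assumption; moreover, by Lemma~\ref{lemma:stability-of-gd}, PGD with the constant step size $\eta_t=1/\gamma$ is $\beta$-uniformly-stable in gradients with $\beta = 2M\gamma/(n\mu)$, uniformly in the number of iterations $T$. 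Applying Theorem~\ref{theorem:excess-risk-bounds} with $A(S)=\rvw_{T+1}$ then gives, for $n\ge 16\gamma^2\log(6/\delta)/\mu^2$ and with probability at least $1-\delta$,
\begin{align*}
F(\rvw_{T+1}) - F(\rvw^*) \lesssim \|\nabla F_S(\rvw_{T+1})\|_2^2 + \frac{F(\rvw^*)\log(1/\delta)}{n} + \frac{\log^2(1/\delta)}{n^2} + \beta^2\log^2 n\,\log^2(1/\delta),
\end{align*}
so it only remains to estimate the two algorithm-dependent quantities $\beta$ and $\|\nabla F_S(\rvw_{T+1})\|_2$.

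Plugging $\beta = 2M\gamma/(n\mu)$ into the last term gives $\beta^2\log^2 n\log^2(1/\delta)\lesssim \log^2 n\log^2(1/\delta)/n^2$, with $M,\gamma,\mu$ absorbed into $\lesssim$. For the optimization error, I would use the standard contraction argument for PGD: $F_S$ is $\mu$-strongly convex and $\gamma$-smooth (being an average of such $f(\cdot;z_i)$), the projection $\Pi_\gW$ is non-expansive, and $w\mapsto w-\frac1\gamma\nabla F_S(w)$ is a $(1-\mu/\gamma)$-contraction whose fixed point is the minimizer $\rvw_S^*$ of $F_S$ over $\gW$; hence $\|\rvw_{T+1}-\rvw_S^*\|_2\le (1-\mu/\gamma)^T\|\rvw_1-\rvw_S^*\|_2$, and $\|\rvw_1-\rvw_S^*\|_2$ is a constant (bounded e.g. by the diameter of $\gW$, or by $2M/\mu$ using $\|\nabla F_S\|_2\le M$). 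Passing through $\gamma$-smoothness together with $\nabla F_S(\rvw_S^*)=0$ then yields $\|\nabla F_S(\rvw_{T+1})\|_2^2\le \gamma^2(1-\mu/\gamma)^{2T}\|\rvw_1-\rvw_S^*\|_2^2\lesssim (1-\mu/\gamma)^{2T}$. Substituting both estimates and absorbing $\log^2(1/\delta)/n^2$ into $\log^2 n\log^2(1/\delta)/n^2$ gives the first displayed bound of the theorem. For the ``furthermore'' part, $F(\rvw^*)=O(1/n)$ turns $F(\rvw^*)\log(1/\delta)/n$ into $O(\log(1/\delta)/n^2)$, again absorbed; and choosing $T\asymp\log n$ with a leading constant at least $1/\log\!\big(\gamma/(\gamma-\mu)\big)$ forces $(1-\mu/\gamma)^{2T}\le n^{-2}\le \log^2 n\log^2(1/\delta)/n^2$, so the excess risk is $O(\log^2 n\log^2(1/\delta)/n^2)$.

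The main obstacle is the optimization step. One must be careful that the per-iteration contraction factor of PGD with $\eta=1/\gamma$ is exactly $(1-\mu/\gamma)$, so that $T$ iterations produce the factor $(1-\mu/\gamma)^{2T}$ after squaring and passing from iterates to gradient norm; and one must deal with the fact that the identity $\nabla F_S(\rvw_S^*)=0$ used above holds only when the constrained minimizer lies in the interior of $\gW$ (e.g., $\gW=\R^d$, as is natural in this strongly convex setting) — otherwise one has to route the argument through the gradient mapping $\gamma(\rvw_t-\rvw_{t+1})$ and the PL inequality for $F$ instead. All the remaining work is routine bookkeeping of the constants suppressed by $\lesssim$ and checking that none of the error terms exceeds $O(\log^2 n\log^2(1/\delta)/n^2)$.
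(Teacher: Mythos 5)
Your proposal follows the same high-level plan as the paper -- plug the PGD gradient-stability parameter $\beta = 2M\gamma/(n\mu)$ from Lemma~\ref{lemma:stability-of-gd} and an estimate of $\|\nabla F_S(\rvw_{T+1})\|_2$ into Theorem~\ref{theorem:excess-risk-bounds} -- but you bound the optimization term via a genuinely different route. The paper iterates the function-value contraction $F_S(\rvw_{t+1})-F_S(\hat\rvw^*)\le(1-\mu/\gamma)\big(F_S(\rvw_t)-F_S(\hat\rvw^*)\big)$ and then invokes $\gamma$-smoothness to convert to $\|\nabla F_S(\rvw_{T+1})\|_2^2 = O\big((1-\mu/\gamma)^T\big)$; note that this actually only justifies a factor $(1-\mu/\gamma)^T$ after substitution into Theorem~\ref{theorem:excess-risk-bounds}, not the $(1-\mu/\gamma)^{2T}$ appearing in the statement (the paper's inline remark $\|\nabla F_S(\rvw_{T+1})\|_2 = O\big((1-\mu/\gamma)^T\big)$ does not follow from its own display). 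You instead run the contraction on the iterates: $\|\rvw_{T+1}-\rvw_S^*\|_2\le(1-\mu/\gamma)^T\|\rvw_1-\rvw_S^*\|_2$, then smoothness together with $\nabla F_S(\rvw_S^*)=0$ gives $\|\nabla F_S(\rvw_{T+1})\|_2^2\lesssim(1-\mu/\gamma)^{2T}$. Your claimed per-step factor $1-\mu/\gamma$ for the map $w\mapsto w-\tfrac1\gamma\nabla F_S(w)$ on a $\mu$-strongly-convex, $\gamma$-smooth $F_S$ is correct: the co-coercivity inequality together with the lower bound $\|\nabla F_S(w)-\nabla F_S(w')\|_2\ge\mu\|w-w'\|_2$ yields exactly $\|w_{+}-w'_{+}\|_2^2\le(1-\mu/\gamma)^2\|w-w'\|_2^2$, and the projection $\Pi_\gW$ is non-expansive. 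Thus your route actually delivers the exponent $2T$ asserted in the theorem more faithfully than the paper's own proof does, at the cost of needing $\nabla F_S(\rvw_S^*)=0$ (equivalently, an unconstrained or interior minimizer), a caveat you correctly flag and which the paper's function-value route also implicitly requires through the two-sided gradient-dominance inequalities. The ``Furthermore'' step and the stability bookkeeping are handled identically in both; since $T\asymp\log n$ absorbs the leading constant, the $T$ vs.\ $2T$ discrepancy is immaterial for the final $O\big(\log^2 n\,\log^2(1/\delta)/n^2\big)$ rate.
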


	\begin{remark} \rm{}
		Theorem \ref{theorem:gd} shows that under the same assumptions as \citet{klochkov2021stability}, our bound is $O\left( \frac{F(\rvw^*) \log(1/\delta)}{n} + \frac{\log^2 (n) \log^2 (1/\delta)}{n^2} \right)$. Comparing with their bound $O\left( \frac{\log (n) \log (1/\delta)}{n}\right)$, we are sharper because $F(\rvw^*)$ is the minimal population risk, which is a common assumption towards sharper risk bounds \citep{srebro2010optimistic,lei2020fine,liu2018fast,zhang2017empirical,zhang2019stochastic}. We use this assumption to demonstrate that under low-noise conditions, our bounds can achieve the tightest possible rate of $\tilde{O}(1/n^2)$.
        This is because we also fully utilized the properties of smooth in the process of analyzing generalization errors. Considering that smooth is a very common assumption in optimization communities, our work can better utilize existing optimization work and obtain better excess risk bounds under the same assumptions.
	\end{remark}

	\subsection{Stochastic Gradient Descent}
	
	Stochastic gradient descent optimization algorithm has been widely used in machine learning due to its simplicity in implementation, low memory requirement and low computational complexity per iteration, as well as good practical behavior. We provide the excess risk bounds for SGD using our method in this subsection. Here we introduce the procedure of the standard SGD algorithm.

	Let $\rvw_1 \in \mathbb{R}^d$ be an initial point and $\{\eta_t\}_t$ be a sequence of positive step sizes. SGD updates parameters by
    \begin{small}
	\begin{align*}
		\rvw_{t+1} = \Pi_\gW \left(\rvw_t - \eta_t \nabla f\left(\rvw_t; z_{i_t}\right)\right),
	\end{align*}        
    \end{small}
	where $\nabla f(\rvw_t; z_{i_t})$ denotes a subgradient of $f$ w.r.t. $ \rvw_t $ and $i_t$ is independently drawn from the uniform distribution over $[n] := \{1,2,\ldots,n\}$.

	\begin{lemma}[Stability of SGD]
		\label{lemma:stability-of-sgd}
		Suppose the objective function $f$ is $M$-Lipschitz and $\gamma$-smooth, the empirical risk $F_S$ and $F_{S^{(i)}}$ satisfy the PL condition with parameter $\mu$. Let $\rvw_t^i$ be the output of $F_{S^{(i)}} (\rvw)$ on $t$-th iteration on the samples $S^{(i)} = \{z_1,...,z'_i,...,z_n \}$ in running SGD with $\eta_t = \frac{2t+1}{2\mu (t+1)^2}$ and $\rvw_t$ be the output of $F_{S} (\rvw)$ on $t$-th iteration on the samples $S = \{z_1,...,z_i,...,z_n \}$ in running SGD with $\eta_t = \frac{2t+1}{2\mu (t+1)^2}$. For any $S^{(i)}$ and $S$ and any $z \in \mathcal{Z}$, there holds the following uniform stability bound of SGD:
        \begin{small}
		\begin{align*}
			 \E_A \left[ \left\|\nabla f(\rvw_t;z) - \nabla f(\rvw_t^i;z) \right\|_2^2 \right] 
			 \leq   \frac{6M^2 \gamma^3}{t \mu^3} + \frac{48 M^2 \gamma^2}{n^2\mu^2}.
		\end{align*}           
        \end{small}
		% where $\epsilon_{opt}(\rvw_t)=F_S(\rvw_t) - F_S (\hat{\rvw}^{\ast}(S))$ and $\hat{\rvw}^{\ast}(S)$ is the ERM of $F_{S} (\rvw)$.
	\end{lemma}

	Next, we introduce a necessary assumption in stochastic optimization theory.

	\begin{assumption}
		\label{assumption:SGD}
		Assume the existence of $\sigma > 0$ satisfying
        \begin{small}
		\begin{align}
			\E_{i_t} [ \| \nabla f(\rvw_t;z_{i_t}) - \nabla F_S(\rvw_t) \|_2^2 ] \leq \sigma^2, \quad \forall t \in \mathbb{N},
		\end{align}            
        \end{small}
		where $\E_{i_t}$ denotes the expectation w.r.t. $i_t$.
	\end{assumption}

	\begin{remark} \rm{}
		Assumption \ref{assumption:SGD} is a standard assumption from the stochastic optimization theory~\citep{nemirovski2009robust,ghadimi2013stochastic,ghadimi2016mini,kuzborskij2018data,zhou2018generalization,bottou2018optimization,lei2021learning}, which essentially bounds the variance of the stochastic gradients for dataset $S$.
	\end{remark}

	\begin{theorem}
		\label{theorem:sgd-2}
		Let Assumptions in Theorem \ref{theorem:generalization-via-stability-in-gradients-sharper} and Lemma \ref{lemma:stability-of-sgd} hold. Suppose Assumption \ref{assumption:SGD} holds and the function $f$ is nonnegative. Let $\{\rvw_t\}_t$ be the sequence produced by SGD with $\eta_t = \frac{2t+1}{2\mu (t+1)^2}$. Then for any $\delta > 0$, when $n \geq \frac{32\gamma^2 \log{(6/\delta)}}{\mu^2}$, with probability at least $1-\delta$, we have
        \begin{small}
			\begin{align*}
		     \E_A \left[ F(\rvw_{T+1}) \right] - F(\rvw^{\ast})  
            \lesssim \frac{\gamma F(\rvw^{\ast}) \log(1/\delta)}{\mu n}  + \left( \frac{\gamma}{T\mu} + \frac{1}{n^2} \right) \frac{ M^2 \gamma^2 \log^2 n \log^2(1/\delta)}{\mu^3} . 
		\end{align*}              
        \end{small}
		Furthermore, assume $T\asymp n^2$ and $F(\rvw^*) < O(\frac{1}{n})$, we have
        \begin{small}
				\begin{align*}
			F(\rvw_{T+1}) -F(\rvw^{\ast}) \lesssim  \frac{ M^2 \gamma^3 \log^2 n \log^2(1/\delta)}{ \mu^4 n^2}.
		\end{align*}
		          
        \end{small}
	\end{theorem}

	\begin{remark} \rm{}
		Theorem \ref{theorem:sgd-2} implies that high probability risk bounds for SGD optimization algorithm can be up to $\tilde{O}(1/n^2)$ and the rate is dimension-free in high-dimensional learning problems.
		We compare Theorem \ref{theorem:sgd-2} with most related work. For algorithmic stability, high probability risk bounds in \citet{fan2024high} is up to $\tilde{O}(1/n)$ when choosing optimal iterate number $T$ for SGD optimization algorithm. To the best of knowledge, we are faster than all the existing bounds. 
		The best high probability risk bounds $\tilde{O}(1/n^2)$ are given by \cite{li2021improved} via uniform convergence, which require the same iterate number $T \asymp n^2$ and the sample number $n = \Omega( \gamma^2 d/\mu^2)$ depending on dimension~$d$.
		
		Although a considerable amount of theoretical (eg: \cite{lei2021generalization}) and practical (eg: \cite{pillaud2018statistical}) evidence indicates that multi-pass SGD can enhance the model's generalization performance. Someone may only utilize one-pass SGD in practice due to the large volume of data, which means that $T \asymp n$.
		In fact, when $T \asymp n $ and $F(\rvw^*) = O(1)$, our bound is $\tilde{O}(1/n)$, which is also the sharpest hight probability bound under $T \asymp n$ iterations. For comparison, there are two results in stability analysis that are similar to ours. One is the $ \tilde{O}( 1/n ) $ result when $T \asymp n $~\cite{lei2020fine}, but it pertains to the expected version and also needs $F(\rvw^*) = 0$. The high-probability version is significantly more challenging. Currently, the best result under the high-probability version is also $\tilde{O}(1/n)$~\cite{li2021improved}, but~\cite{li2021improved} requires $T \asymp n^2$ iterations.
	\end{remark}

\section{Conclusion}
\label{sec:conclusion}
	
	In this paper, we derive sharper generalization bounds in gradients in nonconvex problems, which can further be used to obtain sharper high probability excess risk bounds for stable optimization algorithms. In application, we study three common algorithms: ERM, PGD, SGD. To the best of our knowledge, we provide the sharpest high probability dimension independent $\tilde{O}(1/n^2)$-type for these algorithms.

\begin{ack}
This research was supported by National Key Research and Development Program of China (NO. 2024YFE0203200), National Natural Science Foundation of China (No.62476277), CCF-ALIMAMA TECH Kangaroo Fund(No.CCF-ALIMAMA OF 2024008), and Huawei-Renmin University joint program on Information Retrieval. We also acknowledge the support provided by the fund for building worldclass universities (disciplines) of Renmin University of China and by the funds from Beijing Key Laboratory of Big Data Management and Analysis Methods, Gaoling School of Artificial Intelligence, Renmin University of China, from Engineering Research Center of Next-Generation Intelligent Search and Recommendation, Ministry of Education, from Intelligent Social Governance Interdisciplinary Platform, Major Innovation $\&$ Planning Interdisciplinary Platform for the “DoubleFirst Class” Initiative, Renmin University of China, from Public Policy and Decision-making Research Lab of Renmin University of China, and from Public Computing Cloud, Renmin University of China.
\end{ack}

{
\small

\setlength{\bibsep}{0.06cm}
\bibliographystyle{abbrvnat}
\bibliography{neurips_2025}

}

\newpage
\appendix

\section{Additional definitions and lemmata}

	\begin{lemma}[Equivalence of tails and moments for random vectors~\citep{bassily2020stability}]
		\label{lemma:equivalence-of-tail-and-moment}
		Let $X$ be a random variable with
		\begin{align*}
			\| X \|_p  \leq \sqrt{p} a + p b
		\end{align*}
		for some $a,b \geq 0$ and for any $p \geq 2$. Then for any $\delta \in (0,1)$ we have, with probability at least $1-\delta$,
		\begin{align*}
			|  X |  \leq e \left( a \sqrt{\log{\left(\frac{e}{\delta}\right)}} + b \log{\frac{e}{\delta}} \right).
		\end{align*}
	\end{lemma}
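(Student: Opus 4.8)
The plan is the classical Markov-inequality-plus-optimization-over-$p$ argument. First I would apply Markov's inequality to $|X|^{p}$: for every $t>0$ and every $p\ge 2$,
\begin{align*}
  \Pr[\,|X|\ge t\,] \;\le\; \frac{\E[|X|^{p}]}{t^{p}} \;=\; \left(\frac{\|X\|_{p}}{t}\right)^{p} \;\le\; \left(\frac{\sqrt{p}\,a + p\,b}{t}\right)^{p}.
\end{align*}
The goal is then to choose $p$ and $t$ so that the right-hand side is at most $\delta$ while $t$ coincides with the claimed bound. Writing $u := \log(e/\delta) = 1 + \log(1/\delta)\ge 1$, the natural choice is $t = e\,(a\sqrt{u} + b\,u)$ together with $p = u$, which formally yields $\big((\sqrt{u}\,a + u\,b)/t\big)^{u} = e^{-u} = \delta/e \le \delta$. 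The only obstruction is that this requires $p = u \ge 2$, that is, $\delta \le e^{-1}$.

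So I would split into two cases. \textbf{Case 1 ($u\ge 2$).} Take $p = u$ and $t = e\,(a\sqrt{u} + b\,u)$; the display above gives $\Pr[\,|X|\ge t\,]\le e^{-u}\le \delta$, which is exactly the claim. \textbf{Case 2 ($1\le u<2$, equivalently $\delta > e^{-1}$).} Keep the same target $t = e\,(a\sqrt{u}+b\,u)$ but now use the exponent $p=2$, so that $\Pr[\,|X|\ge t\,]\le \big((\sqrt{2}\,a + 2b)/t\big)^{2}$. It then suffices to show $t \ge (\sqrt{2}\,a + 2b)/\sqrt{\delta}$, and for that it is enough to check the two scalar inequalities $e\sqrt{\delta u}\ge \sqrt{2}$ and $e\sqrt{\delta}\,u\ge 2$ for $\delta\in(e^{-1},1)$. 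Each is a one-variable inequality: the first is $\delta(1+\log(1/\delta))\ge 2/e^{2}$, whose left side is increasing in $\delta$ and already equals $2/e>2/e^{2}$ at $\delta=e^{-1}$; the second, $e^{2}\delta u^{2}\ge 4$, holds since $\delta u^{2}=\delta(1+\log(1/\delta))^{2}\ge 1$ on that interval while $e^{2}>4$. Hence $\Pr[\,|X|\ge t\,]\le\delta$, i.e.\ with probability at least $1-\delta$ we obtain $|X| < e\big(a\sqrt{\log(e/\delta)} + b\log(e/\delta)\big)$.

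The main obstacle — modest as it is — is the bookkeeping in Case 2: one has to confirm that the slack already present in the leading constant $e$ is enough to absorb the loss incurred by being forced to use the exponent $p=2$ rather than the optimal $p=u<2$. Beyond that, the argument is just Markov's inequality combined with the monotonicity of $x\mapsto (c/x)^{p}$, so no new ideas are required. Since the statement is quoted from \cite{bassily2020stability}, one may alternatively simply invoke their proof.
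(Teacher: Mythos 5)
The paper cites this lemma from \cite{bassily2020stability} and does not reproduce a proof, so there is nothing internal to compare against; the appropriate benchmark is simply whether your self-contained argument is sound, and it is. The Markov-plus-optimize-over-$p$ strategy is exactly the standard route (and the one used in the cited source): with $u=\log(e/\delta)$ and $t=e(a\sqrt{u}+bu)$, taking $p=u$ yields $\Pr[|X|\ge t]\le e^{-u}=\delta/e\le\delta$ whenever $u\ge 2$, and for $u\in[1,2)$ you correctly fall back to $p=2$ and absorb the mismatch into the leading constant $e$. Your two scalar checks are both valid: $g(\delta)=\delta(1+\log(1/\delta))$ is increasing on $(0,1)$ with $g(e^{-1})=2/e>2/e^{2}$, and $h(\delta)=\delta(1+\log(1/\delta))^{2}$ is decreasing on $(e^{-1},1)$ with $h(1)=1$, so $h\ge 1$ there and $e^{2}h(\delta)>4$. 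The only thing I would add is that the second check deserves the one extra line of justification (monotonicity of $h$ and its value at $\delta=1$) that you supplied for the first; as written you merely assert $\delta u^{2}\ge 1$. With that small addition the argument is complete and correct.
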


	\begin{lemma}[Vector Bernstein's inequality \citep{pinelis1994optimum,smale2007learning}]
		\label{lemma:vector-bernstein-ineq}
		Let $\{ X_i \}_{i=1}^n$ be a sequence of i.i.d. random variables taking values in a real separable Hilbert space. Assume that $\E[X_i] = \mu$, $\E[\|X_i - \mu \|^2] = \sigma^2$, and $\| X_i \| \leq M$, $\forall 1\leq i \leq n$, 
		then for all $\delta \in (0,1)$, with probability at least $1-\delta$ we have
		\begin{align*}
			\left\| \frac1n \sum_{i=1}^{n} X_i - \mu \right\| \leq \sqrt{\frac{2\sigma^2 \log(\frac2\delta)}{n}} + \frac{M\log{\frac{2}{\delta}}}{n}. 
		\end{align*}
	\end{lemma}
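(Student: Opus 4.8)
\emph{Proof idea.} This is the classical Bernstein inequality for sums of independent Hilbert-space-valued random variables \citep{pinelis1994optimum,smale2007learning}; I sketch a reconstruction. The plan is to (i) reduce to a Bennett/Bernstein \emph{tail} bound for the centred sum, and (ii) invert that tail. Put $\xi_i := X_i - \mu$, so that $\frac1n\sum_{i=1}^n X_i - \mu = \frac1n\sum_{i=1}^n \xi_i$; the $\xi_i$ are i.i.d.\ with $\E[\xi_i]=0$, $\E[\|\xi_i\|^2]=\sigma^2$, and $\|\xi_i\|\le 2M$ almost surely (since $\|\mu\|\le \E\|X_1\|\le M$). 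The obvious attempt --- write $\|\sum_i\xi_i\| = \sup_{\|v\|\le 1}\sum_i\langle v,\xi_i\rangle$, apply the scalar Bernstein inequality for each fixed unit $v$ (using $|\langle v,\xi_i\rangle|\le\|\xi_i\|$ and $\E\langle v,\xi_i\rangle^2\le\E\|\xi_i\|^2=\sigma^2$), and then union-bound over an $\varepsilon$-net of the unit ball of $\gH$ --- fails here, because the unit ball of an infinite-dimensional separable Hilbert space admits no finite net and the estimate would not be dimension-free. So I would instead run the exponential-supermartingale argument directly in $\gH$.

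\emph{Core estimate.} Let $S_k:=\sum_{i=1}^k\xi_i$ and $\mathcal{F}_k := \sigma(\xi_1,\dots,\xi_k)$. The structural input is the Hilbert identity $\|a+b\|^2 = \|a\|^2 + 2\langle a,b\rangle + \|b\|^2$, i.e.\ the $2$-uniform smoothness of $\|\cdot\|^2$ with constant $1$. Using it one checks that, for a suitable smooth convex functional of the norm --- for instance $\Psi_\lambda(x)=\cosh(\lambda\|x\|)$, or $\exp\bigl(\lambda\sqrt{\|x\|^2+c_k}\bigr)$ with $c_k$ accumulating the conditional second moments --- the process $k\mapsto \Psi_\lambda(S_k)\exp\bigl(-k\sigma^2\psi(\lambda)\bigr)$ is a supermartingale with respect to $(\mathcal{F}_k)$, where $\psi(\lambda)=\bigl(e^{2M\lambda}-1-2M\lambda\bigr)/(2M)^2$ is the Bennett exponent for summands bounded by $2M$; this is exactly Pinelis' martingale inequality in $2$-smooth spaces \citep{pinelis1994optimum}. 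Taking $k=n$, applying Markov's inequality to $\Psi_\lambda(S_n)$, and optimising over $\lambda>0$ as in the scalar Bennett/Bernstein computation yields a tail bound of the form
\begin{align*}
   \Pr\bigl[\,\|S_n\|\ge t\,\bigr] \;\le\; 2\exp\!\left(-\frac{t^2}{2\bigl(n\sigma^2 + \tfrac{2M}{3}t\bigr)}\right), \qquad t>0.
\end{align*}
I expect this to be the main obstacle: choosing the right functional $\Psi_\lambda$ and doing the conditional-variance bookkeeping so the supermartingale property holds with the Bennett exponent. Everything else is routine.

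\emph{Inversion.} Setting the right-hand side equal to $\delta$ leads to the quadratic $t^2 - \tfrac{4M}{3}\log\tfrac2\delta\cdot t - 2n\sigma^2\log\tfrac2\delta = 0$, whose positive root $t_0$ obeys $t_0 \le \sqrt{2n\sigma^2\log\tfrac2\delta} + \tfrac{4M}{3}\log\tfrac2\delta$ by $\sqrt{u+v}\le\sqrt u + \sqrt v$; hence with probability at least $1-\delta$ we have $\|S_n\| \le \sqrt{2n\sigma^2\log\tfrac2\delta} + \tfrac{4M}{3}\log\tfrac2\delta$, and dividing by $n$ gives the stated bound --- the absolute constant in the $M\log(2/\delta)/n$ term being pinned down by the optimal constant of \citep{pinelis1994optimum} (equivalently, the form recorded in \citep{smale2007learning}). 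As an alternative to the supermartingale step, one can bound the moments $\| \| S_n \| \|_p$ directly via a Rosenthal/Marcinkiewicz--Zygmund inequality in $\gH$ --- the tool developed in the proof of Theorem~\ref{theorem:moment-bound-for-sums} --- and convert moments into a tail bound via Lemma~\ref{lemma:equivalence-of-tail-and-moment}, obtaining a bound of the same shape with somewhat larger absolute constants.
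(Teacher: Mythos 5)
The paper never proves this lemma: it is imported verbatim from \citet{pinelis1994optimum} and \citet{smale2007learning}, so there is no in-paper argument to match your reconstruction against. Your sketch follows the classical route — center to $\xi_i = X_i-\mu$ with $\|\xi_i\|\le 2M$, note correctly that an $\varepsilon$-net/union-bound argument is not available (and would not be dimension-free) in an infinite-dimensional Hilbert space, run Pinelis' exponential-supermartingale argument using $2$-smoothness of $\|\cdot\|^2$ to get a Bennett/Bernstein tail $2\exp\bigl(-t^2/(2(n\sigma^2+\tfrac{2M}{3}\cdot 2\cdot t/2))\bigr)$, and invert the quadratic — and this is indeed how the cited results are obtained, so the outline is sound. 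The one caveat is quantitative: with the crude bound $\|\xi_i\|\le 2M$ your own computation lands on $\sqrt{2\sigma^2\log(2/\delta)/n}+\tfrac{4M}{3}\log(2/\delta)/n$, which has a worse constant on the second term than the stated $M\log(2/\delta)/n$; you acknowledge this and defer to the optimal constants in Pinelis, but as written the sketch does not actually reach the lemma's stated constant, so either quote the sharper Pinelis/Smale--Zhou form directly or accept the $4/3$ factor (which is harmless where the lemma is used in the paper, e.g.\ in the proof of Theorem~\ref{theorem:generalization-via-stability-in-gradients-sharper}, since all downstream bounds are stated up to constants). Your alternative of bounding $\|\|S_n\|\|_p$ via the paper's Marcinkiewicz--Zygmund machinery (Lemma~\ref{lemma:marcinkiewicz-zygmund-inequality}) and converting with Lemma~\ref{lemma:equivalence-of-tail-and-moment} also works and is more self-contained with respect to this paper's toolkit, at the price of still larger constants.
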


	% \begin{lemma}[Vector Bernstein's inequality \cite{pinelis1994optimum,smale2007learning}]
		% \label{lemma:vector-bernstein-ineq}
		% Let $\{ X_i \}_{i=1}^n$ be a sequence of i.i.d. random variables taking values in a real separable Hilbert space. Assume that $\E[X_i] = \mu$, $\E[\|X_i - \mu \|^2] = \sigma^2$, $\forall 1\leq i \leq n$, we say that vector Bernstein's condition with parameter B holds if for all $1 \leq i \leq n$,
		% \begin{align}
			%     \E[\|X_i - \mu \|^k] \leq \frac12 k! \sigma^2 B^{k-2}, 
			%     \quad \forall 2 \leq k \leq n.
			% \end{align}
		% If this condition holds, then for all $\delta \in (0,1)$, with probability at least $1-\delta$ we have
		% \begin{align}
			%     \left\| \frac1n \sum_{i=1}^{n} X_i - \mu \right\| \leq \sqrt{\frac{2\sigma^2 \log(\frac2\delta)}{n}} + \frac{B\log{\frac{2}{\delta}}}{n}. 
			% \end{align}
		% \end{lemma}

	\begin{definition}[Weakly self-Bounded Function]
		Assume that $a,b>0$. A function $f: \mathcal{Z}^n \mapsto [0, +\infty)$ is said to be $(a,b)$-weakly self-bounded if there exist functions $f_i: \mathcal{Z}^{n-1} \mapsto [0, +\infty)$ that satisfies for all $Z^n \in \mathcal{Z}^n$, 
		\begin{align*}
			\sum_{i=1}^n (f_i(Z^n) - f(Z^n))^2 \leq a f(Z^n) + b.
		\end{align*}
	\end{definition}

	\begin{lemma} [\citep{klochkov2021stability}]
		\label{lemma:self-bounded}
		Suppose that $z_1, \ldots, z_n$ are independent random variables and the function $f: \mathcal{Z}^n \mapsto [0,+\infty)$ is $(a,b)$-weakly self-bounded and the corresponding function $f_i$ satisfy $f_i (Z^n) \geq f(Z^n)$ for $\forall i \in [n]$ and any $Z^n \in \mathcal{Z}^n$. Then, for any $t > 0$,
		\begin{align*}
			Pr(\E f(z_1, \ldots, z_n) \geq f(z_1, \ldots, z_n) + t ) \leq \exp\left(
			-\frac{t^2}{2a \E f(z_1, \ldots, z_n) + 2b}
			\right).
		\end{align*}
	\end{lemma}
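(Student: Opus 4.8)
The plan is to run the entropy method (Herbst's argument) for weakly self-bounded functions, in the style of Boucheron--Lugosi--Massart; the statement is precisely the lower-tail companion of the self-bounding concentration inequalities, specialized to the $(a,b)$ parametrization. Write $Z = f(z_1,\dots,z_n)$ and $Z_i = f_i(z_1,\dots,z_n)$, and work throughout with a parameter $\lambda < 0$, which is the range relevant for bounding the lower tail $\Pr(\E Z - Z \ge t)$ via a Chernoff argument with negative exponent.

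First I would establish a modified logarithmic Sobolev inequality. Applying the sub-additivity (tensorization) of entropy to the nonnegative variable $e^{\lambda Z}$ reduces matters to controlling the conditional entropies $\mathrm{Ent}_i(e^{\lambda Z})$ taken over $z_i$ alone. Since $Z_i$ is constant once $z_i$ is frozen, I would use the elementary variational bound $\mathrm{Ent}_i(U) \le \E_i[U\log(U/c)] - \E_i[U] + c$, valid for any $c > 0$ (a restatement of $y\log y - y + 1 \ge 0$), with the choice $c = e^{\lambda Z_i}$. This yields
\[
\mathrm{Ent}\big(e^{\lambda Z}\big) \;\le\; \sum_{i=1}^n \E\Big[e^{\lambda Z}\,\psi\big(-\lambda (Z - Z_i)\big)\Big], \qquad \psi(u) := e^u - u - 1 .
\]
Because the hypothesis $f_i \ge f$ forces $Z - Z_i \le 0$ and $\lambda < 0$, the argument $-\lambda(Z - Z_i)$ is nonpositive, where $\psi(u) \le u^2/2$; combined with the $(a,b)$-weak self-bounding bound $\sum_i (Z_i - Z)^2 \le aZ + b$, this gives $\mathrm{Ent}(e^{\lambda Z}) \le \tfrac{\lambda^2}{2}\,\E[e^{\lambda Z}(aZ+b)]$.

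Next I would turn this into a differential inequality. Writing $G(\lambda) = \log \E[e^{\lambda(Z - \E Z)}]$, so that $G(0) = G'(0) = 0$, and expressing $\E[e^{\lambda Z}]$, $\E[Z e^{\lambda Z}]$ and $\mathrm{Ent}(e^{\lambda Z})$ through $G$ and $G'$, the previous bound becomes $(\lambda - \tfrac{a\lambda^2}{2})G'(\lambda) - G(\lambda) \le \tfrac{\lambda^2}{2}(a\E Z + b)$. Dividing by $\lambda^2 > 0$, the left-hand side is exactly the derivative of $K(\lambda) := G(\lambda)/\lambda - \tfrac{a}{2} G(\lambda)$, so $K' \le \tfrac12(a\E Z + b)$ with $K(0) = 0$; integrating on $[\lambda, 0]$ and solving for $G$ gives $G(\lambda) \le \lambda^2(a\E Z + b)/(2 - a\lambda) \le \tfrac{\lambda^2}{2}(a\E Z + b)$ for $\lambda < 0$. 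A Chernoff bound then yields $\Pr(\E Z - Z \ge t) \le \exp\big(\lambda t + \tfrac{\lambda^2}{2}(a\E Z + b)\big)$, and optimizing at $\lambda = -t/(a\E Z + b)$ produces $\exp\big(-t^2/(2a\E Z + 2b)\big)$, which is the claim (with $\E Z = \E f$).

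The main obstacle is the sign bookkeeping in the differential-inequality step while $\lambda < 0$: one must verify that $\lambda - \tfrac{a\lambda^2}{2}$ has the right sign so that the stray $\tfrac{a\lambda^2}{2}G'$ term can be absorbed on the left, and that dividing by $(2 - a\lambda)/(2\lambda) < 0$ reverses the inequality in the direction needed to \emph{upper}-bound $G$. The other point requiring care is the choice $c = e^{\lambda Z_i}$ in the conditional-entropy bound, which is precisely what makes the $\psi$-term appear and lets the $f_i \ge f$ hypothesis be exploited; everything after that is routine calculus.
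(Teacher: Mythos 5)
Your proof is correct, and it is the standard entropy-method (Herbst) argument for $(a,b)$-weakly self-bounded functions. The paper does not prove this lemma at all --- it is imported verbatim from \cite{klochkov2021stability}, who in turn rely on the self-bounding concentration machinery of Boucheron--Lugosi--Massart and Maurer --- so there is no ``paper proof'' to compare against; your reconstruction matches the argument in that literature (tensorize entropy, use the variational bound with the tilting $c=e^{\lambda Z_i}$, bound $\psi$ by $u^2/2$ on $u\le 0$ using $f_i\ge f$ and $\lambda<0$, integrate the resulting ODE for $K(\lambda)=G(\lambda)/\lambda-\tfrac a2 G(\lambda)$, then Chernoff). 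I verified the sign bookkeeping you flagged: with $\lambda<0$ one gets $K(\lambda)\ge \tfrac{\lambda}{2}(a\E Z+b)$ after integration, and since $\tfrac{2-a\lambda}{2\lambda}<0$ this does flip to the upper bound $G(\lambda)\le \lambda^2(a\E Z+b)/(2-a\lambda)\le \tfrac{\lambda^2}{2}(a\E Z+b)$, and the optimizer $\lambda=-t/(a\E Z+b)$ is indeed negative. One small wording slip: you write that ``$Z_i$ is constant once $z_i$ is frozen,'' but in the tensorization step one conditions on the \emph{other} coordinates $z_{(i)}=(z_j)_{j\neq i}$ and varies $z_i$; the point is rather that $f_i$ does not take $z_i$ as an argument at all, so $Z_i$ is deterministic under that conditioning, which is exactly what licenses the choice $c=e^{\lambda Z_i}$.
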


	% \begin{lemma}[Hoeffding type inequality for bounded random vector~\cite{pinelis1994optimum}]
		% \label{lemma:hoeffding-type-inequality-for-random-vector}
		%     Suppose that $\gH$ is a separable Hilbert space and  $\rmX_1, \ldots, \rmX_n$ are independent zero mean $\gH$-valued random elements such that $\|  \rmX_i \|  \leq \beta$ for each $1\leq i \leq n$, where $\|  \cdot \| $ is the norm of $\gH$. There exists an absolute constant $c$ such that for any $\delta \in (0,1)$ with probability at least $1-\delta$,
		%     \begin{align*}
			%         \left\|  \sum_{i=1}^n \rmX_i \right\|  \leq c \beta \sqrt{n\log{\left(\frac{1}{\delta}\right)}}
			%     \end{align*}
		% \end{lemma}
	
	% \begin{remark}
		%     Note that On one hand, any Hilbert space is $2$-smooth, with the $2$-smoothness constant $D=1$ Banach space. On the other hand, the consecutive sums of independent zero-mean random vectors constitute a martingale. In consequence, Lemma \ref{lemma:hoeffding-type-inequality-for-random-vector} is a special case of Theorem 3.5 in~\cite{pinelis1994optimum}.
		% \end{remark}
	
	\begin{definition}
		A Rademacher random variable is a Bernoulli variable that takes values $ \pm 1$ with probability $\frac{1}{2}$ each.
	\end{definition}

	\section{Summary of Our High Probability Excess Risk Bounds.}
	
	Our high probability excess risk bounds can be summarized in Table~\ref{table:summary}.

	\section{Proofs of Section \ref{sec:stability-and-generalization}}

	\subsection{An Improved Moment Bound for Sums of Vector-valued Functions}

	In this section, we present our sharper moment bound for sums of vector-valued functions of $n$ independent variables, which is constant-level improvement comparing with \cite{fan2024high}. With this inequality, we further prove the connection between algorithmic stability w.r.t. gradient and generalization.   Notably, our bound relies on the ``bounded difference'' property, which is similar to uniform stability in gradient. Thus the bound can be further applied in deriving generalization bound on gradient, under stability condition.

	\begin{lemma}
		\label{theorem:moment-bound-for-sums}
		Let $\rmZ = (Z_1, \ldots, Z_n)$ be a vector of independent random variables each taking values in $\mathcal{Z}$, $A$ is a  independent random variable taking values in $\mathcal{A}$ and let $\rvg_1, \ldots, \rvg_n$ be some functions: $\rvg_i: \mathcal{Z}^n \times \mathcal{A} \mapsto \mathcal{H}$ such that the following holds for any $i \in [n]$:
		\begin{itemize}[leftmargin=*]
			\item $\| \E[ \rvg_i (\rmZ) | Z_i] \| \leq G$ a.s. 
			% \text{and} $ \E \left[\rvg_i(\rmZ)| Z_{[n] \backslash \{ i \} } \right] = 0$ a.s.,
			
			\item $ \E \left[\rvg_i(\rmZ)| Z_{[n] \backslash \{ i \} } \right] = 0$ a.s.,
			
			\item $\rvg_i$ satisfies the bounded difference property with $\beta$, namely, for any $i=1,\ldots,n$, the following inequality holds 
			\begin{equation}
				\begin{aligned}
					\label{eq:main-theorem-ineq}
					\sup_{z_1, \ldots, z_n, z'_j} &  \E_A \left[ \| \rvg_i(z_1, \ldots, z_{j-1}, z_j, z_{j+1}, \ldots , z_n)  \right. \\ 
					& \left. - \rvg_i(z_1, \ldots, z_{j-1}, z_j', z_{j+1}, \ldots , z_n) \| \right] \leq \beta.
				\end{aligned}                        
			\end{equation}
			
		\end{itemize}
		Then, for any $p \geq 2$, we have 
		\begin{align*}
			\left\|\left\|  \sum_{i=1}^n \rvg_i \right\| \right\|_p 
			\leq & 4 \times 2^{\frac{1}{2p}} \left(\sqrt{\frac{p}{e}}\right) (\sqrt{2p} + 1) n \beta \left\lceil \log_2 n \right\rceil \\
			& + 2 (\sqrt{2p} + 1)\sqrt{n} G.
		\end{align*}
	\end{lemma}
	
	% \yi{remove all the following discussion, emphasis why we need this condition, and slightly compare with the existing result.}
	\begin{remark} \rm{}
		\label{remark:1}
		We compare with existing results. The proof is motivated by \cite{bousquet2020sharper,klochkov2021stability}. \cite{yuan2023exponential,yuan2024l_2} have also explored several related problems based on this approach. However, all of them focus specifically on upper bounds for sums of real-valued functions. The result most closely related to Lemma \ref{theorem:moment-bound-for-sums} is provided by~\cite{fan2024high}.
		Under the same assumptions, \cite{fan2024high} established the following inequality\footnote{They assume $n=2^k, k \in \mathbb{N}$. Here we give the version of their result with general $n$.}
		\begin{small}
			\begin{align}
				\label{eq:lei-concentration-ineq}
				\left\|\left\|  \sum_{i=1}^n \rvg_i \right\| \right\|_p 
				\leq 4 (\sqrt{2} + 1) np \beta \left\lceil \log_2 n \right\rceil + 2(\sqrt{2} + 1)\sqrt{np} G.
			\end{align}        
		\end{small}
		It is easy to verify that our result is tighter than result provided by \cite{fan2024high} for both the first and second term. Comparing Lemma \ref{theorem:moment-bound-for-sums} with (\ref{eq:lei-concentration-ineq}), the larger $p$ is, the tighter our result is relative to (\ref{eq:lei-concentration-ineq}). In the worst case, when $p=2$, the constant of our second term is $0.879$ times tighter than (\ref{eq:lei-concentration-ineq}), and the constant of our first term is $0.634$ times tighter than (\ref{eq:lei-concentration-ineq}). This is because we derive the optimal Marcinkiewicz-Zygmund's inequality for random variables taking values in a Hilbert space in the proof.

		Although Lemma \ref{theorem:moment-bound-for-sums} seems to the constant-level improvement, considering that this theorem has other broad applications, we report this result as well. On the other hand, the proof was challenging, as it involved establishing the best constant in Marcinkiewicz-Zygmund's inequality for random variables taking values in a Hilbert space, which has its foundations in Khintchine-Kahane's inequality. To prove the best constant, we utilized Stirling's formula for the Gamma function to construct appropriate functions, establishing both upper and lower bounds. Then using Mean Value Theorem, this approach ultimately led to the convergence of the constant as \(p\) approaches infinity. 
		
	\end{remark}

	The proof of Lemma \ref{theorem:moment-bound-for-sums} is motivated by \cite{bousquet2020sharper}, which need the Marcinkiewicz-Zygmund's inequality for random variables taking values in a Hilbert space and the McDiarmid's inequality for vector-valued functions.
	
	Firstly, we derive the optimal constants in the Marcinkiewicz-Zygmund's inequality for random variables taking values in a Hilbert space.
	
	\begin{lemma}[Marcinkiewicz-Zygmund's Inequality for Random Variables Taking Values in a Hilbert Space]
		\label{lemma:marcinkiewicz-zygmund-inequality}
		Let $\rmX_1, \ldots, \rmX_n$ be random variables taking values in a Hilbert space with $\E [\rmX_i] = 0$ for all $i \in [n]$. Then for $p \geq 2$ we have
		\begin{align*}
			\left\| \left\| \sum_{i=1}^n \rmX_i \right\| \right\|_p
			\leq 2 \cdot 2^{\frac{1}{2p}} \sqrt{ \frac{  np}{e}} \left( \frac{1}{n}  \sum_{i=1}^n \Big\| \left\| \rmX_i \right\| \Big\|_p^p \right)^{\frac{1}{p}}.
		\end{align*}
	\end{lemma}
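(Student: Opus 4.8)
The plan is to follow the classical symmetrization-plus-contraction route, but to track constants carefully so that the factor $2\cdot 2^{1/2p}\sqrt{p/e}$ comes out exactly. First I would reduce to the symmetric case: introduce an independent copy $\rmX_i'$ of each $\rmX_i$ and Rademacher variables $\epsilon_i$, and use Jensen's inequality (since $\E\rmX_i' = 0$) to get
\[
\Big\|\big\|\textstyle\sum_i \rmX_i\big\|\Big\|_p \le \Big\|\big\|\textstyle\sum_i (\rmX_i - \rmX_i')\big\|\Big\|_p = \Big\|\big\|\textstyle\sum_i \epsilon_i (\rmX_i - \rmX_i')\big\|\Big\|_p,
\]
the last equality because $\rmX_i - \rmX_i'$ is symmetric and $\|\cdot\|$ is a function of the vectors. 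At this point the triangle inequality in $L_p$ would lose a factor $2$ if applied naively to split $\rmX_i - \rmX_i'$; instead I would keep the symmetrized sum together and aim to absorb the copy at the end, or alternatively apply the triangle inequality and then re-optimize — the bookkeeping here is where the clean constant $2$ (rather than a worse one) must be recovered, so I would be careful to use $\|\sum_i\epsilon_i\rmX_i\|_p \le 2\|\sum_i \epsilon_i \rmX_i\|_p$-type manipulations only where they are tight, or better, handle $\sum\epsilon_i(\rmX_i-\rmX_i')$ directly as a sum of independent symmetric terms.

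The core estimate is a conditional Khintchine-type inequality in Hilbert space: conditionally on the $\rmX_i$'s, the vector $\sum_i \epsilon_i \rmX_i$ is a Rademacher sum, and one has the sharp bound
\[
\Big(\E_\epsilon\big\|\textstyle\sum_i \epsilon_i \rmX_i\big\|^p\Big)^{1/p} \le \sqrt{p}\,\Big(\sum_i \|\rmX_i\|^2\Big)^{1/2}
\]
with the best constant $\sqrt{p}$ (this follows, e.g., from the scalar Khintchine inequality applied to $\langle\sum\epsilon_i\rmX_i, u\rangle$ together with a comparison to a Gaussian sum, or from the fact that in Hilbert space Rademacher sums are dominated in every $L_p$ by Gaussian sums whose $p$-norm is $\sqrt{p}$-controlled). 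To sharpen $\sqrt{p}$ to $2^{1/2p}\sqrt{p/e}$ I would instead use the Gaussian route: replace $\epsilon_i$ by standard Gaussians $g_i$ at the cost of a constant, use that $\sum_i g_i \rmX_i$ has the same law as $g\cdot(\sum_i\|\rmX_i\|^2)^{1/2}$ in norm with $g$ a standard $n$-dim (or $1$-dim after projection) Gaussian, and invoke the exact value $\|g\|_p = \sqrt 2\,(\Gamma(\frac{p+1}{2})/\Gamma(\frac12))^{1/p}$ together with the bound $\Gamma(\frac{p+1}{2})^{1/p} \le \sqrt{p/(2e)}\cdot 2^{?}$ obtained from Stirling; pushing the Stirling estimate to its tight form is exactly what produces the $2^{1/2p}$ and the $\sqrt{e}$ in the denominator. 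Then I would move from $\E_\epsilon$ back to the full expectation and bound $(\sum_i\|\rmX_i\|^2)^{1/2}$ in $L_p$ by $(\frac1n\sum_i \||\rmX_i|\|_p^p)^{1/p}\sqrt n$ via the triangle inequality in $L_{p/2}$ for the sum $\sum_i \|\rmX_i\|^2$ (i.e. $\|\sum_i\|\rmX_i\|^2\|_{p/2}\le \sum_i \|\|\rmX_i\|^2\|_{p/2} = \sum_i \||\rmX_i|\|_p^2 \le n^{1-2/p}(\sum_i\||\rmX_i|\|_p^p)^{2/p}$ by power-mean), which supplies the $\sqrt n$ factor and the normalized average.

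The main obstacle I anticipate is purely constant-tracking: making sure that the symmetrization step contributes exactly a factor $2$ and not $4$, and that the Gaussian/Stirling comparison yields precisely $2^{1/2p}\sqrt{1/e}$ rather than a looser absolute constant. Concretely, the delicate points are (i) choosing whether to symmetrize to $\rmX_i - \rmX_i'$ and keep it intact versus splitting, and (ii) the elementary inequality controlling $\Gamma$ (or the moments of a Gaussian) with the sharp constant — for this I would use $\E|g|^p = \frac{2^{p/2}}{\sqrt\pi}\Gamma(\frac{p+1}{2})$ and the bound $\Gamma(\frac{p+1}{2}) \le \big(\frac{p}{2e}\big)^{p/2}\sqrt{2}$ valid for $p\ge 2$, which after taking $p$-th roots gives the stated form. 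Everything else — Jensen, the triangle inequalities in $L_p$ and $L_{p/2}$, power-mean — is routine, so I would present those compactly and spend the detail budget on the two constant-sensitive lemmas.
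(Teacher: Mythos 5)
Your plan matches the paper's proof essentially step for step: symmetrize to a Rademacher sum, apply the sharp Khintchine--Kahane inequality for Hilbert spaces (Lemma~\ref{lemma:Khinchine-inequality} in the paper), pass from $(\sum_i\|\rvx_i\|^2)^{1/2}$ to $\sqrt n\,(\tfrac1n\sum_i\|\|\rmX_i\|\|_p^p)^{1/p}$ by a power-mean argument, and control the resulting Gamma factor by Stirling. Your symmetrization worry is in fact unfounded: Jensen applied to $\sum_i(\rmX_i-\rmX_i')$, equality in law with Rademachers, and one triangle-inequality split together give exactly a factor $2$, the same $2^p$ the paper obtains directly from the convexity symmetrization lemma it cites.

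However, both of the constant-sensitive sub-steps that you yourself flag as ``delicate'' are wrong as stated, and each would break the argument. First, the distributional identity you propose, that $\sum_i g_i \rvx_i$ ``has the same law as $g\cdot(\sum_i\|\rvx_i\|^2)^{1/2}$ in norm,'' fails whenever the $\rvx_i$ span more than one dimension: with $\rvx_1,\ldots,\rvx_n$ orthonormal, $\|\sum_i g_i \rvx_i\|$ is a $\chi_n$ variable, not $\sqrt n\,|g|$. What is true is only the \emph{inequality} $(\E\|\sum_i r_i \rvx_i\|^p)^{1/p}\le \|g\|_p\,(\sum_i\|\rvx_i\|^2)^{1/2}$ for $p\ge 2$, i.e.\ exactly Lemma~\ref{lemma:Khinchine-inequality}, which the paper cites rather than re-deriving from a false identity; a one-dimensional projection does not rescue your claim because the norm is a supremum over directions, not a single projection. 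Second, the Gamma bound $\Gamma(\tfrac{p+1}{2})\le \sqrt 2\,(\tfrac{p}{2e})^{p/2}$ you propose is false: at $p=2$ the left-hand side is $\Gamma(3/2)=\sqrt\pi/2\approx 0.886$ while the right-hand side is $\sqrt 2/e\approx 0.520$. The correct Stirling estimate carries an extra $\sqrt\pi$, namely $\Gamma(\tfrac{p+1}{2}) < \sqrt{2\pi}\,(\tfrac{p}{2e})^{p/2}$; after cancellation against the $1/\sqrt\pi$ in $\E|g|^p=\tfrac{2^{p/2}}{\sqrt\pi}\Gamma(\tfrac{p+1}{2})$, this leaves a bare $\sqrt2$ whose $p$th root is precisely the $2^{1/2p}$ in the statement. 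Your version would yield $C_p\le (2/\pi)^{1/2p}\sqrt{p/e}$, which at $p=2$ gives $C_2\le 0.77$, contradicting the fact that Khintchine--Kahane is an equality at $p=2$ with $C_2=1$.
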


	\begin{remark} \rm{}
		Comparing with Marcinkiewicz-Zygmund's inequality given by \cite{fan2024high}, we provide best constants. Next, we give the proof of Lemma \ref{lemma:marcinkiewicz-zygmund-inequality}.
	\end{remark}
	
	The Marcinkiewicz-Zygmund's inequality can be proved by using its connection to Khintchine-Kahane's inequality. Thus, we introduce the best constants in Khintchine-Kahane's inequality for random variables taking values from a Hilbert space here.
	
	\begin{lemma}[Best constants in Khintchine-Kahane's inequality in Hilbert space~\citep{latala1994best,luo2020khintchine}]
		\label{lemma:Khinchine-inequality}
		For all $p \in [2,\infty)$ and for all choices of Hilbert space $\gH$, finite sets of vectors $\rmX_i, \ldots, \rmX_n \in \gX \in \gH$, and independent Rademacher variables $r_1, \ldots, r_n$,
		\begin{align*}
			\left[ \E  \left\| \sum_{i=1}^n r_i \rmX_i \right\|^p \right]^{\frac{1}{p}}
			\leq C_p \cdot \left[  \sum_{i=1}^n   \left\| \rmX_i \right\|^2 \right]^{\frac{1}{2}},
		\end{align*}
		where $C_p = 2^{\frac{1}{2}} \left\{ \frac{\Gamma\left(\frac{p+1}{2}\right)}{\sqrt{\pi}} \right\}^{\frac{1}{p}}$.
	\end{lemma}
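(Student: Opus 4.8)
The plan is to reduce the Hilbert-space-valued inequality to the scalar case $\gH=\R$ by an integral-geometric (random projection) argument that loses nothing in the constant, and then to invoke the sharp scalar Khintchine inequality, for which $C_p$ is precisely the optimal constant when $p\ge 2$. Since there are only finitely many vectors, I may work inside $V:=\mathrm{span}(\rmX_1,\ldots,\rmX_n)$, a subspace of dimension $d\le n$ (if $d=0$ the bound is trivial), which I identify isometrically with $\R^d$ so that the Hilbert norm of $\sum_i r_i\rmX_i$ is its Euclidean norm there.

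First I would record a rotation identity: let $\theta$ be uniform on the unit sphere $S^{d-1}\subset\R^d$ and set $c_{d,p}:=\E\,|\theta_1|^p>0$. By rotational invariance of the uniform measure, $\langle\theta,u\rangle$ has the same distribution as $\theta_1$ for every unit vector $u$, so $\E_\theta|\langle\theta,Y\rangle|^p=c_{d,p}\|Y\|^p$ for every $Y\in\R^d$. Applying this with $Y=\sum_{i=1}^n r_i\rmX_i$ (for each fixed $r$) and interchanging the independent expectations over $r$ and $\theta$,
\begin{align*}
\E_r\Big\|\sum_{i=1}^n r_i\rmX_i\Big\|^p
=c_{d,p}^{-1}\,\E_\theta\,\E_r\Big|\sum_{i=1}^n r_i\langle\theta,\rmX_i\rangle\Big|^p.
\end{align*}
For each fixed $\theta$ the inner expectation is that of a scalar Rademacher sum with coefficients $a_i=\langle\theta,\rmX_i\rangle$, so the sharp scalar Khintchine inequality gives $\E_r|\sum_i r_ia_i|^p\le C_p^p\big(\sum_i a_i^2\big)^{p/2}$ with exactly the constant in the statement; this is the deep analytic input, which I would cite from \cite{latala1994best,luo2020khintchine} rather than reprove.

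It then remains to estimate $\E_\theta\big(\sum_i\langle\theta,\rmX_i\rangle^2\big)^{p/2}$. Writing $\Sigma:=\sum_i\rmX_i\otimes\rmX_i$, a positive finite-rank operator on $V$ with eigenvalues $\lambda_1,\ldots,\lambda_d\ge 0$ summing to $\tau:=\sum_i\|\rmX_i\|^2$, and expressing $\theta$ in an eigenbasis, one has $\sum_i\langle\theta,\rmX_i\rangle^2=\sum_k\lambda_k\theta_k^2$. Since $p/2\ge 1$, Jensen's inequality for the convex map $x\mapsto x^{p/2}$ with the probability weights $\lambda_k/\tau$ yields $\big(\sum_k\lambda_k\theta_k^2\big)^{p/2}\le\tau^{\,p/2-1}\sum_k\lambda_k|\theta_k|^p$; taking $\E_\theta$ and using $\E|\theta_k|^p=c_{d,p}$ for every $k$ gives $\E_\theta\big(\sum_i\langle\theta,\rmX_i\rangle^2\big)^{p/2}\le c_{d,p}\,\tau^{p/2}$. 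Substituting back, the factors $c_{d,p}$ cancel and $\E_r\|\sum_i r_i\rmX_i\|^p\le C_p^p\,\tau^{p/2}$; taking $p$-th roots gives the claim. The only genuine obstacle is external: the sharpness of $C_p$ rests entirely on the scalar Khintchine--Haagerup theorem, whose proof relies on delicate Gamma-function estimates and the identification of the Gaussian limit as the extremizer, whereas the projection reduction above, which transfers the sharp constant verbatim from $\R$ to an arbitrary Hilbert space, is elementary.
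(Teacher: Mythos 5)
Your argument is correct. Note, however, that the paper does not actually prove this lemma: it is imported verbatim as a known result, cited from \citet{latala1994best} and \citet{luo2020khintchine}, and then used as the key ingredient in the proof of the Marcinkiewicz--Zygmund bound (Lemma \ref{lemma:marcinkiewicz-zygmund-inequality}). So there is no internal proof to match; what you have done is supply a self-contained derivation of the Hilbert-space statement from the sharp \emph{scalar} Khintchine inequality. Your reduction is sound: the rotational-invariance identity $\E_\theta\lvert\langle\theta,Y\rangle\rvert^p=c_{d,p}\lVert Y\rVert^p$, the Fubini interchange, the scalar bound $\E_r\lvert\sum_i r_i a_i\rvert^p\le C_p^p\bigl(\sum_i a_i^2\bigr)^{p/2}$ applied conditionally on $\theta$, and the Jensen step $\bigl(\sum_k\lambda_k\theta_k^2\bigr)^{p/2}\le\tau^{p/2-1}\sum_k\lambda_k\lvert\theta_k\rvert^p$ (which is where $p\ge 2$ enters) all hold, and the factor $c_{d,p}$ cancels exactly, so no constant is lost in passing from $\R$ to an arbitrary (finite-dimensional span inside a) Hilbert space. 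The trade-off between the two routes: the paper's citation is shorter and also carries the optimality claim implicit in the word ``best,'' whereas your projection argument only establishes the upper bound — which is all the paper ever uses — while making transparent \emph{why} the Gaussian-moment constant $C_p=\sqrt{2}\,\bigl(\Gamma(\tfrac{p+1}{2})/\sqrt{\pi}\bigr)^{1/p}$ transfers verbatim to the vector-valued setting; the genuinely hard analytic content (Haagerup's scalar theorem) remains an external input in both treatments, so your proposal is a legitimate and arguably more informative replacement for the bare citation.
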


	\begin{proof}[Proof of Lemma \ref{lemma:marcinkiewicz-zygmund-inequality}]
		The symmetrization argument goes as follows: Let $(r_1, \ldots, r_n)$ be i.i.d. with $\mathbb{P}(r_i =1 ) = \mathbb{P}(r_i = -1) = 1/2$ and besides such that $r_1, \ldots, r_n$ and $(\rmX_1, \ldots, \rmX_n)$ are independent. Then by independence and symmetry, according to Lemma 1.2.6 of \cite{de2012decoupling}, conditioning on $(\rmX_1, \ldots, \rmX_n)$ yields
		\begin{align}
			\label{eq:lemma:marcinkiewicz-zygmund-inequality-1-1}
			\E \left[ \left\| \sum_{i=1}^n \rmX_i \right\|^p \right]
			= 2^p \E \left[ \left\| \sum_{i=1}^n r_i \rmX_i \right\|^p \right]
			\leq & 2^p \E \left[
			\E \left[ \left\| \sum_{i=1}^n r_i \rmX_i \right\|^p \Bigg| \rmX_1, \ldots, \rmX_n \right] \right].
		\end{align}
		
		As for the conditional expectation in (\ref{eq:lemma:marcinkiewicz-zygmund-inequality-1-1}), notice that by independence
		\begin{align}
			\label{eq:lemma:marcinkiewicz-zygmund-inequality-1-2}
			\E \left[ \left\| \sum_{i=1}^n r_i \rmX_i \right\|^p \Bigg| \rmX_1=\rvx_1, \ldots, \rmX_n=\rvx_n \right] = \E \left[ \left\| \sum_{i=1}^n r_i \rvx_i \right\|^p \right]
		\end{align}
		
		According to Lemma \ref{lemma:Khinchine-inequality}, for $v_n$-almost every $\rvx_1, \ldots, \rvx_n \in \gX^n$, where $v_n := \mathbb{P} \circ (\rmX_1, \ldots, \rmX_n)^{-1}$ denotes the distribution of $(\rmX_1, \ldots, \rmX_n)$, we have 
		\begin{align}
			\label{eq:lemma:marcinkiewicz-zygmund-inequality-1-3}
			\left[ \E \left\| \sum_{i=1}^n r_i \rvx_i \right\|^p \right]
			\leq C \cdot \left[ \sum_{i=1}^n   \left\|  \rvx_i \right\|^2 \right]^{\frac{p}{2}},
		\end{align}
		where $C = 2^{\frac{p}{2}}  \frac{\Gamma\left(\frac{p+1}{2}\right)}{\sqrt{\pi}}$ and $C$ is optimal. This means that for any constant $C'$ such that
		\begin{align}
			\label{eq:lemma:marcinkiewicz-zygmund-inequality-0-0}
			\left[ \E \left\| \sum_{i=1}^n r_i \rvx_i \right\|^p \right]
			\leq C' \cdot \left[ \sum_{i=1}^n   \left\|  \rvx_i \right\|^2 \right]^{\frac{p}{2}},
		\end{align}
		for all $n \in \mathbb{N}$ and for each collection of vectors $\rvx_1, \ldots, \rvx_n$, it follows that $C' \geq C$.

		From (\ref{eq:lemma:marcinkiewicz-zygmund-inequality-1-2}) and (\ref{eq:lemma:marcinkiewicz-zygmund-inequality-1-3}), we can infer that 
		\begin{align*}
			\E \left[ \left\| \sum_{i=1}^n r_i \rmX_i \right\|^p \Bigg| \rmX_1=\rvx_1, \ldots, \rmX_n=\rvx_n \right] 
			\leq C \cdot \left[ \sum_{i=1}^n   \left\|  \rmX_i \right\|^2 \right]^{\frac{p}{2}}.
		\end{align*}

		Taking expectations in the above inequalities and (\ref{eq:lemma:marcinkiewicz-zygmund-inequality-1-1}) yield that
		\begin{align}
			\label{eq:lemma:marcinkiewicz-zygmund-inequality-1-4}
			\E \left[ \left\| \sum_{i=1}^n \rmX_i \right\|^p  \right] 
			\leq C \cdot \E \left[ \sum_{i=1}^n   \left\|  \rmX_i \right\|^2 \right]^{\frac{p}{2}}.
		\end{align}

		To see optimality let the above statement hold for some constants $C'$ in place of $C$. Then if we choose $ \rmX_i := \rvx_i r_i, 1 \leq i \leq n$ with arbitrary reals vectors $\rvx_1, \ldots, \rvx_n$, it follows that
		\begin{align*}
			\E \left[ \left\| \sum_{i=1}^n r_i \rvx_i \right\|^p  \right] 
			\leq C' \cdot \E \left[ \sum_{i=1}^n   \left\|  \rvx_i \right\|^2 \right]^{\frac{p}{2}},
		\end{align*}
		whence we can conclude from (\ref{eq:lemma:marcinkiewicz-zygmund-inequality-0-0}) that $C' \geq C$. Thus we obtain that $C' = C$.

		Notice that by Holder's inequality
		\begin{align}
			\label{eq:lemma:marcinkiewicz-zygmund-inequality-1-5}
			\left[ \sum_{i=1}^n   \left\|  \rmX_i \right\|^2 \right]^{\frac{p}{2}}
			\leq n^{p/2-1} \sum_{i=1}^n \| \rmX_i \|^p.
		\end{align}
		
		Plugging (\ref{eq:lemma:marcinkiewicz-zygmund-inequality-1-5}) into (\ref{eq:lemma:marcinkiewicz-zygmund-inequality-1-4}), we have 
		\begin{align*}
			\E \left[ \left\| \sum_{i=1}^n \rmX_i \right\|^p \right]
			\leq C \cdot 2^p n^{p/2-1} \cdot \E \left[ \sum_{i=1}^n   \left\|  \rmX_i \right\|^p \right],
		\end{align*}
		where $C = 2^{\frac{p}{2}}  \frac{\Gamma\left(\frac{p+1}{2}\right)}{\sqrt{\pi}}$ is a constant.

		Next, we use the following form of Stirling's formula for the Gamma-function, which follows from (6.1.5), (6.1.15) and (6.1.38) in \cite{davis1972gamma} to bound the constant $C$. For every $x>0$, there exists a $\mu(x) \in (0,1/(12x))$ such that
		\begin{align*}
			\Gamma(x) = \sqrt{2\pi} x^{x-1/2} e^{-x} e^{\mu(x)}.
		\end{align*}
		Thus
		\begin{align*}
			C = 2^{\frac{p}{2}}  \frac{\Gamma\left(\frac{p+1}{2}\right)}{\sqrt{\pi}} = g(p) \sqrt{2} e^{-p/2} p^{p/2},
		\end{align*}
		with $g(p) = \left(1+\frac{1}{p}\right)^{p/2} e^{v(p)-1/2}$,
		where $0 < v(p) < 1/(6(p+1))$. By Taylor's formula we have that
		\begin{align*}
			\log(1+x) = \sum_{m=1}^\infty \frac{1}{m}(-1)^{m-1}x^m, \quad \forall x \in (-1,1],
		\end{align*}
		and that for every $k \in \mathbb{N}_0$
		\begin{align*}
			\sum_{m=1}^{2k} \frac{1}{m}(-1)^{m-1}x^m \leq \log(1+x) \leq \sum_{m=1}^{2k+1} \frac{1}{m}(-1)^{m-1}x^m, \space \forall x \geq 0.
		\end{align*}
		Therefor we obtain with $k=1$ that
		\begin{align*}
			\log g(p) = \frac{p}{2} \log(1+\frac{1}{p}) + v(p) - \frac{1}{2} 
			\leq -\frac{1}{4p} + \frac{1}{6p^2} + \frac{1}{6(p+1)} \leq - \frac{1}{18p},
		\end{align*}
		where the last equality follows from elementary calculus. Similarly,
		\begin{align*}
			\log g(p) = \frac{p}{2} \log(1+\frac{1}{p}) + v(p) - \frac{1}{2} 
			\geq -\frac{1}{4p} + v(p) \geq - \frac{1}{4p},
		\end{align*}
		Thus, we have 
		\begin{align*}
			e^{-\frac{1}{4p}} \sqrt{2} e^{-p/2} p^{p/2} < C < e^{-\frac{1}{18p}} \sqrt{2} e^{-p/2} p^{p/2},
		\end{align*}
		which implies that $C$ is strictly smaller than $\sqrt{2} e^{-p/2} p^{p/2}$ for all $p \geq 2$. 
		
		Since $C = \frac{1}{g(p)} \sqrt{2} e^{-p/2} p^{p/2}$ and $g(p) \geq e^{-\frac{1}{4p}}$ , we can obtain that the relative error between $C$ and $\sqrt{2} e^{-p/2} p^{p/2}$ is equal to 
		\begin{align*}
			\frac{1}{g(p)} - 1 \leq e^{-\frac{1}{4p}} -1 \leq \frac{1}{4p} e^{\frac{1}{4p}}
		\end{align*}
		using Mean Value Theorem. This implies that the corresponding relative errors between $C$ and $\sqrt{2} e^{-p/2} p^{p/2}$ converge to zero as $p$ tends to infinity.

		The proof is complete.

	\end{proof}

	Then we introduce the McDiarmid's inequality for vector-valued functions. We firstly consider real-valued functions, which follows from the standard tail-bound of McDiarmid's inequality and Proposition 2.5.2 in \cite{vershynin2018high}.

	\begin{lemma}[McDiarmid's Inequality for real-valued functions]
		\label{lemma:mcdiarmid-inequality-for-real-valued-functions}
		Let $Z_i, \ldots, Z_n$ be independent random variables, and $f: \gZ^n \mapsto \mathbb{R}$ such that the following inequality holds for any $z_i, \ldots, z_{i-1}, z_{i+1}, \ldots, z_n$
		\begin{align*}
			\sup_{z_i, z_i'} |  f(z_1, \ldots, z_{i-1}, z_i, z_{i+1}, \ldots, z_n) - f(z_1, \ldots, z_{i-1}, z_i', z_{i+1}, \ldots, z_n) |  \leq \beta,
		\end{align*}
		Then for any $p>1$ we have
		\begin{align*}
			\| f(Z_1, \ldots, Z_n) - \E f(Z_1, \ldots, Z_n) \|_p \leq \sqrt{2pn} \beta.
		\end{align*}    
	\end{lemma}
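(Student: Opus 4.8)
\emph{Proof idea.} The plan is to reduce the statement to the classical (tail) form of McDiarmid's bounded-differences inequality and then apply the standard tail-to-moment passage for sub-Gaussian random variables — essentially the ``tail $\Rightarrow$ moments'' implication of Proposition 2.5.2 in \cite{vershynin2018high}, made quantitative. Write $X := f(Z_1,\dots,Z_n) - \E f(Z_1,\dots,Z_n)$. Since $f$ satisfies the bounded difference property with the single constant $\beta$ in each of its $n$ coordinates, the classical McDiarmid inequality gives the two-sided sub-Gaussian tail bound
\begin{align*}
  \mathbb{P}\big( |X| \geq t \big) \;\leq\; 2\exp\!\Big( -\frac{2t^2}{n\beta^2} \Big), \qquad t \geq 0.
\end{align*}
Setting $K := \beta\sqrt{n/2}$, this reads $\mathbb{P}(|X| \geq t) \leq 2 e^{-t^2/K^2}$.

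The next step is to convert this tail bound into control on $\|X\|_p$ via the layer-cake identity $\E|X|^p = \int_0^\infty p\,t^{p-1}\,\mathbb{P}(|X|\geq t)\,\d t$. Inserting the tail bound and substituting $s = t^2/K^2$ collapses the integral to a Gamma function:
\begin{align*}
  \E|X|^p \;\leq\; 2p\int_0^\infty t^{p-1} e^{-t^2/K^2}\,\d t \;=\; p\,K^p\,\Gamma\!\Big(\frac p2\Big),
\end{align*}
so that $\|X\|_p \leq K\,\big(p\,\Gamma(p/2)\big)^{1/p}$. In particular, taking $p=2$ already gives the Bernstein-type second-moment bound $\|X\|_2 \leq \beta\sqrt{n}$.

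It then remains only to simplify $\big(p\,\Gamma(p/2)\big)^{1/p}$. For $p \geq 2$ I would use the elementary estimate $\Gamma(p/2) \leq (p/2)^{p/2}$ together with $p \leq 2^p$ to obtain $p\,\Gamma(p/2) \leq (2p)^{p/2}$, hence $\big(p\,\Gamma(p/2)\big)^{1/p} \leq \sqrt{2p}$ and $\|X\|_p \leq K\sqrt{2p} = \beta\sqrt{np} \leq \sqrt{2pn}\,\beta$; for $1 < p < 2$ the bound follows immediately from monotonicity of $L_p$-norms on a probability space, $\|X\|_p \leq \|X\|_2 \leq \beta\sqrt{n} \leq \sqrt{2pn}\,\beta$. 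The only place any care is needed is this constant-tracking step, and since the target constant $\sqrt 2$ is generous — the argument in fact produces a strictly smaller factor — any crude Stirling-type bound on $\Gamma$ suffices, so I do not anticipate a genuine obstacle. This lemma is itself only an auxiliary device, invoked in its $p\geq 2$ form inside the proof of Theorem~\ref{theorem:moment-bound-for-sums}.
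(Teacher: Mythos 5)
Your proof is correct and follows the same route the paper itself cites for this lemma --- the classical McDiarmid tail bound combined with the sub-Gaussian tail-to-moment passage of Proposition 2.5.2 in \cite{vershynin2018high} --- which you simply make explicit via the layer-cake identity and a crude Gamma-function estimate. Your resulting constant $\beta\sqrt{np}$ for $p\ge 2$ is in fact slightly sharper than the stated $\sqrt{2pn}\,\beta$, and the $1<p<2$ case follows, as you note, from monotonicity of $L^p$ norms on a probability space.
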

	
	To derive the McDiarmid's inequality for vector-valued functions, we need the expected distance between $\rvf(Z_1, \ldots, Z_n)$ and its expectation.

	\begin{lemma}[\citep{rivasplata2018pac}]
		\label{lemma:mcdiarmid-inequality-expectation}
		Let $Z_i, \ldots, Z_n$ be independent random variables, and $\rvf: \gZ^n \mapsto \gH$ is a function into a Hilbert space $\gH$ such that the following inequality holds for any $z_i, \ldots, z_{i-1}, z_{i+1}, \ldots, z_n$
		\begin{align*}
			\sup_{z_i, z_i'} \|  \rvf(z_1, \ldots, z_{i-1}, z_i, z_{i+1}, \ldots, z_n) - \rvf(z_1, \ldots, z_{i-1}, z_i', z_{i+1}, \ldots, z_n) \| \leq \beta,
		\end{align*}
		Then we have
		\begin{align*}
			\E \left[ \| \rvf(Z_1, \ldots, Z_n) - \E \rvf(Z_1, \ldots, Z_n) \| \right] \leq \sqrt{n} \beta.
		\end{align*}    
	\end{lemma}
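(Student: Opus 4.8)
The plan is to run the classical bounded-differences martingale argument, but carried out in the Hilbert space $\gH$ using Bochner conditional expectations, and to exploit that the resulting martingale differences are pairwise orthogonal in $L^2(\gH)$. First I would fix the filtration generated by $Z_1,\dots,Z_n$ and set, for $k=0,1,\dots,n$, the Doob-martingale terms $\rvf_k := \E[\rvf(Z_1,\dots,Z_n)\mid Z_1,\dots,Z_k]$, so that $\rvf_0 = \E\rvf(Z_1,\dots,Z_n)$ and $\rvf_n = \rvf(Z_1,\dots,Z_n)$. Defining the differences $D_k := \rvf_k - \rvf_{k-1}$ for $k\in[n]$ gives the telescoping identity $\rvf(Z_1,\dots,Z_n) - \E\rvf(Z_1,\dots,Z_n) = \sum_{k=1}^n D_k$.

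The first key step is the pointwise bound $\|D_k\|\le\beta$. Using independence of the coordinates and introducing an independent copy $Z_k'$ of $Z_k$, one has $\rvf_k = \E_{Z_{k+1},\dots,Z_n}[\rvf(Z_1,\dots,Z_k,Z_{k+1},\dots,Z_n)]$ and $\rvf_{k-1} = \E_{Z_k',Z_{k+1},\dots,Z_n}[\rvf(Z_1,\dots,Z_{k-1},Z_k',Z_{k+1},\dots,Z_n)]$; hence $D_k$ is an average over $(Z_k',Z_{k+1},\dots,Z_n)$ of single-coordinate differences of $\rvf$ (only the $k$-th argument changes, from $Z_k$ to $Z_k'$), each of norm at most $\beta$ by hypothesis. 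Convexity of the norm (Jensen for Bochner integrals) then yields $\|D_k\|\le\beta$ almost surely.

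Next I would establish orthogonality of the $D_k$. For $j<k$, $D_j$ is $\sigma(Z_1,\dots,Z_j)$-measurable, hence $\sigma(Z_1,\dots,Z_{k-1})$-measurable, while $\E[D_k\mid Z_1,\dots,Z_{k-1}] = \rvf_{k-1}-\rvf_{k-1} = 0$ by the tower property; pulling $D_j$ inside the inner product and using this gives $\E\langle D_j,D_k\rangle = 0$. Therefore $\E\|\sum_{k=1}^n D_k\|^2 = \sum_{k=1}^n \E\|D_k\|^2 \le n\beta^2$, and Jensen's inequality $\E\|X\|\le(\E\|X\|^2)^{1/2}$ finishes the proof: $\E[\|\rvf(Z_1,\dots,Z_n)-\E\rvf(Z_1,\dots,Z_n)\|]\le\sqrt{n}\,\beta$.

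I expect the main obstacle to be bookkeeping rather than ideas: one must justify the Hilbert-space-valued measure-theoretic facts — existence of Bochner conditional expectations, the tower property, that conditioning is an $L^2(\gH)$-contraction, and that $\E\langle D_j, D_k\rangle$ may be rewritten as $\E\langle D_j,\E[D_k\mid\cdot]\rangle$ — all of which are standard for a separable Hilbert space $\gH$ but should be cited carefully. Everything else reduces to the same computation as in the scalar bounded-differences inequality (the vector-valued analogue of Lemma~\ref{lemma:mcdiarmid-inequality-for-real-valued-functions}).
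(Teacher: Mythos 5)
Your proof is correct. The paper states this lemma as a citation to \cite{rivasplata2018pac} and does not reproduce a proof, so there is no in-paper argument to compare against; your Doob-martingale decomposition $\rvf_k=\E[\rvf\mid Z_1,\dots,Z_k]$ with the pointwise bound $\|D_k\|\le\beta$, orthogonality of martingale differences in $L^2(\gH)$ giving $\E\|\sum_k D_k\|^2=\sum_k\E\|D_k\|^2\le n\beta^2$, and a final application of Jensen's inequality is precisely the standard route and matches the argument in the cited reference.
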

	
	Now, we can easily derive the $p$-norm McDiarmid's inequality for vector-valued functions which refines from \cite{fan2024high} with better constants.
	
	\begin{lemma}[McDiarmid's inequality for vector-valued functions]
		\label{lemma:mcdiarmid-inequality-for-vector-valued-functions}
		Let $Z_i, \ldots, Z_n$ be independent random variables, and $\rvf: \gZ^n \times \mathcal{A} \mapsto \gH$ is a function into a Hilbert space $\gH$ such that the following inequality holds for any $z_i, \ldots, z_{i-1}, z_{i+1}, \ldots, z_n$
		\begin{align}
			\label{eq:lemma:mcdiarmid-inequality-for-vector-valued-functions}
			\sup_{z_i, z_i'} \E_A \|  \rvf(z_1, \ldots, z_{i-1}, z_i, z_{i+1}, \ldots, z_n) - \rvf(z_1, \ldots, z_{i-1}, z_i', z_{i+1}, \ldots, z_n) \| \leq \beta,
		\end{align}
		Then for any $p>1$ we have
		\begin{align*}
			\left\| \left\| \E_A \rvf(Z_1, \ldots, Z_n) - \E \rvf(Z_1, \ldots, Z_n) \right\| \right\|_p
			\leq (\sqrt{2p} + 1)\sqrt{n} \beta.
		\end{align*}
	\end{lemma}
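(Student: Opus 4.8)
The plan is to combine the real-valued McDiarmid inequality (Lemma~\ref{lemma:mcdiarmid-inequality-for-real-valued-functions}) with the expectation bound (Lemma~\ref{lemma:mcdiarmid-inequality-expectation}) through a standard centering trick. First I would introduce the scalar random variable $N := \|\rvf(Z_1,\ldots,Z_n) - \E\rvf(Z_1,\ldots,Z_n)\|$, whose $p$-norm is exactly the quantity we wish to bound. The key observation is to write $N$ as the sum of its mean and a centered part:
\begin{align*}
    \|N\|_p = \left\| N - \E N + \E N \right\|_p \leq \left\| N - \E N \right\|_p + \E N,
\end{align*}
using the triangle inequality for the $L_p$-norm together with the fact that $\E N$ is a nonnegative constant whose $p$-norm is itself.

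Next I would control each of the two terms. The second term $\E N$ is bounded directly by Lemma~\ref{lemma:mcdiarmid-inequality-expectation}, giving $\E N \leq \sqrt{n}\beta$. For the first term, I would define the scalar function $g(z_1,\ldots,z_n) := \|\rvf(z_1,\ldots,z_n) - \E\rvf(Z_1,\ldots,Z_n)\|$ and verify that it inherits a bounded-difference property with the same constant $\beta$: by the reverse triangle inequality, changing the $i$-th coordinate changes $g$ by at most $\|\rvf(\ldots,z_i,\ldots) - \rvf(\ldots,z_i',\ldots)\| \leq \beta$, where the shift by the fixed vector $\E\rvf(Z_1,\ldots,Z_n)$ cancels. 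Since $N = g(Z_1,\ldots,Z_n)$ and $\E N = \E g(Z_1,\ldots,Z_n)$, applying Lemma~\ref{lemma:mcdiarmid-inequality-for-real-valued-functions} to $g$ yields $\|N - \E N\|_p \leq \sqrt{2pn}\,\beta$.

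Combining the two estimates gives
\begin{align*}
    \left\| \left\| \rvf(Z_1,\ldots,Z_n) - \E\rvf(Z_1,\ldots,Z_n) \right\| \right\|_p \leq \sqrt{2pn}\,\beta + \sqrt{n}\,\beta = (\sqrt{2p}+1)\sqrt{n}\,\beta,
\end{align*}
which is the claimed bound. I do not anticipate a serious obstacle here; the only point requiring a little care is checking that the scalar function $g$ genuinely satisfies the bounded-difference condition with constant $\beta$ (not $2\beta$), which follows cleanly from the reverse triangle inequality because the subtracted expectation is a constant vector independent of the arguments. The improvement over the constant in \cite{fan2024high} comes precisely from applying the sharp scalar McDiarmid moment bound $\sqrt{2pn}\beta$ rather than a cruder estimate, plus the additive $\sqrt{n}\beta$ from the expectation term.
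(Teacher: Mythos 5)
Your proof is correct and follows essentially the same route as the paper's own argument: both define the scalar function $h(z_1,\ldots,z_n) = \|\rvf(z_1,\ldots,z_n) - \E\rvf\|$, verify it inherits the bounded-difference constant $\beta$ by the reverse triangle inequality, apply the real-valued McDiarmid moment bound (Lemma~\ref{lemma:mcdiarmid-inequality-for-real-valued-functions}) to the centered part, and add the expectation bound $\E h \leq \sqrt{n}\beta$ from Lemma~\ref{lemma:mcdiarmid-inequality-expectation}. No gaps; the constant $(\sqrt{2p}+1)\sqrt{n}\beta$ comes out exactly as in the paper.
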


	\begin{proof}[Proof of Lemma \ref{lemma:mcdiarmid-inequality-for-vector-valued-functions}]
		Define a real-valued function $h: \gZ^n \mapsto \mathbb{R}$ as 
		$$h(z_1, \ldots, z_n) = \| \E_A \rvf(z_1, \ldots, z_n) - \E [\rvf(Z_1, \ldots, Z_n)] \|.$$
		
		We notice that this function satisfies the increment condition. For any $i$ and $z_1, \ldots, z_{i-1}, z_{i+1}, \ldots, z_n$, we have
		\begin{small}
			\begin{align*}
				& \sup_{z_i, z_i'} |  h(z_1, \ldots, z_{i-1}, z_i, z_{i+1}, \ldots, z_n) - h(z_1, \ldots, z_{i-1}, z_i', z_{i+1}, \ldots, z_n) |   \\
				= & \sup_{z_i, z_i'}    |  \| \E_A \rvf(z_1, \ldots, z_n) - \E [\rvf(Z_1, \ldots, Z_n)] \| - \| \E_A \rvf(z_1, \ldots, z_{i-1}, z_i', z_{i+1}, \ldots, z_n) - \E [\rvf(Z_1, \ldots, Z_n)] \| | \\
				\leq & \sup_{z_i, z_i'} \E_A  |  \| \rvf(z_1, \ldots, z_n)  - \rvf(z_1, \ldots, z_{i-1}, z_i', z_{i+1}, \ldots, z_n)  \| \leq \beta.
			\end{align*}    
		\end{small}
		Therefore, we can apply Lemma \ref{lemma:mcdiarmid-inequality-for-real-valued-functions} to the real-valued function $h$ and derive the following inequality
		\begin{align*}
			\| h(Z_1, \ldots, Z_n) - \E [h(Z_1, \ldots, Z_n)] \|_p \leq \sqrt{2pn} \beta.
		\end{align*}
		
		According to Lemma \ref{lemma:mcdiarmid-inequality-expectation}, we know the following inequality $\E [h(Z_1, \ldots, Z_n)] \leq \sqrt{n} \beta$. Combing the above two inequalities together and we can derive the following inequality
		\begin{align*}
			& \left\| \left\| \E_A \rvf(Z_1, \ldots, Z_n) - \E \rvf(Z_1, \ldots, Z_n) \right\| \right\|_p \\
			\leq & \| h(Z_1, \ldots, Z_n) - \E [h(Z_1, \ldots, Z_n)] \|_p + \| \E [h(Z_1, \ldots, Z_n)] \|_p \\
			\leq & (\sqrt{2p} + 1)\sqrt{n} \beta.
		\end{align*}

		The proof is complete.

	\end{proof}

	% \begin{lemma}
		% \label{lemma:hoeffding-type-inequality-for-lp-norm}
		%     Suppose that $\gH$ is a separable Hilbert space and  $\rmX_1, \ldots, \rmX_n$ are independent zero mean $\gH$-valued random elements such that $\|  \rmX_i \|  \leq \beta$ for each $1\leq i \leq n$, where $\|  \cdot \| $ is the norm of $\gH$. There exists an absolute constant $c$, we have
		%     \begin{align*}
			%         \left\|  \sum_{i=1}^n \rmX_i \right\|_p \leq 12 \beta \sqrt{np}.
			%     \end{align*}
		% \end{lemma}

	\begin{proof}[Proof of Lemma \ref{theorem:moment-bound-for-sums}]
		For $\rvg(Z_1, \ldots, Z_n)$ and $D \subset [n]$, we write $\| \| \rvg \| \|_p(Z_D) = \left(\E \left[ \left\| f \right\|^p Z_D \right]\right)^{\frac{1}{p}}$. 
		Without loss of generality, we suppose that $n = 2^k$. Otherwise, we can add extra functions equal to zero, increasing the number of therms by at most two times.
		
		Consider a sequence of partitions $\gP_0, \ldots, \gP_k$ with $\gP_0 = \{ \{ i \} : i \in [n] \}, \gP_k$ with $\gP_n = \{ [n] \}$, and to get $\gP_l$ from $\gP_{l+1}$ we split each subset in $\gP_{l+1}$ into two equal parts. We have
		\begin{align*}
			\gP_0 = \{ \{1\}, \ldots, \{2^k\} \}, \quad \gP_1 = \{ \{1,2\}, \{3,4\}, \ldots, \{2^k -1, 2^k\} \}, \quad \gP_k = \{ \{1, \ldots, 2^k\} \}.
		\end{align*}
		
		We have $|\gP_l| = 2^{k-l}$ and $|P| = 2^l$ for each $P \in \gP_l$. For each $i \in [n]$ and $l = 0, \ldots, k$, denote by $P^l(i) \in \gP_l$ the only set from $\gP_l$ that contains $i$. In particular, $P^0(i) = \{i\}$ and $P^K(i) = [n]$.
		
		For each $i \in [n]$ and every $l = 0, \ldots, k$ consider the random variables
		\begin{align*}
			\rvg_i^l 
			= \rvg_i^l(Z_i, Z_{[n] \backslash P^l(i)}) 
			= \E [\rvg_i | Z_i, Z_{[n] \backslash P^l(i)}],
		\end{align*}
		i.e. conditioned on $z_i$ and all the variables that are not in the same set as $Z_i$ in the partition $\gP_l$. In particular, $\rvg_i^0 = \rvg_i$ and $\rvg_i^k = \E [\rvg_i | Z_i]$. We can write a telescopic sum for each $i \in [n]$,
		\begin{align*}
			\rvg_i - \E [\rvg_i | Z_i]  = \sum_{l=1}^{k-1} \rvg_i^l - \rvg_i^{l+1}.
		\end{align*}
		Then, by the triangle inequality
		\begin{align}
			\label{eq:theorem:moment-bound-for-sums-1-1}
			\left\|\left\|  \sum_{i=1}^n \rvg_i \right\| \right\|_p 
			\leq \left\| \left\| \sum_{i=1}^n \E [\rvg_i | Z_i] \right\| \right\|_p
			+ \sum_{l=0}^{k-1} \left\| \left\| \sum_{i=1}^n \rvg_i^l - \rvg_i^{l+1} \right\| \right\|_p.
		\end{align}
		
		To bound the first term, since $\| \E[\rvg_i | Z_i] \| \leq G $, we can check that the vector-valued function $\rvf(Z_1, \ldots, Z_n) = \sum_{i=1}^n \E [\rvg_i | Z_i]$ satisfies (\ref{eq:lemma:mcdiarmid-inequality-for-vector-valued-functions}) with $\beta = 2G$, 
		and $\E [\E[\rvg_i| Z_i] ] = 0$, applying Lemma \ref{lemma:mcdiarmid-inequality-for-vector-valued-functions} with $\beta = 2G$, we have
		\begin{align}
			\label{eq:theorem:moment-bound-for-sums-1-2}
			\left\| \left\| \sum_{i=1}^n \E [\rvg_i | Z_i] \right\| \right\|_p \leq 2 (\sqrt{2p} + 1)\sqrt{n} G.
		\end{align}
		
		Then we start to bound the second term of the right hand side of (\ref{eq:theorem:moment-bound-for-sums-1-1}). Observe that 
		\begin{align*}
			\rvg_i^{l+1}(Z_i, Z_{[n] \backslash P^{l+1}(i) }) 
			= \E \left[\rvg_i^l(Z_i, Z_{[n] \backslash P^{l}(i)}) \big|    
			Z_i, Z_{[n] \backslash P^{l+1}(i)}\right],
		\end{align*}
		where the expectation is taken with respect to the variables $Z_j, j \in P^{l+1}(i) \backslash P^l(i)$. Changing any $Z_j$ would change $\rvg_i^l$ by $\beta$. Therefore, we apply Lemma \ref{lemma:mcdiarmid-inequality-for-vector-valued-functions} with $\rvf = \rvg_i^l$ where there are $2^l$ random variables and obtain a uniform bound 
		\begin{align*}
			\left\| \left\|\rvg_i^l - \rvg_i^{l+1}  \right\| \right\|_p (Z_i, Z_{[n] \backslash P^{l+1}(i)})
			\leq   (\sqrt{2p} + 1)\sqrt{2^l} \beta, \quad \forall p \geq 2,
		\end{align*}
		Taking integration over $(Z_i, Z_{[n] \backslash P^{l+1}(i)})$, we have $ \left\| \left\|\rvg_i^l - \rvg_i^{l+1}  \right\| \right\|_p \leq (\sqrt{2p} + 1)\sqrt{2^l} \beta$ as well.
		% as there are $2^l$ indices in $P^{l+1}(i) \backslash B^l(i)$. We have as well 
		% \begin{align*}
			% \left\| g_i^l - g_i^{l+1} \right\|_p 
			% = \left(\E \E \left[ \left\| g_i^l - g_i^{l+1} \right\|^p \Big| z_i, z_{[n] \backslash P^{l+1}(i)} \right]\right)^{\frac{1}{p}}
			% \leq 6 \sqrt{p2^l}\beta.
			% \end{align*}
		
		Next, we turn to the sum $\sum_{i \in P^l} \rvg_i^l - \rvg_i^{l+1}$ for any $P^l \in \gP_l $. Since $\rvg_i^l - \rvg_i^{l+1}$ for $i \in P^l$ depends only on $Z_i, Z_{[n] \backslash P^l}$, the terms are independent and zero mean conditioned on $Z_{[n] \backslash P^l}$. Applying Lemma~\ref{lemma:marcinkiewicz-zygmund-inequality}, we have for any $p \geq 2$,
		\begin{align*}
			\left\| \left\| \sum_{i \in P^l} \rvg_i^l - \rvg_i^{l+1} \right\| \right\|_p^p (Z_{[n] \backslash P^l})
			\leq \left( 2 \cdot 2^{\frac{1}{2p}} \sqrt{ \frac{ 2^l p}{e}} \right)^p 
			\frac{1}{2^l} \sum_{i \in P^l} \left\| \left\| \rvg_i^l - \rvg_i^{l+1} \right\| \right\|_p^p (Z_{[n] \backslash P^l}).
		\end{align*}
		Integrating with respect to $(Z_{[n] \backslash P^l})$ and using $\left\| \left\| \rvg_i^l - \rvg_i^{l+1} \right\| \right\|_p \leq  (\sqrt{2p} + 1) \sqrt{2^l} \beta $, we have
		\begin{align*}
			\left\| \left\| \sum_{i \in P^l} \rvg_i^l - \rvg_i^{l+1} \right\| \right\|_p
			\leq &  \left( 2 \cdot 2^{\frac{1}{2p}} \sqrt{ \frac{ 2^l p}{e}} \right) 
			\frac{1}{2^l} \times 2^l (\sqrt{2p} + 1) \sqrt{2^l} \beta \\
			= &  2^{1+\frac{1}{2p}} \left(\sqrt{\frac{p}{e}}\right) (\sqrt{2p} + 1) 2^l \beta.
		\end{align*}
		Then using triangle inequality over all sets $P^l \in \gP_l$, we have
		\begin{align*}
			\left\| \left\| \sum_{i \in [n]} \rvg_i^l - \rvg_i^{l+1} \right\| \right\|_p 
			\leq & \sum_{P^l \in \gP_l} \left\| \left\| \sum_{i \in P^l} \rvg_i^l - \rvg_i^{l+1} \right\| \right\|_p \\
			\leq & 2^{k-l} \times 2^{1+\frac{1}{2p}} \left(\sqrt{\frac{p}{e}}\right) (\sqrt{2p} + 1) 2^l \beta \\
			\leq & 2^{1+\frac{1}{2p}} \left(\sqrt{\frac{p}{e}}\right) (\sqrt{2p} + 1) 2^k \beta.
		\end{align*}
		
		Recall that $2^k \leq n$ due to the possible extension of the sample. Then we have
		\begin{align*}
			\sum_{l=0}^{k-1} \left\| \left\| \sum_{i=1}^n \rvg_i^l - \rvg_i^{i+1} \right\| \right\|_p
			\leq 2^{2+\frac{1}{2p}} \left(\sqrt{\frac{p}{e}}\right) (\sqrt{2p} + 1) n \beta \left\lceil \log_2 n \right\rceil.
		\end{align*}
		
		We can plug the above bound together with (\ref{eq:theorem:moment-bound-for-sums-1-2}) into (\ref{eq:theorem:moment-bound-for-sums-1-1}), to derive the following inequality
		\begin{align*}
			\left\|\left\|  \sum_{i=1}^n \rvg_i \right\| \right\|_p 
			\leq 2 (\sqrt{2p} + 1)\sqrt{n} G + 2^{2+\frac{1}{2p}} \left(\sqrt{\frac{p}{e}}\right) (\sqrt{2p} + 1) n \beta \left\lceil \log_2 n \right\rceil.
		\end{align*}
		The proof is completed.
		
	\end{proof}

	\subsection{Proofs of Subsection \ref{subsec:sharper-generalization-bounds-in-gradients}}

	\begin{proof}[Proof of Theorem \ref{theorem:generalization-via-stability}]
		Let $S = \{ z_1, \ldots, z_n \}$ be a set of independent random variables each taking values in $\mathcal{Z}$ and $S' = \{ z_1', \ldots, z_n' \}$ be its independent copy. For any $i \in [n]$, define $S^{(i)} = \{ z_i, \ldots, z_{i-1}, z_i', z_{i+1}, \ldots, z_n \}$ be a dataset replacing the $i$-th sample in $S$ with another i.i.d. sample $z_i'$. Then we can firstly write the following decomposition
		\begin{align*}
			&  n \nabla F(A(S)) - n \nabla F_S(A(S))   \\
			= &  \sum_{i=1}^n  \E_{Z} \left[ \nabla f(A(S);Z)] -\E_{z_i'}\left[ \nabla f(A(S^{(i)}),Z)\right]\right] \\
			& + \sum_{i=1}^n \E_{z_i'} \left[ \E_Z \left[ \nabla f(A(S^{(i)}), Z)\right] - \nabla f(A(S^{(i)}), z_i)\right] \\
			& \quad + \sum_{i=1}^n  \E_{z_i'} \left[ \nabla f(A(S^{(i)}), z_i) \right]   - \sum_{i=1}^n \nabla f(A(S), z_i).
		\end{align*}
		We denote that $\rvg_i(S) = \E_{z_i'} \left[ \E_Z \left[ \nabla f(A(S^{(i)}),Z)\right] - \nabla f(A(S^{(i)}),z_i)\right]$,
		thus we have
		\begin{equation}
			\begin{aligned}
				\label{eq:theorem:generalization-via-stability-1-1}
				& \E_A \left[ \left\| n \nabla F(A(S)) - n \nabla F_S(A(S)) \right\|_2 \right] \\
				= &  \E_A \left\| \sum_{i=1}^n  \E_Z \left[ \nabla f(A(S);Z)] -\E_{z_i'}\left[ \nabla f(A(S^{(i)}),Z)\right]\right]\right.\\
				& + \sum_{i=1}^n \E_{z_i'} \left[ \E_Z \left[ \nabla f(A(S^{(i)}), Z)\right] - \nabla f(A(S^{(i)}), z_i)\right] \\
				&\left. \quad + \sum_{i=1}^n \E_{z_i'} \left[ \nabla f(A(S^{(i)}), z_i) \right]   - \sum_{i=1}^n \nabla f(A(S), z_i)  \right\|_2 \\
				& \leq 2 n \beta + \E_A \left[ \left\| \sum_{i=1}^n  \rvg_i(S) \right\|_2 \right],
			\end{aligned}            
		\end{equation}
		where the inequality holds from the definition of uniform stability in gradients.
		
		According to our assumptions, we get $\| \rvg_i(S)\|_2 \leq 2M$ and
		\begin{align*}
			\E_{z_i} [\rvg_i(S)] & = \E_{z_i} \E_{z_i'} \left[\E_Z \left[\nabla f(A(S^{(i)});Z) \right] - \nabla f(A(S^{(i)});z_i) \right] \\
			& =  \E_{z_i'} \left[\E_Z \left[\nabla f(A(S^{(i)});Z) \right] - \E_{z_i} \left[\nabla f(A(S^{(i)});z_i)\right] \right] = 0,
		\end{align*}
		where this equality holds from the fact that $z_i$ and $Z$ follow from the same distribution. For any $i \in [n]$, any $j \neq i$ and any $z_j''$, we have
		\begin{small}
			\begin{align*}
				& \E_A \left[ \left\| \rvg_i(z_1, \ldots, z_{j-1}, z_j, z_{j+1}, \ldots , z_n) - \rvg_i(z_1, \ldots, z_{j-1}, z_j'', z_{j+1}, \ldots , z_n) \right\|_2 \right] \\ 
				\leq & \E_A \left[ \left\| \E_{z_i'} \left[\E_Z \left[\nabla f(A(S^{(i)});Z) \right] 
				- \nabla f(A(S^{(i)});z_i) \right] -
				\E_{z_i'} \left[\E_Z \left[\nabla f(A(S^{(i)}_j);Z) \right] - \nabla f(A(S^{(i)}_j);z_i) \right] \right\|_2 \right] \\
				\leq & \E_A \left[ \left\| \E_{z_i'} \left[\E_Z \left[\nabla f(A(S^{(i)});Z) -\nabla f(A(S^{(i)}_j);Z) \right] \right] \right\|_2 \right] \\
				& \quad + \E_A \left[ \left\| \E_{z_i'} \left[ \E_Z \left[\nabla f(A(S^{(i)});Z) \right] - \nabla f(A(S^{(i)}_j);z_i) \right] \right\|_2 \right] \\
				\leq & 2 \beta,
			\end{align*}       
		\end{small}
		where $S^{(i)} = \{ z_i, \ldots, z_{i-1}, z_i', z_{i+1}, \ldots, z_n \}$. Since the noise introduced by the algorithm's inherent randomness $A$ is typically assumed to be independent of the dataset $S$, we have verified that three conditions in Lemma \ref{theorem:moment-bound-for-sums} are satisfied for $\rvg_i(S)$. We have the  following result for any $p > 2$
		\begin{align*}
			\left\| \left\| 
			\sum_{i=1}^n \rvg_i(S)
			\right\| \right\|_p \leq 
			4 (\sqrt{2p} + 1)\sqrt{n} M + 8 \times 2^{\frac{1}{4}} \left(\sqrt{\frac{p}{e}}\right) (\sqrt{2p} + 1) n \beta \left\lceil \log_2 n \right\rceil.
		\end{align*}
		
		% We can combine the above inequality and (\ref{eq:theorem:generalization-via-stability-1-1}) to derive the following inequality
		% \begin{align*}
		% 	& n \left\| \left\| \nabla F(A(S)) - \nabla F_S(A(S)) \right\| \right\|_p \\
		% 	\leq &  2 n \beta + 4 (\sqrt{2p} + 1)\sqrt{n} M + 8 \times 2^{\frac{1}{4}} \left(\sqrt{\frac{p}{e}}\right) (\sqrt{2p} + 1) n \beta \left\lceil \log_2 n \right\rceil.
		% \end{align*}

		According to Lemma \ref{lemma:equivalence-of-tail-and-moment} for any $\delta \in (0,1)$, with probability at least $1-\delta$, we have
			\begin{align*}
				&  \left\|
			\sum_{i=1}^n \rvg_i(S)
			\right\|_2   \\
				\leq &   4 \sqrt{n} M 
				+  8 \times 2^{\frac{3}{4}} \sqrt{e} n \beta \left\lceil \log_2 n \right\rceil\log{(e/\delta)} + ( 4e \sqrt{2n}M + 8 \times 2^{\frac{1}{4}} \sqrt{e} n \beta \left\lceil \log_2 n \right\rceil ) \sqrt{\log{e/\delta}}.
			\end{align*}
        We can finally combine the above inequality and (\ref{eq:theorem:generalization-via-stability-1-1}) to derive that, for any $\delta \in (0,1)$, with probability at least $1-\delta$, we have
		\begin{align*}
			& \E_A \left[ \| \nabla F(A(S)) - \nabla F_S(A(S)) \|_2 \right]  \\
			\leq & 2  \beta +  \frac{4M \left(1+e \sqrt{2 \log{(e/\delta)}} \right) }{\sqrt{n}} 
			+  8 \times 2^{\frac{1}{4}} (\sqrt{2} + 1) \sqrt{e} \beta \left\lceil \log_2 n \right\rceil\log{(e/\delta)}.
		\end{align*}
		The proof is completed.
	\end{proof}

	\begin{proof}[Proof of Theorem \ref{theorem:generalization-via-stability-in-gradients-sharper}]
		We can firstly write the following decomposition
		\begin{align*}
			&  n \nabla F(A(S)) - n \nabla F_S(A(S))  \\
			= &  \sum_{i=1}^n  \E_Z \left[ \nabla f(A(S);Z) -\E_{z_i'}\left[ \nabla f(A(S^{(i)}),Z)\right]\right] \\
			& + \sum_{i=1}^n \E_{z_i'} \left[ \E_Z \left[ \nabla f(A(S^{(i)}), Z)\right] - \nabla f(A(S^{(i)}), z_i)\right] \\
			& \quad + \sum_{i=1}^n \E_{z_i'} \left[ \nabla f(A(S^{(i)}), z_i) \right]   - \sum_{i=1}^n \nabla f(A(S), z_i).
		\end{align*}
		We denote that $\rvh_i(S) = \E_{z_i'} \left[ \E_Z \left[ \nabla f(A(S^{(i)}),Z)\right] - \nabla f(A(S^{(i)}),z_i)\right] $,
		we have
		\begin{align*}
			& n \nabla F(A(S)) - n \nabla F_S(A(S)) - \sum_{i=1}^n  \rvh_i(S)  \\
			= &  \sum_{i=1}^n  \E_Z \left[ \nabla f(A(S);Z) -\E_{z_i'}\left[ \nabla f(A(S^{(i)}),Z)\right]\right] \\ 
			& \quad+ \sum_{i=1}^n \E_{z_i'} \left[ \nabla f(A(S^{(i)}), z_i) \right]   - \sum_{i=1}^n \nabla f(A(S), z_i),
		\end{align*}
		% On one hand, 
		% \begin{align*}
			%     & \left\| n \nabla F(A(S)) - n \nabla F_S(A(S)) \right\| \\
			%     = & \left\| \sum_{i=1}^n  \E_Z \left[ \nabla f(A(S);Z)] -\E_{z_i'}\left[ \nabla f(A(S^{(i)}),Z)\right]\right]\right.\\
			%     & + \sum_{i=1}^n \E_{z_i'} \left[ \E_Z \left[ \nabla f(A(S^{(i)}), Z)\right] - \nabla f(A(S^{(i)}), z_i)\right] \\
			%     &\left. \quad + \sum_{i=1}^n \E_{z_i'} \left[ \nabla f(A(S^{(i)}), z_i) \right]   - \sum_{i=1}^n \nabla f(A(S), z_i)  \right\| \\
			%     & \leq 2 n \beta + \left\| \sum_{i=1}^n  \rvh_i(S) \right\| 
			% \end{align*}    
		which implies that
		\begin{equation}
			\begin{aligned}
				\label{eq:theorem-main-0-1}
				& \E_A \left\| n \nabla F(A(S)) - n \nabla F_S(A(S)) - \sum_{i=1}^n  \rvh_i(S) \right\|_2 \\
				= & \E_A \left\| \sum_{i=1}^n  \E_Z \left[ \nabla f(A(S);Z) -\E_{z_i'}\left[ \nabla f(A(S^{(i)}),Z)\right]\right] \right.\\ 
				&\left. \quad+ \sum_{i=1}^n \E_{z_i'} \left[ \nabla f(A(S^{(i)}), z_i) \right]   - \sum_{i=1}^n \nabla f(A(S), z_i) \right\|_2 \\
				& \leq 2 n \beta,        
			\end{aligned}
		\end{equation}
		where the inequality holds from the definition of uniform stability in gradients.
		
		Then, for any $i = 1,\ldots, n$, we define $\rvq_i(S) = \rvh_i(S) - \E_{S\backslash \{ z_i \},A} [\rvh_i(S)]$. It is easy to verify that $\E_{S\backslash \{ z_i \}} [\rvq_i(S)] = \bm {0}$ and $\E_{z_i}[\rvq_i(S)] = \E_{z_i}[\rvh_i(S)] - \E_{z_i} \E_{S\backslash \{ z_i \},A} [\rvh_i(S)] = \bm 0 - \bm 0 = \bm 0 $. Also, for any $j \in [n]$ with $j \neq i$ and $z_j'' \in \gZ$, we have the following inequality
		\begin{align*}
			& \E_A \left[ \| \rvq_i(S)  - \rvq_i(z_1, \ldots, z_{j-1}, z_j'', z_{j+1}, \ldots, z_n) \|_2 \right] \\
			\leq & \E_A \left[ \| \rvh_i(S) - \rvh_i(z_1, \ldots, z_{j-1}, z_j'', z_{j+1}, \ldots, z_n) \|_2 \right]  \\
			& + \E_A \left[ \| \E_{S\backslash \{z_i\}} [\rvh_i(S)] - \E_{S \backslash \{z_i\}} [ \rvh_i(z_1, \ldots, z_{j-1}, z_j'', z_{j+1}, \ldots, z_n)  ] \|_2 \right].
		\end{align*}
		For the first term $\E [\| \rvh_i(S) - \rvh_i(z_1, \ldots, z_{j-1}, z_j'', z_{j+1}, \ldots, z_n) \|_2 ]$, it can be bounded by $2\beta$ according to the definition of uniform stability. Similar result holds for the second term $ \E [\| \E_{S\backslash \{z_i\}} [\rvh_i(S)] - \E_{S \backslash \{z_i\}} [ \rvh_i(z_1, \ldots, z_{j-1}, z_j'', z_{j+1}, \ldots, z_n)  ] \|_2 ]$ according to the uniform stability. By a combination of the above analysis, we get $ \E_A [\| \rvq_i(S)  - \rvq_i(z_1, \ldots, z_{j-1}, z_j'', z_{j+1}, \ldots, z_n) \|_2 ] \leq   4 \beta$.
		
		Thus, we have verified that three conditions in Lemma \ref{theorem:moment-bound-for-sums} are satisfied for $\rvq_i(S)$. We have the following result for any $p \geq 2$
		\begin{align*}
			% \label{eq:theorem-main-0-2}
			\left\| \left\| \sum_{i=1}^n \rvq_i(S) \right\| \right\|_p \leq   2^{4+\frac{1}{4}} \left(\sqrt{\frac{p}{e}}\right) (\sqrt{2p} + 1) n \beta \left\lceil \log_2 n \right\rceil.
		\end{align*}

		According to Lemma \ref{lemma:equivalence-of-tail-and-moment} for any $\delta \in (0,1)$, with probability at least $1-\delta/3$, we have
		\begin{equation}
			\begin{aligned}
				\label{eq:theorem-main-0-2}
				 \left\| \sum_{i=1}^n \rvq_i(S) \right\|_2  
				\leq 16 \times 2^{\frac{3}{4}} \sqrt{e}  \beta \left\lceil \log_2 n \right\rceil\log{(3e/\delta)} + 16 \times 2^{\frac{1}{4}} \sqrt{e}  \beta \left\lceil \log_2 n \right\rceil\sqrt{\log{3e/\delta}}.
			\end{aligned}        
		\end{equation}

		Furthermore, we can derive that
		\begin{align*}
			& n \nabla F(A(S)) - n \nabla F_S(A(S)) - \sum_{i=1}^n \rvh_i(S) + \sum_{i=1}^n \rvq_i(S) \\
			= & n \nabla F(A(S)) - n \nabla F_S(A(S)) - \sum_{i=1}^n \E_{S \backslash \{ z_i \},A } [\rvh_i(S)] \\
			= & n \nabla F(A(S)) - n \nabla F_S(A(S)) - n \E_{S',A} [ \nabla F(A(S')) ] + n \E_{S,A} [\nabla F(A(S))].
		\end{align*}
		Due to the i.i.d. property between $S$ and $S'$, we know that $\E_{S'} [ \nabla F(A(S')) ] = \E_{S} [\nabla F(A(S))]$. Thus, combined above equality, (\ref{eq:theorem-main-0-1}) and (\ref{eq:theorem-main-0-2}), we have
        \begin{equation}
        \label{eq:theorem-main-1-0}
        \begin{aligned}
			&  \E_A \left[ \left\| n \nabla F(A(S)) - n \nabla F_S(A(S)) - n \E_{S,A} [\nabla F(A(S))] + n \E_{S',A} [\nabla F_S (A(S'))]  \right\|_2 \right] \\
			\leq & \E_A \left[ \left\| n \nabla F(A(S)) - n \nabla F_S(A(S)) - \sum_{i=1}^n \rvh_i(S)   \right\|_2 \right] \\
			\space + & \E_A \left[ \left\|  \sum_{i=1}^n \rvh_i(S) - n \E_{S,A} [\nabla F(A(S))] + n \E_{S',A} F_S[A(S')]  \right\|_2 \right] \\
			= & \E_A \left[ \left\|  n \nabla F(A(S)) - n \nabla F_S(A(S)) - \sum_{i=1}^n \rvh_i(S) \right\|_2 \right] + \E_A \left[ \left\|  \sum_{i=1}^n \rvq_i(S) \right\|_2 \right] \\
			\leq & 2n \beta +  16 \times 2^{\frac{3}{4}} \sqrt{e} n \beta \left\lceil \log_2 n \right\rceil\log{(3e/\delta)} + 16 \times 2^{\frac{1}{4}} \sqrt{e}  n \beta \left\lceil \log_2 n \right\rceil\sqrt{\log{3e/\delta}} \\
			\leq & 16 \times 2^{\frac{3}{4}} \sqrt{e} n \beta \left\lceil \log_2 n \right\rceil\log{(3e/\delta)} + 32 \sqrt{e} n \beta \left\lceil \log_2 n \right\rceil\sqrt{\log{3e/\delta}}.
		\end{aligned}
        \end{equation}
		
		This implies that for any $\delta \in (0,1)$, with probability at least $1-\delta/3$, we have
		\begin{equation}
			\begin{aligned}
				\label{eq:theorem-main-1-1}
				& \E_{A} \left[ \| \nabla F(A(S)) - \nabla F_S(A(S)) \|_2 \right]  \\
				\leq & \| \E_{S',A} [ \nabla F_S(A(S'))] - \E_{S,A} [\nabla  F(A(S)) ] \|_2 \\
				& \quad +  16 \times 2^{\frac{3}{4}} \sqrt{e}  \beta \left\lceil \log_2 n \right\rceil\log{(3e/\delta)} + 32 \sqrt{e}  \beta \left\lceil \log_2 n \right\rceil\sqrt{\log{3e/\delta}}.
			\end{aligned}        
		\end{equation}
		
		Next, we need to bound the term $\| \E_{S',A} [ \nabla F_S(A(S'))] - \E_{S,A} [\nabla F(A(S)) ]\|_2$. There holds that $\| \E_{S,A} \E_{S'} [\nabla  F_S(A(S')) ] \|_2 = \| \E_{S,A} [\nabla  F(A(S)) ] \|_2$. Then, by the Bernstein inequality in Lemma \ref{lemma:vector-bernstein-ineq}, we obtain the following inequality with probability at least $1-\delta/3$,
		\begin{small}
			\begin{align}
				\label{eq:theorem-main-1-2}
				\left\| \E_{S',A} [ \nabla F_S(A(S'))] - \E_{S,A} [\nabla  F(A(S)) ] \right\|_2
				\leq \sqrt{\frac{2 \E_{z_i} [ \| \E_{S',A} \nabla f(A(S'); z_i) \|_2^2 ]  \log(6/\delta) }{n}} + \frac{ M \log(6/\delta)}{n}.
			\end{align}        
		\end{small}
		
		Then using Jensen's inequality, we have 
		\begin{equation}
			\begin{aligned}
				\label{eq:theorem-main-1-3}
				& \E_{z_i} [ \| \E_{S',A} \nabla f(A(S'); z_i) \|_2^2 ] 
				\leq \E_{z_i}  \E_{S',A} \|  \nabla f(A(S'); z_i) \|_2^2  \\
				= & \E_{Z}  \E_{S',A} \|  \nabla f(A(S'); Z) \|_2^2 
				= \E_{Z}  \E_{S,A} \|  \nabla f(A(S); Z) \|_2^2.
			\end{aligned}
		\end{equation}
		Combing (\ref{eq:theorem-main-1-1}), (\ref{eq:theorem-main-1-2}) with (\ref{eq:theorem-main-1-3}), we finally obtain that with probability at least $1-2\delta/3$,
		\begin{equation}
			\begin{aligned}
				\label{eq:theorem-main-1-4}
				&  \E_A \left[ \| \nabla F(A(S)) - \nabla F_S(A(S)) \|_2 \right] \\
				\leq & \sqrt{\frac{2 \E_{Z}  \E_{S,A} \|  \nabla f(A(S); Z) \|_2^2 \log(6/\delta) }{n}} + \frac{ M \log(6/\delta)}{n} \\
				& \quad + 16 \times 2^{\frac{3}{4}} \sqrt{e}  \beta \left\lceil \log_2 n \right\rceil\log{(3e/\delta)} + 32 \sqrt{e}  \beta \left\lceil \log_2 n \right\rceil\sqrt{\log{3e/\delta}}.
			\end{aligned}
		\end{equation}
		
		Next, since $S = \{z_i, \ldots, z_n\}$, we define $p = p(z_1, \ldots, z_n) = \E_{Z,A} [\|\nabla f(A(S);Z)\|_2^2]$ and $p_i = p_i(z_1, \ldots, z_n) = \sup_{z_i \in \mathcal{Z}} p(z_i, \ldots, z_n)$. So there holds $p_i \geq p$ for any $i=1,\ldots,n$ and any $\{ z_1, \ldots, z_n \} \in \mathcal{Z}^n$. Also, there holds that 
		\begin{small}
			\begin{equation}
				\begin{aligned}
					\label{eq:theorem-main-1-5}
					& \sum_{i=1}^n (p_i-p)^2 \\
					= & \sum_{i=1}^n \left(\sup_{z_i \in \mathcal{Z}} \E_{Z,A}[ \| \nabla f(A(S);Z) \|_2^2 ] -  \E_{Z,A} [ \| \nabla f(A(S);Z) \|_2^2 ] \right)^2\\
					\leq & \sum_{i=1}^n \left(
					\E_{Z,A} \left[ \sup_{z_i \in \mathcal{Z}} \| \nabla f(A(S);Z) \|_2^2 -  \| \nabla f(A(S);Z) \|_2^2 \right]
					\right)^2 \\
					= & \sum_{i=1}^n \left(
					\E_{Z,A} \left[ \left( \sup_{z_i \in \mathcal{Z}} \| \nabla f(A(S);Z) \|_2 - \| \nabla f(A(S);Z) \|_2 \right) \left( \sup_{z_i \in \mathcal{Z}} \| \nabla f(A(S);Z) \|_2 + \| \nabla f(A(S);Z) \|_2 \right) \right]
					\right)^2 \\
					\leq & \sum_{i=1}^n \beta^2 \left(
					\E_{Z,A} \left[ \| \nabla f(A(S);Z) \|_2 + \sup_{z_i \in \mathcal{Z}} \| \nabla f(A(S);Z) \|_2\right] 
					\right)^2 \\
					\leq & n \beta^2 \left(
					2\E_{Z,A} [ \| \nabla f(A(S);Z) \|_2 + \beta]
					\right)^2 \\
					\leq & 8 n \beta^2 p + 2n \beta^4,
				\end{aligned}
			\end{equation}        
		\end{small}
		
		where the first inequality follows from the Jensen's inequality. The second and third inequalities follow from the definition of uniform stability in gradients. The last inequality holds from that $(a+b)^2 \leq 2a^2 + 2b^2$.
		
		From (\ref{eq:theorem-main-1-5}), we know that $p$ is $(8n\beta^2,2n\beta^4)$ weakly self-bounded. Thus, by Lemma \ref{lemma:self-bounded}, we obtain that with probability at least $1-\delta/3$,
		\begin{align*}
			&\E_{Z}  \E_{S,A} [\|  \nabla f(A(S); Z) \|_2^2] - \E_{Z,A} [ \| \nabla f(A(S);Z) \|_2^2 ] \\
			\leq & \sqrt{(16 n \beta^2 \E_S \E_Z [ \| \nabla f(A(S);Z) \|_2^2 ] + 4n \beta^4) \log(3/\delta)} \\
			= & \sqrt{(\E_S \E_{Z,A} [ \| \nabla f(A(S);Z) \|_2^2 ] + \frac{1}{4} \beta^2) 16 n \beta^2 \log(3/\delta)} \\
			\leq & \frac{1}{2} (\E_S \E_{Z,A} [ \| \nabla f(A(S);Z) \|_2^2 ] + \frac{1}{4} \beta^2 ) + 8n\beta^2 \log(3/\delta),
		\end{align*}
		where the last inequality follows from that $\sqrt{ab} \leq \frac{a+b}{2}$ for all $a, b > 0$.
		% Since $\E_Z[ \| \nabla f(A(S);Z) \|^2 ] \leq B \E $
		Thus, we have 
		\begin{align}
			\label{eq:theorem-main-1-6}
			\E_{Z}  \E_{S,A}[ \|  \nabla f(A(S); Z) \|_2^2] \leq  2\E_{Z,A} [ \| \nabla f(A(S);Z) \|_2^2 ] + \frac{1}{4} \beta^2 + 16 n\beta^2 \log(3/\delta).
		\end{align}
		
		Substituting (\ref{eq:theorem-main-1-6}) into (\ref{eq:theorem-main-1-4}), we finally obtain that with probability at least $1-\delta$ 
		\begin{equation}
			\begin{aligned}
				\label{eq:theorem-main-1-7}
				& \E_A \left[ \| \nabla F(A(S)) - \nabla F_S(A(S)) \|_2  \right] \\
				\leq & \sqrt{\frac{2 
						\left( 2\E_{Z,A} [ \| \nabla f(A(S);Z) \|_2^2 ] + \frac{1}{4} \beta^2 + 16 n\beta^2 \log(3/\delta)\right)
						\log(6/\delta) }{n}} + \frac{ M \log(6/\delta)}{n}  \\ 
				& \quad + 16 \times 2^{\frac{3}{4}} \sqrt{e}  \beta \left\lceil \log_2 n \right\rceil\log{(3e/\delta)} + 32 \sqrt{e}  \beta \left\lceil \log_2 n \right\rceil\sqrt{\log{3e/\delta}}.
			\end{aligned}
		\end{equation}
		
		According to inequality $\sqrt{a+b} = \sqrt{a} + \sqrt{b}$ for any $a, b > 0$, with probability at least $1-\delta$, we have
		\begin{align*}
			& \E_A \left[ \| \nabla F(A(S)) - \nabla F_S(A(S)) \|_2 \right] \\
			\leq & \sqrt{\frac{4 
					\E_{Z,A} [ \| \nabla f(A(S);Z) \|_2^2 ] 
					\log(6/\delta) }{n}} + \sqrt{\frac{ 
					\left( \frac{1}{2} \beta^2 + 32 n\beta^2 \log(3/\delta)\right)
					\log(6/\delta) }{n}} + \frac{ M \log(6/\delta)}{n}  \\ 
			& \quad 
			+ 16 \times 2^{\frac{3}{4}} \sqrt{e}  \beta \left\lceil \log_2 n \right\rceil\log{(3e/\delta)} + 32 \sqrt{e}  \beta \left\lceil \log_2 n \right\rceil\sqrt{\log{3e/\delta}}.
			% & \leq C \left(\sqrt{\frac{2
					%  \E_{Z} [ \| \nabla f(A(S);Z) \|^2 ] 
					% \log(6/\delta) }{n}}  + \frac{ B \log(6/\delta)}{n} +   e \beta \left\lceil \log_2 n \right\rceil\log{(3e/\delta)} \right),
		\end{align*}
		%   where $C$ is a positive constant.
		
		The proof is complete.
		
	\end{proof}

	\begin{proof}[Proof of Proposition \ref{proposition:main-SGC}]
		According to the proof in Theorem \ref{theorem:generalization-via-stability-in-gradients-sharper}, we have  the following inequality with probability at least $1-\delta$
		\begin{equation}
			\begin{aligned}
				& \E_A \left[ \| \nabla F(A(S)) - \nabla F_S(A(S)) \|_2 \right]  \\
				\leq & \sqrt{\frac{2 
						\left( 2\E_{Z,A} [ \| \nabla f(A(S);Z) \|_2^2 ] + \frac{1}{4} \beta^2 + 16 n\beta^2 \log(3/\delta)\right)
						\log(6/\delta) }{n}} \\ 
				& \quad + \frac{ M \log(6/\delta)}{n} + 16 \times 2^{\frac{3}{4}} \sqrt{e}  \beta \left\lceil \log_2 n \right\rceil\log{(3e/\delta)} + 32 \sqrt{e}  \beta \left\lceil \log_2 n \right\rceil\sqrt{\log{3e/\delta}}.
			\end{aligned}
		\end{equation}
		Since SGC implies that $\E_Z [\| \nabla f(\rvw; Z) \|_2^2 ] \leq \lambda \| \nabla F(\rvw) \|_2^2$, according to inequalities $\sqrt{ab} \leq \eta a + \frac{1}{\eta}b$ and $\sqrt{a+b} \leq \sqrt{a} + \sqrt{b}$ for any $a,b,\eta > 0$, we have the following inequality with probability at least $1-\delta$
		\begin{align*}
			& \E_A \left[ \| \nabla F(A(S)) - \nabla F_S(A(S)) \|_2 \right] \\
			\leq & \sqrt{\frac{2 
					\left( 2 \rho \E_A \left[ \| \nabla F(A(S)) \|_2^2 \right] + \frac{1}{4} \beta^2 + 16 n\beta^2 \log(3/\delta)\right)
					\log(6/\delta) }{n}} \\ 
			& \quad + \frac{ M \log(6/\delta)}{n} + 16 \times 2^{\frac{3}{4}} \sqrt{e}  \beta \left\lceil \log_2 n \right\rceil\log{(3e/\delta)} + 32 \sqrt{e}  \beta \left\lceil \log_2 n \right\rceil\sqrt{\log{3e/\delta}} \\
			\leq & \sqrt{\frac{ 
					\left(  \frac{1}{2} \beta^2 + 32 n\beta^2 \log(3/\delta)\right)
					\log(6/\delta) }{n}} + \frac{\eta}{1+ \eta} \E_A \left[ \| \nabla F(A(S)) \|_2 \right]  + \frac{1+\eta}{\eta} \frac{4\lambda M \log(6/\delta) }{n} \\ 
			& \quad + \frac{ M \log(6/\delta)}{n} + 16 \times 2^{\frac{3}{4}} \sqrt{e}  \beta \left\lceil \log_2 n \right\rceil\log{(3e/\delta)} + 32 \sqrt{e}  \beta \left\lceil \log_2 n \right\rceil\sqrt{\log{3e/\delta}}.
		\end{align*}
		which implies that 
		\begin{align*}
			\E_A \left[ \| \nabla F(A(S)) \|_2 \right]  \leq (1+\eta) \E_A \left[ \| \nabla F_S(A(S)) \|_2 \right] + C \frac{1+\eta}{\eta}
			\left(
			\frac{\lambda M}{n} \log(6/\delta) + \beta \log n \log{(1/\delta)} 
			\right).
		\end{align*}
		The proof is complete.
	\end{proof}

	\begin{proof}[Proof of Remark \ref{remark:3}]
		According to the proof in Theorem \ref{theorem:generalization-via-stability-in-gradients-sharper}, we have  the following inequality that with probability at least $1-\delta$
		\begin{equation}
			\begin{aligned}
				\label{eq:lemma:main-pl-1-1}
				& \E_A \left[ \| \nabla F(A(S)) - \nabla F_S(A(S)) \|_2 \right] \\
				\leq & \sqrt{\frac{4 
						\E_{Z,A} [ \| \nabla f(A(S);Z) \|_2^2 ] 
						\log(6/\delta) }{n}} + \sqrt{\frac{ 
						\left( \frac{1}{2} \beta^2 + 32 n\beta^2 \log(3/\delta)\right)
						\log(6/\delta) }{n}} + \frac{ M \log(6/\delta)}{n} \\ 
				& \quad  + 16 \times 2^{\frac{3}{4}} \sqrt{e}  \beta \left\lceil \log_2 n \right\rceil\log{(3e/\delta)} + 32 \sqrt{e}  \beta \left\lceil \log_2 n \right\rceil\sqrt{\log{3e/\delta}}.
			\end{aligned}
		\end{equation}
		Since $f(\rvw)$ is $\gamma$-smooth, we have
		\begin{equation}
			\begin{aligned}
				\label{eq:lemma:main-pl-1-2}
				& \E_{Z,A} [ \| \nabla f(A(S);Z) \|_2^2 ]  \\ 
				\leq & \E_{Z,A} [\| \nabla f(A(S);Z) - \nabla f(\rvw^*;Z) \|_2^2 + \| \nabla f(\rvw^*; Z) \|_2^2 ] \\
				\leq & \gamma^2 \E_A \left[ \| A(S) - \rvw^* \|_2^2 \right] + \E_{Z} [\| \nabla f(\rvw^*;Z) \|_2^2]
			\end{aligned}    
		\end{equation}
		Plugging (\ref{eq:lemma:main-pl-1-2}) into (\ref{eq:lemma:main-pl-1-1}), we have
		\begin{equation}
			\begin{aligned}
				\label{eq:lemma:main-pl-1-3}
				& \E_A \left[ \| \nabla F(A(S)) - \nabla F_S(A(S)) \|_2 \right] \\
				\leq & \sqrt{\frac{4 
						(\gamma^2 \E_A \left[ \| A(S) - \rvw^* \|_2^2 \right] + \E_{Z} [\| \nabla f(\rvw^*;Z) \|_2^2])
						\log(6/\delta) }{n}} + \sqrt{\frac{ 
						\left( \frac{1}{2} \beta^2 + 32 n\beta^2 \log(3/\delta)\right)
						\log(6/\delta) }{n}} \\ 
				& \quad + \frac{ M \log(6/\delta)}{n} + 16 \times 2^{\frac{3}{4}} \sqrt{e}  \beta \left\lceil \log_2 n \right\rceil\log{(3e/\delta)} + 32 \sqrt{e}  \beta \left\lceil \log_2 n \right\rceil\sqrt{\log{3e/\delta}} \\
				\leq &2\gamma \sqrt{\frac{ \E_A \left[ \| A(S) -\rvw^* \|_2^2 \right] \log{(6/\delta)}}{n}}
				+ \sqrt{\frac{4 
						\E_{Z} \left[\| \nabla f(\rvw^*;Z) \|_2^2\right]
						\log(6/\delta) }{n}} \\ 
				& \quad + \sqrt{\frac{ 
						\left( \frac{1}{2} \beta^2 + 32 n\beta^2 \log(3/\delta)\right)
						\log(6/\delta) }{n}} 
				+ \frac{ M \log(6/\delta)}{n} \\ 
				& \quad  \quad  \quad + 16 \times 2^{\frac{3}{4}} \sqrt{e}  \beta \left\lceil \log_2 n \right\rceil\log{(3e/\delta)} + 32 \sqrt{e}  \beta \left\lceil \log_2 n \right\rceil\sqrt{\log{3e/\delta}},
			\end{aligned}    
		\end{equation}
		where the second inequality holds because $\sqrt{a+b} \leq \sqrt{a} + \sqrt{b}$ for any $a, b > 0$, which means that 
		\begin{align*}
			& \E_A [ \| \nabla F(A(S)) - \nabla F_S(A(S)) \|_2 ] \\
			\lesssim & 
			\beta \log n \log (1/\delta) + \frac{\log (1/\delta)}{n} + \sqrt{\frac{\E_Z\left[\nabla \| f(\rvw^*; Z) \|_2^2 \right] \log(1/\delta) }{n}} +  \sqrt{\frac{ \E_A \left[ \| A(S) - \rvw^* \|_2^2 \right] \log (1/\delta)}{n}}.
		\end{align*}
		
		The proof is complete.
		
	\end{proof}

	\subsection{Proofs of Subsection \ref{subsec:sharper-excess-risk-bounds}}

\begin{lemma}
		\label{lemma:1}
		Assume for any $z$, $f(\cdot, z)$ is $M$-Lipschitz. If $A$ is $\beta$-uniformly-stable in gradients, then for any $\delta \in (0,1)$, the following inequality holds with probability at least $1-\delta$
		\begin{align*}
			& \E_A \left[ \| \nabla F(A(S)) - \nabla F_S(A(S)) \|_2^2 \right]  \\
			\leq & \sqrt{\frac{4 
					\E_{Z} \left[ \| \nabla f(A(S);Z) \|_2^2 \right] 
					\log{(6/\delta)} }{n}} 
			 + \sqrt{\frac{ 
					\left( \frac{1}{2} \beta^2 + 32 n\beta^2 \log(3/\delta)\right)
					\log{(6/\delta)} }{n}} \\
			& \quad + \frac{ M \log(6/\delta)}{n}  
			+ 16 \times 2^{\frac{3}{4}} \sqrt{e}  \beta \left\lceil \log_2 (n) \right\rceil\log{(3e/\delta)} 
			  + 32 \sqrt{e}  \beta \left\lceil \log_2 (n) \right\rceil\sqrt{\log{(3e/\delta)}}.
		\end{align*}
\end{lemma}

\begin{proof}[Proof of Lemma \ref{lemma:1}]
    According to proof of Theorem \ref{theorem:generalization-via-stability-in-gradients-sharper}, similar to (\ref{eq:theorem-main-1-0}), we have the following inequality.
        \begin{align*}
			&  \E_A \left[ \left\| n \nabla F(A(S)) - n \nabla F_S(A(S)) - n \E_{S,A} [\nabla F(A(S))] + n \E_{S',A} [\nabla F_S (A(S'))]  \right\|_2^2 \right] \\
			\leq & 2 \E_A \left[ \left\| n \nabla F(A(S)) - n \nabla F_S(A(S)) - \sum_{i=1}^n \rvh_i(S)   \right\|_2^2 \right] \\
			\space + & 2 \E_A \left[ \left\|  \sum_{i=1}^n \rvh_i(S) - n \E_{S,A} [\nabla F(A(S))] + n \E_{S',A} F_S[A(S')]  \right\|_2^2 \right] \\
			= & 2 \E_A \left[ \left\|  n \nabla F(A(S)) - n \nabla F_S(A(S)) - \sum_{i=1}^n \rvh_i(S) \right\|_2^2 \right] + 2 \E_A \left[ \left\|  \sum_{i=1}^n \rvq_i(S) \right\|_2^2 \right] \\
			\leq  & 8 n^2 \beta^2 +  1024 \sqrt{2} e n^2 \beta^2  \left(\left\lceil \log_2 n \right\rceil \right)^2 \log^2{(3e/\delta)} + 512 \sqrt{2} e n^2 \beta^2 \left( \left\lceil \log_2 n \right\rceil \right)^2 \log{(3e/\delta)} \\
			\leq & 8 n^2 \beta^2 + 2048 \sqrt{2} e n^2 \beta^2  \left(\left\lceil \log_2 n \right\rceil \right)^2 \log^2{(3e/\delta)},
		\end{align*}
        where the second inequality follows from the definition of uniform stability in gradients and Cauchy-Bunyakovsky-Schwarz inequality.

This implies that for any $\delta \in (0,1)$, with probability at least $1-\delta/3$, we have
		\begin{equation}
			\begin{aligned}
				\label{eq:lemma:1-1-1}
				& \E_{A} \left[ \| \nabla F(A(S)) - \nabla F_S(A(S)) \|_2^2 \right]  \\
				\leq & \| \E_{S',A} [ \nabla F_S(A(S'))] - \E_{S,A} [\nabla  F(A(S)) ] \|_2^2 
				+  8  \beta^2 + 2048 \sqrt{2} e   \beta^2  \left(\left\lceil \log_2 n \right\rceil \right)^2 \log^2{(3e/\delta)}.
			\end{aligned}        
		\end{equation}

According to (\ref{eq:theorem-main-1-2}) (\ref{eq:theorem-main-1-3}) and (\ref{eq:theorem-main-1-6}), using Cauchy-Bunyakovsky-Schwarz inequality, for any $\delta \in (0,1)$ with probability at least $1- 2\delta/3$, we have 
    \begin{align*}
        & \| \E_{S',A} [ \nabla F_S(A(S'))] - \E_{S,A} [\nabla  F(A(S)) ] \|_2^2 \\
        \leq & \frac{  \left(8\E_{Z,A} [ \| \nabla f(A(S);Z) \|_2^2 ] +  \beta^2 + 64 n\beta^2 \log(3/\delta) \right)  \log(6/\delta) }{n} + \frac{ 2 M^2 \log(6/\delta)}{n^2}.
    \end{align*}

Combing above inequality with (\ref{eq:lemma:1-1-1}), with probability at least $1-\delta$, we have
\begin{align*}
    & \E_{A} \left[ \| \nabla F(A(S)) - \nabla F_S(A(S)) \|_2^2 \right]  \\
				\leq & \frac{  \left(8\E_{Z,A} [ \| \nabla f(A(S);Z) \|_2^2 ] +  \beta^2 + 64 n\beta^2 \log(3/\delta) \right)  \log(6/\delta) }{n} + \frac{ 2 M^2 \log(6/\delta)}{n^2} \\
				& \quad +  8  \beta^2 + 2048 \sqrt{2} e   \beta^2  \left(\left\lceil \log_2 n \right\rceil \right)^2 \log^2{(3e/\delta)}.
\end{align*}

The proof is complete.

\end{proof}

	\begin{proof}[Proof of Theorem \ref{theorem:excess-risk-bounds}]

Since $f(\rvw)$ is $\gamma$-smooth, we have
		\begin{equation}
			\begin{aligned}
				\label{eq:theorem:remark-0-0}
				& \E_{Z,A} [ \| \nabla f(A(S);Z) \|_2^2 ]  \\ 
				\leq & \E_{Z,A} [\| \nabla f(A(S);Z) - \nabla f(\rvw^*;Z) \|_2^2 + \| \nabla f(\rvw^*; Z) \|_2^2 ] \\
				\leq & \gamma^2 \E_A \left[ \| A(S) - \rvw^* \|_2^2 \right] + \E_{Z} [\| \nabla f(\rvw^*;Z) \|_2^2]
			\end{aligned}    
		\end{equation}
Combing above inequality with Lemma \ref{lemma:1}, with probability least $1-\delta$, we have
\begin{equation}
    \begin{aligned}
        \label{eq:theorem:remark-0-1}
            & \E_{A} \left[ \| \nabla F(A(S)) - \nabla F_S(A(S)) \|_2^2 \right]  \\
				\leq & \frac{8 \gamma^2 \E_A \left[ \| A(S) - \rvw^* \|_2^2 \right] \log(6/\delta) }{n} 
                + \frac{8 \E_Z \left[ \| \nabla f(\rvw^*; Z) \|_2^2 \right] \log(6/\delta) }{n}  \\
				& \quad + 65 \beta^2 \log(3/\delta)  \log(6/\delta)  + \frac{ 2 M^2 \log(6/\delta)}{n^2}  +  8 \beta^2 + 2048 \sqrt{2} e   \beta^2  \left(\left\lceil \log_2 n \right\rceil \right)^2 \log^2{(3e/\delta)}.
    \end{aligned}
\end{equation}

		When $F(\rvw)$ satisfies the PL condition and $ \rvw^* $ is the projection of $A(S)$ onto the solution set $\argmin_{\rvw \in \gW} F(\rvw)$, there holds the following error bound property (refer to Theorem 2 in \cite{karimi2016linear})
		\begin{align*}
			\| \nabla F(A(S)) \|_2 \geq \mu \| A(S) - \rvw^* \|_2.
		\end{align*}
		Thus, we have
		\begin{align*}
			& \mu^2 \E_A \left[ \| A(S) - \rvw^* \|_2^2 \right] \leq \E_A \left[ \| \nabla F(A(S)) \|_2^2 \right] \\
            \leq & 2 \E_A \left[ \| \nabla F(A(S)) - \nabla F_S(A(S)) \|_2^2 \right] + 2\E_A \left[ \| \nabla F_S(A(S)) \|_2^2 \right] \\
			% \leq & 2\E_A [\| \nabla F_S(A(S)) \|_2^2 ]  
			% + \frac{ 2 \left(8\E_{Z,A} [ \| \nabla f(A(S);Z) \|_2^2 ] +  \beta^2 + 64 n\beta^2 \log(3/\delta) \right)  \log(6/\delta) }{n} + \frac{ 4 M^2 \log(6/\delta)}{n^2} \\
			% 	& \quad +  16  \beta^2 + 4096 \sqrt{2} e   \beta^2  \left(\left\lceil \log_2 n \right\rceil \right)^2 \log^2{(3e/\delta)} \\
                \leq & 2\E_A [\| \nabla F_S(A(S)) \|_2^2 ]  
                + \frac{16 \gamma^2 \E_A \left[ \| A(S) - \rvw^* \|_2^2 \right] \log(6/\delta) }{n} 
                + \frac{16 \E_Z \left[ \| \nabla f(\rvw^*; Z) \|_2^2 \right] \log(6/\delta) }{n}  \\
				& \quad + 130 \beta^2 \log(3/\delta)  \log(6/\delta)  + \frac{ 4 M^2 \log(6/\delta)}{n^2}  +  16  \beta^2 + 4096 \sqrt{2} e   \beta^2  \left(\left\lceil \log_2 n \right\rceil \right)^2 \log^2{(3e/\delta)}.
		\end{align*}
		When $n \geq \frac{32\gamma^2 \log{(6/\delta)}}{\mu^2}$, we have $  \frac{ 16 \gamma^2\log{(6/\delta)}}{n} \leq \frac{\mu^2}{2}$, then we can derive that  
		\begin{align*}
			& \mu^2 \E_A \left[ \| A(S) - \rvw^* \|_2^2 \right] \\
			\leq & 2\E_A [\| \nabla F_S(A(S)) \|_2^2 ]  
                + \frac{\mu^2}{2} \E_A \left[ \| A(S) - \rvw^* \|_2^2 \right]  
                + \frac{16 \E_Z \left[ \| \nabla f(\rvw^*; Z) \|_2^2 \right] \log(6/\delta) }{n}  \\
				& \quad + 130 \beta^2 \log(3/\delta)  \log(6/\delta)  + \frac{ 4 M^2 \log(6/\delta)}{n^2}  +  16  \beta^2 + 4096 \sqrt{2} e   \beta^2  \left(\left\lceil \log_2 n \right\rceil \right)^2 \log^2{(3e/\delta)}.
		\end{align*}
		This implies that 
		\begin{equation}
			\begin{aligned}
				\label{eq:lemma:main-pl-1-4}
				& \E_A [ \| A(S) - \rvw^* \|_2^2 ] \\
				\leq  & \frac{2}{\mu^2}
				\Big(
				 2\E_A [\| \nabla F_S(A(S)) \|_2^2 ]  
                + \frac{16 \E_Z \left[ \| \nabla f(\rvw^*; Z) \|_2^2 \right] \log(6/\delta) }{n}  \\
				& \quad + 130 \beta^2 \log(3/\delta)  \log(6/\delta)  + \frac{ 4 M^2 \log(6/\delta)}{n^2}  +  16  \beta^2 + 4096 \sqrt{2} e   \beta^2  \left(\left\lceil \log_2 n \right\rceil \right)^2 \log^2{(3e/\delta)}
				\Big).
			\end{aligned}
		\end{equation}
		Then, substituting (\ref{eq:lemma:main-pl-1-4}) into (\ref{eq:theorem:remark-0-1}), when $n \geq \frac{32\gamma^2 \log{(6/\delta)}}{\mu^2}$, with probability at least $1-\delta$
		\begin{equation}
			\label{eq:lemma:main-pl-1-5}
			\begin{aligned}
				& \E_A \left[ \| \nabla F(A(S)) - \nabla F_S(A(S)) \|_2^2 \right] \\
				\leq & \E_A \left[ \| \nabla F_S(A(S)) \|_2^2 \right]
				+ + \frac{16 \E_Z \left[ \| \nabla f(\rvw^*; Z) \|_2^2 \right] \log(6/\delta) }{n} + 130 \beta^2 \log(3/\delta)  \log(6/\delta) \\
				& \quad   + \frac{ 4 M^2 \log(6/\delta)}{n^2}  +  16 \beta^2 + 4096 \sqrt{2} e   \beta^2  \left(\left\lceil \log_2 n \right\rceil \right)^2 \log^2{(3e/\delta)}.
			\end{aligned} 
		\end{equation}
		
		Since $F$ satisfies the PL condition with $\mu$, we have
		\begin{align}
			\label{eq:theorem:remark-1-1}
			\E_A [F(A(S))] -F(\rvw^{\ast}) \leq \frac{ \E_A \left[ \left\| \nabla F(A(S)) \right\|_2^2 \right] }{2 \mu}, \quad \forall  \rvw \in \mathcal{W}.
		\end{align} 
		
		So to bound $\E_A [F(A(S))] -F(\rvw^{\ast})$, we need to bound the term $ \E_A \left[ \left\| \nabla F(A(S)) \right\|_2^2 \right]$.
		And there holds 
		\begin{align}
			\label{eq:theorem:remark-1-2}
			\E_A \left[ \left\| \nabla F(A(S)) \right\|_2^2 \right] =  2 \E_A \left[ \left\| \nabla F(A(S))- \nabla F_S(A(S)) \right\|_2^2 \right] + 2 \E_A \left[ \| \nabla F_S(A(S)) \|_2^2 \right].
		\end{align}

		On the other hand, when $f$ is nonegative and $\gamma$-smooth, from Lemma 4.1 of \cite{srebro2010optimistic}, we have
		\begin{align*}
			\| \nabla f(\rvw^*;z) \|_2^2 \leq 4 \gamma f(\rvw^*;z),
		\end{align*}
		which implies that
		\begin{align}
			\label{eq:theorem:remark-1-3}
			\E_Z [\| \nabla f(\rvw^*;Z) \|_2^2] \leq 4 \gamma \E_Z f(\rvw^*;Z) = 4 \gamma F(\rvw^*).
		\end{align}
		
		% From Lemma \ref{lemma:main-pl},
		% if $f$ is $M$-Lipschitz and $\gamma$-smooth and $F$ satisfies PL condition with $\mu$, for any $\delta >0 $,
		%  when $n \geq \frac{16\gamma^2 \log{(6/\delta)}}{\mu^2}$, with probability at least $1- \delta$, there holds 
		% \begin{align*}
			% &\hphantom{{}={}}\left\| \nabla F (A(S))- \nabla F_S(A(S)) \right\|_2 \\
			% &\leq  \| \nabla F_S(A(S)) \|_2 + C \left(
			%          \sqrt{\frac{2
					%          \E_{Z} [ \| \nabla f(\rvw^*;Z) \|_2^2 ] 
					%         \log(6/\delta) }{n}}  + \frac{ M \log(6/\delta)}{n} +   e \beta \left\lceil \log_2 n \right\rceil\log{(3e/\delta)}
			%         \right)\\
			% &\leq \| \nabla F_S(A(S)) \|_2 + C \left(
			%          \sqrt{\frac{8 \gamma
					%          F(\rvw^*) 
					%         \log(6/\delta) }{n}}  + \frac{ M \log(6/\delta)}{n} +   e \beta \left\lceil \log_2 n \right\rceil\log{(3e/\delta)}
			%         \right), 
			% \end{align*}    
		% where $C$ is a positive constant and the last inequality follows from Lemma 4.1 of \cite{srebro2010optimistic} when $f$ is nonegative and $\gamma$-smooth (see (\ref{eq:theorem:sgd-1-1-5})). 
		
		Combing (\ref{eq:lemma:main-pl-1-5}),(\ref{eq:theorem:remark-1-1}), (\ref{eq:theorem:remark-1-2}) and (\ref{eq:theorem:remark-1-3}), using Cauchy-Bunyakovsky-Schwarz inequality, we can derive that
		\begin{align*}
		    & \E_A \left[ F(A(S)) \right] - F(\rvw^{\ast}) \\
            \lesssim & \frac{ \E_A \left[ \|\nabla F_S(A(S))\|_2^2 \right]}{\mu} + \frac{\gamma F(\rvw^{\ast}) \log(1/\delta)}{\mu n} + \frac{ M^2 \log^2(1/\delta)}{\mu n^2} + \frac{\beta^2 \log^2 n \log^2(1/\delta)}{\mu}. 
		\end{align*}
		
		The proof is complete.
		
	\end{proof}

	\section{Proofs of ERM}
	
	\begin{proof}[Proof of Lemma \ref{lemma:stability-of-erm}]
		Since $F_{S^{(i)}} (\rvw) = \frac{1}{n} \left(f(\rvw;z'_i) + \sum_{j \neq i } f(\rvw,z_j) \right)$, we have
		\begin{align*}
			&\hphantom{{}={}} F_S(\hat{\rvw}^{\ast}(S^{(i)})) - F_S(\hat{\rvw}^{\ast}(S)) \\
			& = \frac{f(\hat{\rvw}^{\ast}(S^{(i)});z_i) - f(\hat{\rvw}^{\ast}(S);z_i)}{n} + \frac{\sum_{j \neq i} (f(\hat{\rvw}^{\ast}(S^{(i)});z_j) - f(\hat{\rvw}^{\ast}(S);z_j))}{n} \\
			&= \frac{f(\hat{\rvw}^{\ast}(S^{(i)});z_i) - f(\hat{\rvw}^{\ast}(S);z_i)}{n} + \frac{f(\hat{\rvw}^{\ast}(S);z'_i) - f(\hat{\rvw}^{\ast}(S^{(i)});z'_i)}{n}\\ 
			&\hphantom{{}={}}+ \left( F_{S^{(i)}}(\hat{\rvw}^{\ast}(S^{(i)})) - F_{S^{(i)}}(\hat{\rvw}^{\ast}(S)) \right)\\
			&\leq \frac{f(\hat{\rvw}^{\ast}(S^{(i)});z_i) - f(\hat{\rvw}^{\ast}(S);z_i)}{n} + \frac{f(\hat{\rvw}^{\ast}(S);z'_i) - f(\hat{\rvw}^{\ast}(S^{(i)});z'_i)}{n} \\
			&\leq \frac{2M}{n} \| \hat{\rvw}^{\ast}(S^{(i)}) - \hat{\rvw}^{\ast}(S) \|_2,
		\end{align*}
		where the first inequality follows from the fact that $\hat{\rvw}^{\ast}(S^{(i)})$ is the ERM of $F_{S^{(i)}}$ and the second inequality follows from the Lipschitz property.
		Furthermore, for $\hat{\rvw}^{\ast}(S^{(i)})$,
		% , the convexity of $f$ and 
		the PL condition and smoothness property of $F_S$ imply that its closest optima point of $F_{S}$ is $\hat{\rvw}^{\ast}(S)$ (the global minimizer of $F_{S}$ is unique~\citep{xu2024towards}). Then, $F_S$ satisfies the quadratic growth property~\citep{karimi2016linear}, which means that 
		\begin{align*}
			&F_S(\hat{\rvw}^{\ast}(S^{(i)})) - F_S(\hat{\rvw}^{\ast}(S)) \geq \frac{\mu}{2} \| \hat{\rvw}^{\ast}(S^{(i)}) -  \hat{\rvw}^{\ast}(S)\|_2^2.
		\end{align*}

		Then we get
		\begin{align*}
			\frac{\mu}{2} \| \hat{\rvw}^{\ast}(S^{(i)}) -  \hat{\rvw}^{\ast}(S)\|_2^2 \leq F_S(\hat{\rvw}^{\ast}(S^{(i)})) - F_S(\hat{\rvw}^{\ast}(S)) \leq \frac{2M}{n} \| \hat{\rvw}^{\ast}(S^{(i)}) - \hat{\rvw}^{\ast}(S) \|_2,
		\end{align*}
		which implies that $\| \hat{\rvw}^{\ast}(S^{(i)}) -  \hat{\rvw}^{\ast}(S)\|_2 \leq \frac{4M}{n\mu}$.
		Combined with the smoothness property of $f $ we obtain that for any $S^{(i)}$ and $S$
		\begin{align*}
			\forall z \in \mathcal{Z}, \quad \left\| \nabla f(\hat{\rvw}^{\ast}(S^{(i)});z) - 
			\nabla f(\hat{\rvw}^{\ast}(S);z) \right\|_2^2 \leq \frac{ 16 M^2 \gamma^2}{n^2 \mu^2 }.
		\end{align*}
		The proof is complete.
	\end{proof}

	\begin{proof}[Proof of Theorem \ref{theorem:erm}]

		From Lemma \ref{lemma:stability-of-erm}, we have $\left\| \nabla f(\hat{\rvw}^*(S);z) - \nabla f(\hat{\rvw}^*(S');z) \right\|_2 \leq  \frac{4M \gamma}{n\mu}$.
		Since $ \nabla F_S(\hat{\rvw}^*)  = 0$, we have $\| \nabla F_S(\hat{\rvw}^*) \|_2 = 0$. According to Theorem \ref{theorem:excess-risk-bounds}, since ERM is a deterministic algorithm, we can derive that
		\begin{align*}
			F(\hat{\rvw}^*(S)) -F(\rvw^{\ast}) = \E [F(\hat{\rvw}^*(S))] -F(\rvw^{\ast}) \lesssim
			\frac{\gamma F(\rvw^*) \log{(1/\delta)}}{\mu n} + \frac{ M^2 \gamma^2 \log^2 n \log^2(1/\delta)}{ \mu^3 n^2}.
		\end{align*}

	\end{proof}

	\section{Proofs of PGD}

	\begin{proof}[Proof of Theorem \ref{theorem:gd}]
		
		According to smoothness assumption and $\eta = 1/\gamma$, we can derive that 
		\begin{align*}
			& F_S(\rvw_{t+1}) - F_S(\rvw_t) \\
			\leq & \langle \rvw_{t+1} - \rvw_t, \nabla F_S(\rvw_t) \rangle + \frac\gamma2 \| \rvw_{t+1} - \rvw_t\|_2^2 \\
			= & - \eta_t \|\nabla F_S(\rvw_t) \|_2^2 + \frac\gamma2 \eta_t^2 \| \nabla F_S(\rvw_t) \|_2^2 \\
			= & \left( \frac\gamma2 \eta_t^2 - \eta_t \right) \| \nabla F_S(\rvw_t) \|_2^2 \\
			\leq & - \frac12 \eta_t \| \nabla F_S(\rvw_t) \|_2^2.
		\end{align*}
		
		According to above inequality and the assumptions that $F_S$ satisfies the PL condition with parameter $\mu$, we can prove that 
		\begin{align*}
			F_S(\rvw_{t+1}) - F_S(\rvw_t) 
			\leq - \frac12 \eta_t \| \nabla F_S(\rvw_t) \|_2^2
			\leq - \mu \eta_t (F_S(\rvw_t) - F_S(\hat{\rvw}^*)),
		\end{align*}
		which implies that 
		\begin{align*}
			F_S(\rvw_{t+1}) - F_S(\hat{\rvw}^*) 
			\leq (1-\mu \eta_t) (F_S(\rvw_{t}) - F_S(\hat{\rvw}^*)).
		\end{align*}
		
		According to the property for $\gamma$-smooth for $F_S$ and the PL condition property with parameter $\mu$ for $F_S$, we have
		\begin{align*}
			\frac{1}{2\gamma} \| \nabla F_S(\rvw) \|_2^2 \leq F_S(\rvw) - F_S(\hat{\rvw}^*) \leq \frac{1}{2\mu} \| \nabla F_S(\rvw) \|_2^2,
		\end{align*}
		which means that $\frac\mu\gamma \leq 1$. 
		
		Then If $\eta_t = 1/\gamma$,  $ 0 \leq 1 - \mu \eta_t < 1 $, taking over $T$ iterations, we get
		\begin{align}
			\label{eq:theorem:gd-0-1}
			F_S(\rvw_{t+1}) - F_S(\hat{\rvw}^*) 
			\leq (1-\mu \eta_t)^T (F_S(\rvw_{t}) - F_S(\hat{\rvw}^*)).
		\end{align}

		Combined (\ref{eq:theorem:gd-0-1}), the smoothness of $F_S$ and the nonnegative property of $f$, it can be derive that 
		\begin{align*}
			\| \nabla F_S(\rvw_{T+1})) \|_2^2  = O \left( (1-\frac\mu\gamma)^T \right).
		\end{align*}

		On the other hand,
		from Lemma \ref{lemma:stability-of-gd}, we have $\left\| \nabla f(\rvw_{T+1}(S);z) - \nabla f(\rvw_{T+1}(S');z) \right\|_2^2 \leq  \frac{4 M^2 \gamma^2}{n^2 \mu^2}$.
		Since $ \|  \nabla F_S(\rvw_{T+1}) \|_2 = O \left( (1-\frac\mu\gamma)^T \right)$ and PGD is a deterministic algorithm, according to Theorem \ref{theorem:excess-risk-bounds}, we can derive that
		\begin{align*}
			F(\rvw_{T+1}) -F(\rvw^{\ast}) & = \E [F(\rvw_{T+1})] -F(\rvw^{\ast}) \\
            & \lesssim \left(1-\frac\mu\gamma \right)^{2T} + 
			\frac{\gamma F(\rvw^*) \log{(1/\delta)}}{ \mu n} + \frac{ M^2 \gamma^2 \log^2 n \log^2(1/\delta)}{ \mu^3 n^2}.
		\end{align*}
		
		Let $T \asymp \log n$, we have
		\begin{align*}
			F(\rvw_{T+1}) -F(\rvw^{\ast}) \lesssim 
			\frac{\gamma F(\rvw^*) \log{(1/\delta)}}{ \mu n} + \frac{ M^2 \gamma^2 \log^2 n \log^2(1/\delta)}{ \mu^3 n^2}.
		\end{align*}
		The proof is complete.
	\end{proof}

	\section{Proofs of SGD}

	We first introduce the necessary lemma for the optimization error bound.
	
	%%%%%%%%%%%%%%%%%%%%%%%%%%%%%%%%%%%%%%%%%%%%%555
	% \begin{lemma}[\citep{lei2021learning}]
	% 	\label{lemma:empirical-risk-gradient}
	% 	Let $\{ \rvw_t\}_t$ be the sequence produced by SGD with $\eta_t \leq \frac{1}{2\gamma}$ for all $t \in \mathbb{N}$. Suppose Assumption \ref{assumption:SGD} hold. Assume for all $z$, the function $\rvw \mapsto f(\rvw; z)$ is $M$-Lipschitz and $\gamma$-smooth. Then, for any $\delta \in (0,1)$, with probability at least $1-\delta$, there holds that
	% 	\begin{align*}
	% 		\sum_{k=1}^t \eta_k \| \nabla F_S (\rvw_k) \|_2^2 = O
	% 		\left(
	% 		\log\frac{1}{\delta} + \sum_{k=1}^t \eta_k^2
	% 		\right).
	% 	\end{align*}
	% \end{lemma}
	%%%%%%%%%%%%%%%%%%%%%%%%%%%%%%%%%%%
	
	% \begin{lemma}
		%     Let $\{ \rvw_t\}_t$ be the sequence produced by SGD with $\eta_t \leq \frac{1}{2\gamma}$ for all $t \in \mathbb{N}$. Suppose Assumption \ref{assumption:SGD} hold. Assume for all $z$, the function $\rvw \mapsto f(\rvw; z)$ is $M$-Lipschitz and $\gamma$-smooth. Then, for any $\delta \in (0,1)$, with probability at least $1-\delta$, there holds uniformly for all $t=1, \ldots, T$
		%     \begin{align*}
			%         \| \rvw_{t+1} - \rvw^* \|_2 = O
			%         \left( \left( \sum_{k=1}^T \eta_k^2 \right)^{1/2} + 1 \right)
			%         \left( \left(\sum_{k=1}^t \eta_k \right)^{1/2} + 1 \right)
			%         \log\frac{1}{\delta}.
			%     \end{align*}
		% \end{lemma}

	%%%%%%%%%%%%%%%%%%%%%%%%%%%%%%%%5
	\begin{lemma}[\citep{karimi2016linear}]
		\label{lemma:opt-bound-sgd}
		Let $\{ \rvw_t\}_t$ be the sequence produced by SGD with $\eta_t = \frac{2t + 1}{2 \mu(t+1)^2}$. Suppose Assumption \ref{assumption:SGD} hold. Assume for all $z$, the function $\rvw \mapsto f(\rvw; z)$ is $M$-Lipschitz and $\gamma$-smooth and assume $F_S$ satisfies PL condition with parameter $\mu$. There holds that
		\begin{align*}
			\E_A \left[ F_S(\rvw_{T+1}) \right] - F_S(\hat{\rvw}^*) \leq
			\frac{M^2 \gamma }{ 2 T \mu^2}.
		\end{align*}
	\end{lemma}
	%%%%%%%%%%%%%%%%%%%%%%%%%%%%%%%%%%%%%%%%%5

	%%%%%%%%%%%%%%%%%%%%%%%%%%%%%%%%%%%
	% \begin{lemma}[\citep{lei2021learning}]
	% 	\label{lemma:opt-eta}
	% 	Let $e$ be the base of the natural logarithm. There holds the following elementary inequalities.
	% 	\begin{itemize}
	% 		\item If $\theta \in (0,1)$, then $\sum_{k=1}^t k^{-\theta} \leq t^{1-\theta} / (1-\theta)$;
	% 		\item If $\theta = 1$, then $\sum_{k=1}^t k^{-\theta} \leq \log(et)$;
	% 		\item If $\theta > 1$, then $\sum_{k=1}^t k^{-\theta} \leq \frac{\theta}{\theta - 1}$.
	% 	\end{itemize}
	% \end{lemma}
	%%%%%%%%%%%%%%%%%%%%%%%%%%%%%%%%%%%%%%%%%%%%%%%%%%%%%%%%%55

	\begin{proof}[Proof of Lemma \ref{lemma:stability-of-sgd}]
		We have known that $F_{S^{(i)}} (\rvw) = \frac{1}{n} \left(f(\rvw;z'_i) + \sum_{j \neq i } f(\rvw;z_j) \right)$. 
		We denote $\hat{\rvw}^{\ast}(S^{(i)})$ be the ERM of $F_{S^{(i)}} (\rvw)$ and $\hat{\rvw}^{\ast}_S$ be the ERM of $F_{S} (\rvw)$.
		From Lemma \ref{lemma:stability-of-erm}, we know that 
		\begin{align*}
			\forall z \in \mathcal{Z}, \quad \left\|\nabla f(\hat{\rvw}^{\ast}(S^{(i)});z) - \nabla f(\hat{\rvw}^{\ast}(S);z) \right\|_2^2 \leq \frac{16 M^2 \gamma^2}{n^2 \mu^2}.
		\end{align*}
		Also, for $\rvw_t$,
		% , the convexity of $f$ and 
		the PL condition property of $F_{S}$ implies that its closest optima point of $F_{S}$ is $\hat{\rvw}^{\ast}(S)$ (the global minimizer of $F_{S}$ is unique~\citep{xu2024towards}). Then, there holds that 
		% The QG property implies that
		\begin{align*}
			\frac{\mu}{2} \E_A \left[ \| \rvw_t -  \hat{\rvw}^{\ast}(S)\|_2^2 \right] \leq \E_A \left[ F_S(\rvw_t) \right] - F_S (\hat{\rvw}^{\ast}(S)) \leq \frac{M^2 \gamma }{ 2 t \mu^2}.
		\end{align*}
		Thus we have $ \E_A [ \|\rvw_t -  \hat{\rvw}^{\ast}(S)\|_2^2 ] \leq \frac{M^2 \gamma}{t \mu^3}$. A similar relation holds between $\hat{\rvw}^{\ast}(S^{(i)})$ and $\rvw_t^i$.
		Combined with the Lipschitz property of $f $ we obtain that for $\forall z \in \mathcal{Z}$, there holds that
		\begin{align*}
			&\hphantom{{}={}} \E_A \left[ \left\|\nabla f(\rvw_t;z) - \nabla f(\rvw_t^i;z) \right\|_2^2 \right] \\
			&\leq 3 \E_A \left[ \left\|\nabla f(\rvw_t;z) - \nabla f(\hat{\rvw}^{\ast}(S);z) \right\|_2^2 \right] + 3 \left\| \nabla f(\hat{\rvw}^{\ast}(S);z) - \nabla f(\hat{\rvw}^{\ast}(S^{(i)});z) \right\|_2^2 \\
			&\hphantom{{}={}}+ 3 \E_A \left[ \left\| \nabla f(\hat{\rvw}^{\ast}(S^{(i)});z) - \nabla f(\rvw_t^i;z) \right\|_2^2 \right] \\
			&\leq 3 \gamma^2 \E_A \left[ \|\rvw_t -  \hat{\rvw}^{\ast}(S) \|_2^2 \right] + \frac{48 M^2 \gamma^2}{n^2\mu^2}  + 3 \gamma^2 \E_A \left[ \| \hat{\rvw}^{\ast}(S^{(i)}) - \rvw_t^i\|_2^2 \right] \\  
            & \leq \frac{6M^2 \gamma^3}{t \mu^3} + \frac{48 M^2 \gamma^2}{n^2\mu^2}.
		\end{align*}
		% According to Lemma \ref{lemma:opt-bound-sgd}, for any dataset $S$, the optimization error $\epsilon_{opt}(\rvw_t)$ is uniformly bounded by the same upper bound. Therefore, we write $\left\| \nabla f(\rvw_t;z) - \nabla f(\rvw_t^i;z) \right\|_2 \leq  2\gamma \sqrt{\frac{2\epsilon_{opt}(\rvw_t)}{\mu} } + \frac{4M \gamma}{n\mu}$ here. 

		The proof is complete.
	\end{proof}

	\begin{proof}[Proof of Theorem \ref{theorem:sgd-2}]

		From Lemma \ref{lemma:stability-of-sgd}, we have 
		\begin{equation}
			\begin{aligned}
				\label{eq:theorem:sgd-2-1-3}
				   \E_A \left[ \left\|\nabla f(\rvw_{T+1};z) - \nabla f(\rvw_{T+1}^i;z) \right\|_2^2 \right]    \leq \frac{6M^2 \gamma^3}{T \mu^3} + \frac{48 M^2 \gamma^2}{n^2\mu^2}.
			\end{aligned}
		\end{equation}
        On the other hand, according to the smoothness property of $F_S$ and Lemma \ref{lemma:opt-bound-sgd}, we have
		\begin{align}
			\label{eq:theorem:sgd-2-1-5}
			 \E_A \left[ \left\| \nabla F_S(\rvw_{T+1}) \right\|_2^2 \right] \leq \frac{M^2 \gamma^2 }{ T \mu^2} .
		\end{align}
		
		Using Theorem \ref{theorem:excess-risk-bounds}, with probability at least $1-\delta$, we have
        \begin{align*}
		    & \E_A \left[ F(\rvw_{T+1}) \right] - F(\rvw^{\ast}) \\
            \lesssim & \frac{M^2 \gamma^2 }{T \mu^3}  + \frac{\gamma F(\rvw^{\ast}) \log(1/\delta)}{\mu n} + \frac{ M^2 \log^2(1/\delta)}{\mu n^2} + \left( \frac{\gamma}{T\mu} + \frac{1}{n^2} \right) \frac{ M^2 \gamma^2 \log^2 n \log^2(1/\delta)}{\mu^3} \\
            \lesssim &\frac{\gamma F(\rvw^{\ast}) \log(1/\delta)}{\mu n}  + \left( \frac{\gamma}{T\mu} + \frac{1}{n^2} \right) \frac{ M^2 \gamma^2 \log^2 n \log^2(1/\delta)}{\mu^3} . 
		\end{align*}

		Furthermore, choosing $T \asymp n^2$, we finally obtain that when $n \geq \frac{32\gamma^2 \log{(6/\delta)}}{\mu^2}$, with probability at least $1-\delta$
		\begin{align*}
			F(\rvw_{T+1}) -F(\rvw^{\ast}) \lesssim \frac{ \gamma F(\rvw^*) \log(1/\delta)}{ \mu n}  + \frac{ M^2 \gamma^3 \log^2 n \log^2(1/\delta)}{ \mu^4 n^2}.
		\end{align*}

		% Hence, when assumptions are satisfied, $n \geq \frac{c\beta^2(d+ \log(\frac{16 \log(2n R +2)}{\delta}))}{\mu^2}$ and $\eta_t = \frac{2}{\mu (t+t_0)}$ such that $t_0 \geq \max \left\{\frac{4\beta}{\mu},1 \right\}$ for all $t \in \mathbb{N}$, for all $\rvw \in \mathcal{W}$ and any $\delta >0 $, with probability at least $1- \delta$, we have $F(\rvw_{T+1}) -F(\rvw^{\ast})  = O \left(\frac{\log n \log^3(1/\delta)}{n^2} \right)$ when we choose $T \asymp n^2$ and assume $F(\rvw^{\ast}) =O(\frac{1}{n})$. The proof is complete.
	\end{proof}

\end{document}